\documentclass[10pt]{article}
\usepackage{setspace}
\usepackage{cprotect}
\usepackage{amsmath, amsfonts, amsthm, amssymb, bm, bbm}
\usepackage[noend]{algorithmic}
\usepackage[ruled,vlined]{algorithm2e}
\usepackage[bookmarks=true,
                    bookmarksopen=true,
                    bookmarksnumbered=true,
                    colorlinks=true,
                    linkcolor=blue,
                    citecolor=ao,
                    ]{hyperref}
\usepackage{fullpage}
\usepackage{makeidx}
\usepackage{enumerate}
\usepackage{graphicx,float,psfrag,epsfig}
\usepackage{epstopdf}
\usepackage{color}
\usepackage{enumitem}
\usepackage{subfig}
\usepackage{caption}
\usepackage{bigints}
\usepackage{mathtools}
\usepackage[mathscr]{euscript}
\usepackage{cite}

\DeclareSymbolFont{rsfs}{U}{rsfs}{m}{n}
\DeclareSymbolFontAlphabet{\mathscrsfs}{rsfs}

\numberwithin{equation}{section}

\newtheoremstyle{myexample} 
    {\topsep}                    
    {\topsep}                    
    {\rm }                   
    {}                           
    {\bf }                   
    {.}                          
    {.5em}                       
    {}  

\newtheoremstyle{myremark} 
    {\topsep}                    
    {\topsep}                    
    {\rm}                        
    {}                           
    {\bf}                        
    {.}                          
    {.5em}                       
    {}  

\newtheorem{claim}{Claim}[section]
\newtheorem{lemma}[claim]{Lemma}

\newtheorem{theorem}{Theorem}

\newtheorem{corollary}[claim]{Corollary}
\newtheorem{definition}[claim]{Definition}
\newtheorem*{theorem*}{Theorem}

\def\btheta{{\boldsymbol \theta}}
\def\bTheta{{\boldsymbol \Theta}}
\def\b0{{\boldsymbol 0}}

\def\E{{\mathbb{E}}}
\DeclarePairedDelimiter\floor{\lfloor}{\rfloor}
\DeclareMathOperator*{\argmin}{arg\,min}
\definecolor{ao}{rgb}{0.0, 0.5, 0.0}
\newcommand{\rev}[1]{\textcolor{black}{#1}}

\title{Mean-field Analysis of Piecewise Linear Solutions \\ for Wide ReLU Networks
}
\date{}
\author{%
  Alexander Shevchenko\thanks{IST Austria, \texttt{alex.shevchenko@ist.ac.at}}
  \and
   Vyacheslav Kungurtsev\thanks{Czech Technical University in Prague, \texttt{vyacheslav.kungurtsev@fel.cvut.cz}}
   \and
  Marco Mondelli\thanks{IST Austria, \texttt{marco.mondelli@ist.ac.at}}
 }

\begin{document}

\maketitle

\begin{abstract}
 Understanding the properties of neural networks trained via stochastic gradient descent (SGD) is at the heart of the theory of deep learning.
In this work, we take a mean-field view, and consider a two-layer ReLU network trained via \rev{noisy-}SGD for a univariate regularized regression problem.
Our main result is that SGD \rev{with vanishingly small noise injected in the gradients} is biased towards a simple solution: at convergence, the ReLU network implements a piecewise linear map of the inputs, and the number of ``knot'' points -- i.e., points where the tangent of the ReLU network estimator changes -- between two consecutive training inputs is at most three. In particular, as the number of neurons of the network grows, the SGD dynamics is captured by the solution of a gradient flow and, at convergence, the distribution of the weights approaches the unique minimizer of a related free energy, which has a Gibbs form. Our key technical contribution consists in the analysis of the estimator resulting from this minimizer: we show that its second derivative vanishes everywhere, except at some specific locations which represent  the ``knot'' points. We also provide empirical evidence that knots at locations distinct from the data points might occur, as predicted by our theory.
\end{abstract}

\section{Introduction}\label{sec:intro}

Neural networks are the key ingredient behind many recent advances in machine learning. They achieve state-of-the-art performance on various practical tasks, such as image classification \cite{he2016deep} and synthesis  \cite{brock2018large}, natural language processing \cite{vaswani2017attention} and reinforcement learning \cite{silver2016mastering}. However, these results would not be possible without computational advances which enabled the training of highly overparameterized models with billions of weights. Such complex networks are capable of extracting more sophisticated patterns from the data than their less parameter-heavy counterparts.  Nonetheless, in the view of classical learning theory, models with a large number of parameters are prone to over-fitting \cite{von2011statistical}. Contrary to~the conventional statistical wisdom, overparameterization turns out to be a rather desirable property for neural networks. \rev{This was even observed in a classical paper by \cite{bart1998}, which demonstrated that in the overparameterized setting, the size of the network is less important than the magnitude of the weights.} More recently, phenomena such as double descent \cite{belkin2018reconciling,spigler2018jamming,nakkiran2019deep} and benign overfitting \cite{bartlett2020benign,li2021towards,bartlett2021deep} suggest that understanding the generalization properties of overparameterized models lies beyond the scope of the usual control of capacity via the size of the parameter set \cite{neyshabur2014search}. 

One way to explain the generalization capability of large neural networks lies in characterizing the properties of solutions found by stochastic gradient descent (SGD). In other words, the question is whether the optimization procedure is implicitly selective, i.e., it finds the functionally simple solutions that exhibit superior generalization ability in comparison to other candidates with roughly the same value of the empirical risk. For instance, \cite{chizat2020implicit} consider shallow networks minimizing the logistic loss, and show that SGD converges to a max-margin classifier on a certain functional space endowed with the variation norm. 
In the machine learning literature, it has been suggested that large margin classifiers inherently exhibit better performance on unseen data \cite{bartlett2021deep,cortes1995support}.

Constraints on the functional class of network solutions can also be imposed explicitly, e.g., via $\ell_2$ regularization or by adding label noise. In some cases, it has been shown that the presence of parameter penalties or noise results in surprising implications. Depending on the regime, it biases optimization to find smooth solutions \cite{sahs2020shallow,jin2020implicit,savarese2019infinite} or piecewise linear functions \cite{blanc2020implicit,ergen2020convex}. The study by \cite{balestriero2018spline} proposes an alternative to conventional $\ell_p$ regularization inspired by max-affine spline operators. It enforces a neural network to learn orthogonal representations, which significantly improves the performance and does not require any modifications of the network architecture.

In this work, we develop a novel approach towards understanding the implicit bias of gradient descent methods applied to overparameterized neural networks. In particular, we focus on the following key questions:

\vspace{0.4em}
\begin{center}
\begin{minipage}{0.85\textwidth}
\textit{
\hspace{-0.5em}Once stochastic gradient descent has converged, how does the distribution of the weights of the neural network look like?  What functional properties of the resulting solution are induced by this stationary distribution? Can we quantitatively characterize the trade-off between the complexity of the solution and the size of the training data in the overparameterized regime?
}
\end{minipage}
\end{center}
\vspace{0.4em}

To answer these questions, we consider training a wide two-layer ReLU (rectified linear unit) network for univariate regression, and we focus on the mean-field regime \cite{mei2018mean,rotskoff2018neural,chizat2018global,sirigano2020mean}. In this regime, the idea is that, as the number of neurons of the network grows, the weights obtained via SGD are close to i.i.d. samples coming from the solution of a certain Wasserstein gradient flow. As a consequence, the output of the neural network approaches the following quantity:
$$
y^{\sigma^{*}}_{\rho}(x) = \int \sigma^{*}(x,\btheta) \rho(\btheta)\mathrm{d}\btheta.
$$
Here, $x$ is the input, $\sigma^{*}$ denotes the activation function, and $\rho$ is
the solution of the Wasserstein gradient flow minimizing the free energy
\begin{equation}\label{eq:fenergy}
\mathcal{F}(\rho) = \frac{1}{2}\hspace{0.2em}\E_{(x,y)\sim\mathbb{P}}\big\{(y - y^{\sigma^{*}}_{\rho}(x))^2\big\} + \frac{\lambda}{2} \int \|\btheta\|_2^2 \rho(\btheta)\mathrm{d}\btheta + \beta^{-1} \int \rho(\btheta)\log \rho(\btheta)\mathrm{d}\btheta.
\end{equation}
The first term corresponds to the expected squared loss (under the data distribution $\mathbb P$); the second term comes from the $\ell_2$ regularization; the differential entropy term is linked to the noise introduced into the SGD update,
and it penalizes non-uniform solutions. The coefficient $\beta$ is often referred to as \textit{inverse temperature}. In \cite{mei2018mean}, it is also shown that the minimizer of the free energy, call it $\rho^{*}$, has a Gibbs form for a sufficiently regular activation function $\sigma^{*}$. We review the connection between the dynamics of gradient descent and the solution $\rho$ of the Wasserstein gradient flow in Section \ref{section:background}.

\begin{figure}[t!]
    \subfloat[]{\includegraphics[width=.34\columnwidth]{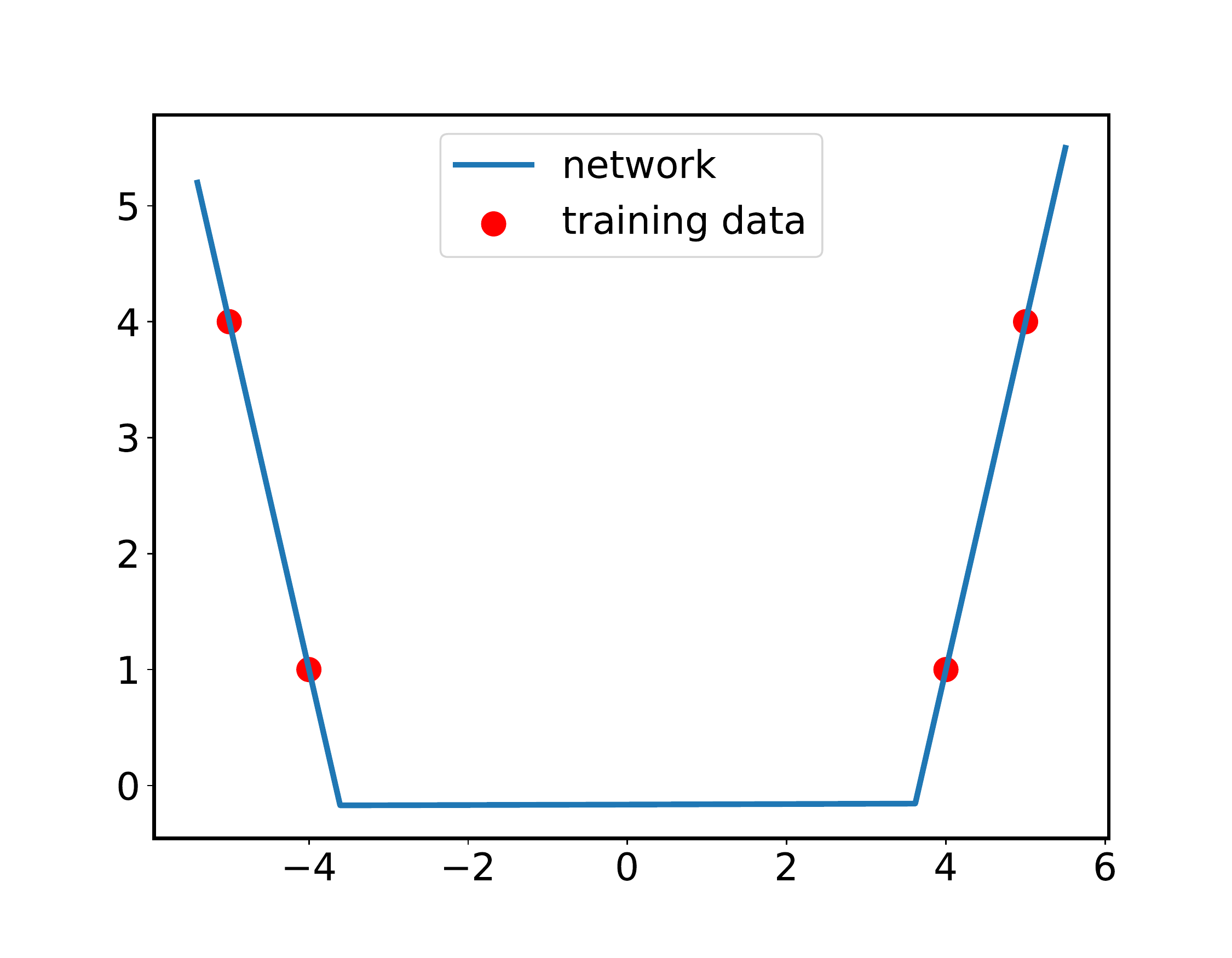}\label{f1:a}}\
    \subfloat[]{\includegraphics[width=.34\columnwidth]{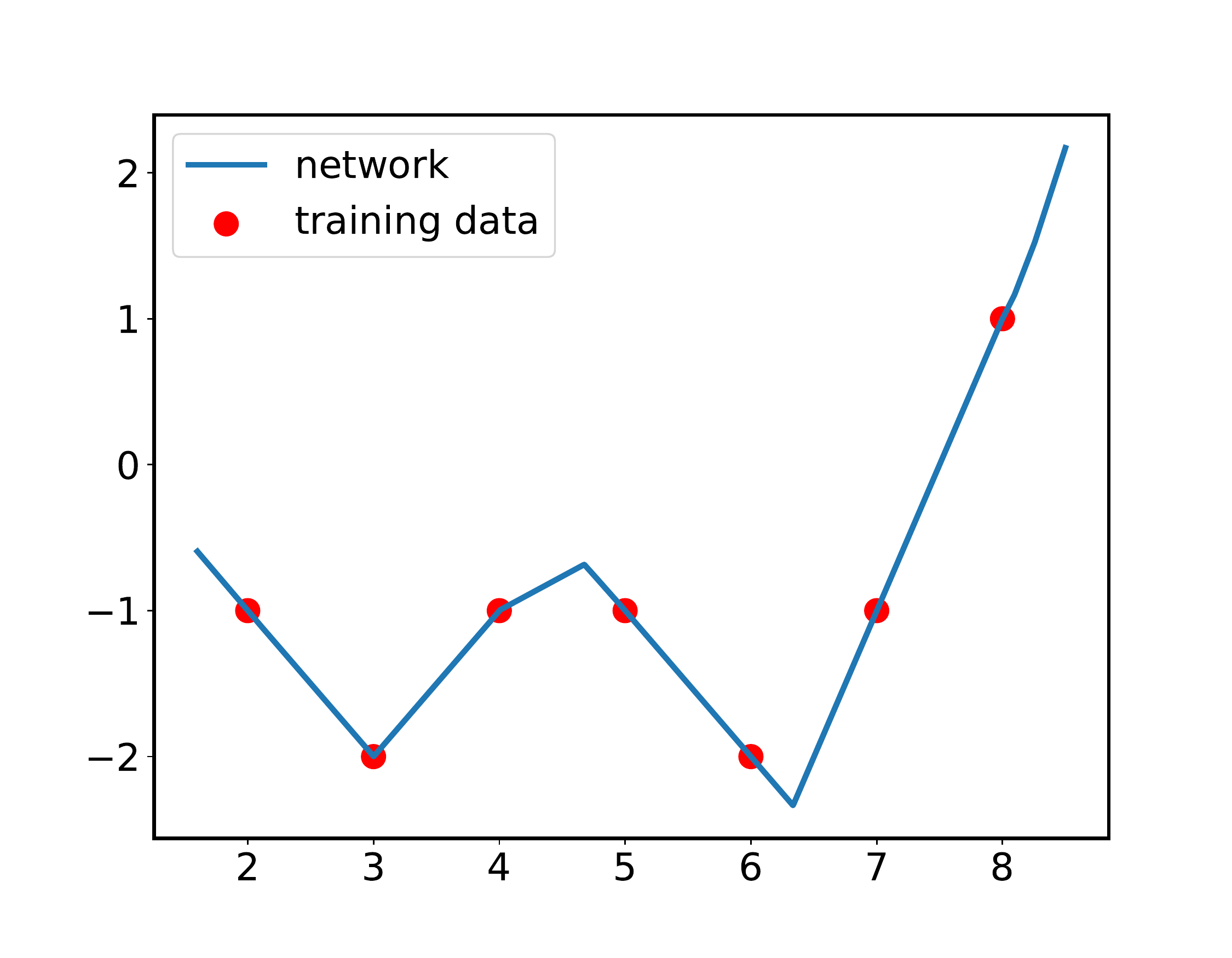}\label{f1:b}}
    \subfloat[]{\includegraphics[width=.34\columnwidth]{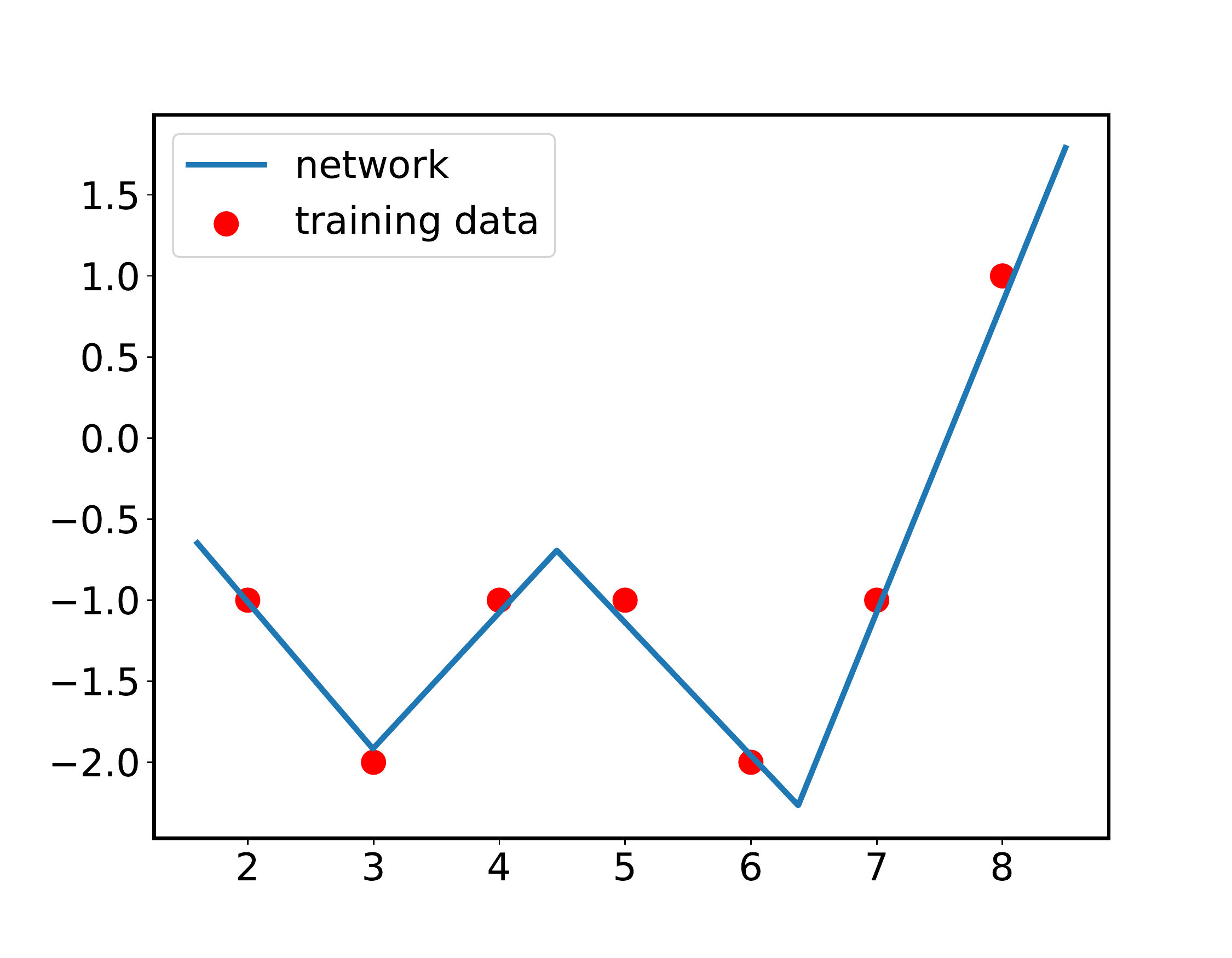}\label{f1:c}}
\caption{Example of functions learnt by a two-layer ReLU network with $N=1000$ neurons on different training data. Solutions (a)-(b) are obtained with \textit{no} regularization and label noise, i.e., $\lambda=0$ and $\beta=+\infty$, while in (c) we have a sufficiently large regularization coefficient, which does not allow the network to fit the training data perfectly. Note that the piecewise linear solution exhibits tangent changes also at points different from the training data. Furthermore, the number of ``knot'' points may differ from the minimum required to fit the data: for instance, in (a) the minimum amount of tangent changes is $1$, but the solution has two of them.}\label{fig:intro}
\vspace{-3mm}
\end{figure}

A number of works has exploited this connection to provide a rigorous justification to various phenomena attributed to neural networks. \cite{mei2018mean,mei2019mean} give global convergence guarantees for two-layer networks by studying the energy dissipation along the trajectory of the flow. The paper by \cite{chizat2018global} takes a different route and exploits a lifting property enabled by a certain type of initialization and regularization, and \cite{javanmard2020analysis} put forward an argument based on displacement convexity. \cite{nguyen2020rigorous} and \cite{araujo2019mean} tackle the multi-layer case, and, in particular, \cite{nguyen2020rigorous} establish convergence guarantees for a three-layer network. \cite{fang2021modeling} introduce a mean-field dynamics capturing the evolution of the features (instead of the network parameters) and show global convergence of ResNet type of architectures. 
\cite{shevchenko2020landscape} prove two properties commonly observed in practice (see e.g. \cite{garipov2018loss,draxler2018essentially,kuditipudi2019explaining}), namely dropout stability and mode connectivity, for multi-layer networks trained under the mean-field regime. 
{\cite{de2020quantitative} consider different scalings of the step size of SGD, and identify two regimes under which different mean-field limits are obtained.}
\cite{williams2019gradient} show that the gradient flow for unregularized objectives forces the neurons of a two-layer ReLU network to concentrate around a subset of the training data points.

In this paper, we take a mean-field view to show that SGD is biased towards functionally simple solutions, namely, piecewise linear functions. Our idea is to analyze the stationary distribution $\rho^{*}$ minimizing the free energy \eqref{eq:fenergy}. We show that, in the low temperature regime ($\beta\rightarrow\infty$), the estimator's curvature vanishes everywhere except for a certain \textit{cluster set}. More precisely, for each interval between two consecutive training inputs, aside for a set of small measure, the second derivative vanishes, i.e.,
$$
\frac{\partial^2}{\partial x^2}\hspace{0.3em} y^{\sigma^{*}}_{\rho^{*}}(x) \rightarrow 0,\quad \mbox{ as }\beta\to\infty.
$$
Furthermore, we provide a characterization of the cluster set and show that its measure vanishes while it concentrates around at most 3 points per interval. Ultimately, this analysis guarantees that, in the regime of decreasing temperature (corresponding to a small noise injected in the gradient updates), the solution found by SGD is piecewise linear. Our main contribution can be summarized in the following informal statement:

\begin{theorem*}[Informal] Under the low temperature regime, i.e., $\beta\rightarrow\infty$, the estimator obtained by training a two-layer ReLU network via noisy-SGD converges to a piecewise linear solution. Furthermore, the number of ``knot'' points -- i.e., points at which distinct linear pieces connect -- between two consecutive training inputs is at most 3.
\end{theorem*}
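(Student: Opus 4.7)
The plan is to analyze the zero-temperature limit of the minimizer $\rho^{*}$ of the free energy~\eqref{eq:fenergy}. Using the Gibbs characterization $\rho^{*}(\btheta)\propto\exp(-\beta\,\Psi_{\rho^{*}}(\btheta))$ recalled in Section~\ref{section:background}, the effective potential reads
\[
\Psi_{\rho^{*}}(\btheta)\;=\;\E_{(x,y)\sim\mathbb{P}}\!\big[(y^{\sigma^{*}}_{\rho^{*}}(x)-y)\,\sigma^{*}(x,\btheta)\big]+\tfrac{\lambda}{2}\|\btheta\|_{2}^{2}.
\]
Writing each ReLU neuron as $\sigma^{*}(x;a,b,c)=c\,(ax+b)_{+}$ with knot location $t=-b/a$, the identity $\partial_{x}^{2}(ax+b)_{+}=|a|\,\delta(x-t)$ yields
\[
\partial_{x}^{2}\,y^{\sigma^{*}}_{\rho^{*}}(x)\;=\;\int c\,|a|\,\delta(x-t)\,\rho^{*}(\btheta)\,\mathrm{d}\btheta,
\]
so the theorem reduces to showing that, as $\beta\to\infty$, this push-forward of $\rho^{*}$ onto knot locations collapses to a discrete measure with at most three atoms in every interval $(x_{i},x_{i+1})$ between consecutive training inputs.

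The next step is to exploit the $1$-homogeneity of the ReLU to quotient out the two scaling degrees of freedom $(a,c)$ at fixed knot $t$. For each orientation (right-facing $a>0$ and left-facing $a<0$), the restriction of $\Psi_{\rho^{*}}$ to the two scaling variables is a quadratic form whose infimum over $\mathbb{R}^{2}$ is either $0$ or $-\infty$, according to whether the associated $2\times 2$ matrix is positive semidefinite or not. A direct computation shows that positive semidefiniteness is equivalent to
\[
G^{\pm}(t)\;:=\;\lambda^{2}(1+t^{2})-I^{\pm}(t)^{2}\;\geq\;0,\qquad I^{\pm}(t)\;:=\;\E_{(x,y)\sim\mathbb{P}}\!\big[(y^{\sigma^{*}}_{\rho^{*}}(x)-y)\,(\pm(x-t))_{+}\big].
\]
Self-consistency of $\rho^{*}$ (needed for the Gibbs integral to be finite) forces these inequalities globally in $t$, with equality exactly at the knot locations that carry mass in the limit. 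A Laplace-type estimate then shows that the push-forward of $\rho^{*}$ onto $t$ concentrates on $\{t:G^{+}(t)=0\}\cup\{t:G^{-}(t)=0\}$ as $\beta\to\infty$.

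The final and critical step is to count the zeros of $G^{\pm}$ on each inter-datum interval $(x_{i},x_{i+1})$. Because $\mathbb{P}$ is supported on the training points, the maps $t\mapsto I^{\pm}(t)$ are affine on $(x_{i},x_{i+1})$, so $G^{\pm}$ are univariate quadratics there; a non-negative quadratic touches zero at most at its vertex, which already bounds the interior contribution of each $G^{\pm}$ by one. To obtain the sharp bound of three I would combine the two orientations via the algebraic identity $(x-t)_{+}-(t-x)_{+}=x-t$, which couples $I^{+}$ and $I^{-}$ up to an affine function of $t$ and rules out configurations in which a fourth independent saturation could arise, and I would then account for the boundary points $t=x_{i},x_{i+1}$ (which may themselves be active). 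The main obstacle I anticipate is precisely this combinatorial counting: closing it at value three requires careful bookkeeping of signs and orientations and the exclusion of degenerate coincident zeros. A secondary technicality is closing the self-consistent fixed point, since the residual defining $I^{\pm}$ itself depends on $\rho^{*}$; this should be handled by a continuity/bootstrap argument that propagates the Laplace concentration through the definition of the residual, together with separate treatment of ``bias'' neurons ($a=0$) which contribute a constant and therefore no curvature.
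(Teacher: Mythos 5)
Your proposal lands on essentially the same mechanism as the paper, just framed via the un-truncated ReLU and the knot parameterization $t=-b/a$. In fact your polynomials $G^{\pm}(t)$ are (up to the factor $\lambda^{2}$ and a constant $M$) exactly the paper's $f^{j}$ and $f_{j}$: with the paper's residuals $R^{m}_{i}$, one has $I^{+}(t)=\lambda(B^{j}-A^{j}t)$ on $I_{j}$, so $G^{+}=\lambda^{2}f^{j}$, and similarly $G^{-}=\lambda^{2}f_{j}$. Your ``finiteness of the Gibbs integral forces the quadratic form to be PSD'' step is the heuristic $m=\infty$ version of what the paper proves quantitatively: Lemma~\ref{worstcasebound} lower-bounds $Z_{m}$ by a Gaussian-type integral over the cluster set (where the covariance matrix is \emph{not} PSD), and combining this with the uniform upper bound of Lemma~\ref{unifboundZm} shows that the cluster set's measure is $O(e^{C\beta}/m^{2})$. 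The paper cannot simply assert $G^{\pm}\ge0$ because the Gibbs minimizer is only guaranteed to exist for the truncated/smoothed potential; that is the reason for the whole $(\tau,m)$-apparatus, which your proposal sidesteps.

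There is, however, a concrete error in your counting step that matters for reaching the bound of three. You write that ``a non-negative quadratic touches zero at most at its vertex.'' This is true only for a convex quadratic on all of $\mathbb{R}$. Here $G^{\pm}$ are merely \emph{piecewise} quadratic, so on a single inter-datum interval $I_{j}$ the piece may have \emph{negative} leading coefficient; a concave quadratic that is non-negative on $I_{j}$ can vanish at \emph{both} endpoints while staying positive inside. This concave case is precisely what produces the two endpoint knots in the paper's Definition~\ref{def:apwlsolution}, and it is the entire reason the answer is three (two endpoint knots from a negative-leading-coefficient piece, plus one interior knot from a positive-leading-coefficient piece of the opposite orientation) rather than two. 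Your ``algebraic coupling'' remark aimed at ruling out a fourth saturation is on the right track but unfinished; in the paper this is handled simply by observing that $\Omega^{j}\cup\Omega_{j}$, being the non-positive set of two quadratics on $I_{j}$, is always a union of at most three intervals, with the sign pattern of the leading coefficients dictating which of the three allowed configurations occurs. You would also need an argument (the paper uses the equi-Lipschitz bound from the uniformly bounded second moment, Lemma~\ref{unifboundM_rhomtau}) to exclude pathological oscillations inside the shrinking cluster set, since curvature control alone says nothing about the predictor there.
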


Let us remark on a few important points. In the overparameterized regime, the number of neurons $N$ is significantly larger than the number of training samples $M$, i.e., $N \gg M$. The output of the two-layer ReLU network is a linear combination of $N$ ReLU units, hence the function implemented by the network is clearly piecewise linear with $\mathcal{O}(N)$ knot points. Here, we show that the number of knot points is actually $\mathcal{O}(M) \ll \mathcal{O}(N)$.
Our analysis applies for both constant ($\lambda\to\bar{\lambda}>0$) and vanishing  ($\lambda\rightarrow 0$) regularization, and it does not require a specific form for the initialization of the parameters of the networks (as long as some mild technical conditions are satisfied).

In a nutshell, we establish a novel technique that accurately characterizes the solution to which gradient descent methods converge, when training overparameterized two-layer ReLU networks.
Our analysis unveils a behaviour which is qualitatively different from that described in recent works \cite{williams2019gradient, blanc2020implicit, ergen2020convex} (see also a detailed comparison in Section \ref{sec:discussion}): knot points are not necessarily allocated at the training points, or in a way that results in a function with the minimum number of tangent changes required to fit the data. We provide also numerical simulations to validate our findings (see Section \ref{section:numsim} and Figure \ref{fig:intro} above).
We suggest that this novel behaviour is likely due to the difference in settings and the additional $\ell_2$ regularization (including of the bias parameters). 

\textbf{Organization of the paper.} The rest of the paper is organized as follows. In Section \ref{section:related}, we review the related work and a more detailed comparison is deferred to Section \ref{sec:discussion}. In Section \ref{sec:prel}, we provide some preliminaries, including a background on the mean-field analysis in Section \ref{section:background}. Our main results are stated in Section \ref{section:mainres} and proved in Section \ref{section:proofsketch}. In Section \ref{section:knotsarethere}, we provide an example of a dataset for which the estimator found by SGD has a knot at a location different from the training inputs. We validate our findings with numerical simulations for different regression tasks in Section \ref{section:numsim}. We conclude and discuss some future directions in Section \ref{conclusion}.
Some of the technical lemmas and the corresponding proofs are deferred to Appendix \ref{techlemmas}.

\textbf{Notation.} We use bold symbols for vectors $\boldsymbol{a}, \boldsymbol{b}$, and plain symbols for real numbers $a,b$. We use capitalized bold symbols to denote matrices, e.g., $\bTheta$.  We denote the $\ell_2$ norm of vectors $\boldsymbol{a}, \boldsymbol{b}$ by $\|\boldsymbol{a}\|_2, \|\boldsymbol{b}\|_2$. Given an integer $N$, we denote $[N] = \{1,\dots,N\}$. Given a discrete set $\mathcal{A}$, $|\mathcal{A}|$ is its cardinality. Similarly, given a Lebesgue measurable set $\mathcal{B} \subset \mathbb{R}^d$ its Lebesgue measure is given by $|\mathcal{B}|$. Given a sequence of distributions $\{\rho_n\}_{n\geq 0}$, we write $\rho_n \rightharpoonup \rho$ to denote the weak $L_1$ convergence of the corresponding measures. For a sequence of functions $\{f_n\}_{n\geq 0}$ we denote by $f_n \rightarrow f$ the \textit{pointwise} convergence to a function $f$. Given a real number $x\in\mathbb{R}$, the closest integer that is not greater than $x$ is defined by $\floor*{x}$.

\section{Related Work}\label{section:related}

The line of works \cite{williams2019gradient,jin2020implicit} shows that, in the lazy training regime \cite{chizat2018lazy,jacot2018neural} and for a uniform initialization, SGD converges to a cubic spline interpolating the data. Furthermore, for multivariate regression \rev{in the lazy training regime, \cite{jin2020implicit} proved} that the optimization procedure is biased towards solutions minimizing the 2-norm of the Radon transform of the fractional Laplacian. 
Similar results (although without the connection to the training dynamics) are obtained in \cite{savarese2019infinite,ongie2019function}, which analyze the solutions with zero loss and minimum norm of the parameters. \cite{ergen2020convex} develop a convex analytic framework to explain the bias towards simple solutions. In particular, an explicit characterization of the minimizer is provided, which implies that an optimal set of parameters yields linear spline interpolation for regression problems involving one dimensional or rank-one data. {\cite{cao2021towards} show that, for overparameterized models, the lower degree spherical harmonics are easier to learn. This observation comes from the fact that, in the lazy training regime, the convergence occurs faster along the directions given by the top eigenfunctions of the neural tangent kernel.}
Classification with linear networks on separable data is considered in \cite{soudry2018implicit}, where it is shown that gradient descent converges to the max-margin solution. This max-margin behavior is demonstrated in \cite{chizat2020implicit} for non-linear wide two-layer networks using a mean-field analysis. In particular, in the mean-field regime, optimizing the logistic loss is equivalent to finding the max-margin classifier in a certain functional space. The paper by \cite{zhang2020type} focuses on the lazy training regime, and it shows that the optimization procedure finds a solution that fits the data perfectly and is closest to the starting point of the dynamics in terms of Euclidean distance in the parameter space. { \cite{wu2021direction} characterize the directional bias of GD and SGD in the case of moderate (but annealing) learning rate.}

The behavior of SGD with label noise near the zero-loss manifold is studied in \cite{blanc2020implicit}. Here, it is shown that the training algorithm implicitly optimizes an auxiliary objective, namely, the sum of squared norms of the gradients evaluated at each training sample. This allows the authors of \cite{blanc2020implicit} to show that SGD with label noise for a two-layer ReLU network with skip-connections is biased towards a piecewise linear solution. In particular, this piecewise linear solution has the minimum number of tangent changes required to fit the data. \cite{williams2019gradient} consider the Wasserstein gradient flow on a certain space of reduced parameters (in polar coordinates), and show that 
the points where the solution changes tangent are concentrated around a subset of training examples. A trade-off between the scale of the initialization and the training regime is also provided in \cite{williams2019gradient, sahs2020shallow}. \cite{maennel2018gradient} prove that the gradient flow enforces the weight vectors to concentrate at a small number of directions determined by the
input data. Through the lens of spline theory, \cite{parhi2020role} explain that a number of best practices used in deep learning, such as weight decay and path-norm, are connected to the ReLU activation and its smooth counterparts. \cite{neyshabur2018towards} suggest a novel complexity measure for neural networks that provides a tighter generalization for the case of ReLU activation.

\section{Preliminaries}\label{sec:prel}

\subsection{Mean-field Background}\label{section:background}

We consider a two-layer neural network with $N$ neurons and one-dimensional input $x\in\mathbb{R}$:
\begin{equation}\label{eq:NN}
\hat{y}_N(x,\bTheta) = \frac{1}{N}\sum_{i=1}^N \sigma^{*}(x, \btheta_i), 
\end{equation}
where $\hat{y}_N(x,\bTheta)\in\mathbb{R}$ is the output of the network, $\bTheta = \left(\btheta_1,\dots,\btheta_N\right)\in\mathbb{R}^{D\times N}$, with $\btheta_i \in \mathbb{R}^{D}$, are the parameters of the network, $D$ is the dimension of parameters of each neuron, and $\sigma^{*}: \mathbb{R}\times\mathbb{R}^{D}\rightarrow \mathbb{R}$ represents the activation function. A typical example is $\sigma^{*}(x,\btheta) = a(wx + b)_{+}$, where $\btheta=(a,w,b)\in \mathbb{R}^3$ and $(\cdot)_{+}:\mathbb{R}\rightarrow\mathbb{R}$ is a rectified linear unit activation.

We consider a regression problem for a dataset $\{(x_j, y_j)\}_{j=1}^M$ containing $M$ points, and we aim to minimize the following expected squared loss with $\ell_2$ regularization:
\begin{equation}\label{eq:loss}
\E \Big\{(\hat{y}_N(x,\bTheta) - y)^2\Big\} +  \frac{\lambda}{N} \sum\limits_{i=1}^N \|\btheta_i\|^2_{2} = \frac{1}{M}\sum_{j=1}^M (\hat{y}_N(x_j,\bTheta) - y_j)^2 +  \frac{\lambda}{N} \sum\limits_{i=1}^N \|\btheta_i\|^2_{2}.
\end{equation}
On the LHS of \eqref{eq:loss}, the expectation is taken over $(x,y) \sim \mathbb P$, with $\mathbb P = M^{-1}\sum_{j=1}^M \delta_{(x_j,y_j)}$ and $x_j < x_{j+1}\ \forall{j\in[M-1]}$. Here, $\delta_{(a,b)}$ stands for a delta distribution centered at $(a,b)\in\mathbb{R}^2$.

We are given samples $(\tilde{x}_k,\tilde{y}_k)_{k\geq 0}\sim_{\rm i.i.d.} \mathbb P$, and we learn the network parameters $\bTheta$ via stochastic gradient descent (SGD) with step size $s_k$ and additive Gaussian noise scaled by a factor $\beta^{-1} > 0$ (often referred to as a temperature):
\begin{equation}\label{eq:SGD}
{\boldsymbol{\theta}}_{i}^{k+1}=(1-2\lambda s_k){\boldsymbol{\theta}}_{i}^{k}+2 s_k \big(\tilde{y}_{k}-\hat{y}_N(\tilde{x}_k,\bTheta^k)\big) \nabla_{{\boldsymbol{\theta}}_{i}} \big(\sigma^{*}(\tilde{x}_k,\btheta^k_i)\big) + \sqrt{2s_k/\beta} \boldsymbol{g}_i^k,
\end{equation}
where $\bTheta^k$ stands for the network parameters after $k$ steps of the optimization procedure, $\{\boldsymbol{g}_i^k\}_{i\in[N], k\geq 0} \sim_{\textrm{i.i.d.}} \mathcal{N}(0, \boldsymbol{I}_{D})$, and the term $-2\lambda s_k \boldsymbol{\theta}_{i}^{k}$ corresponds to $\ell_2$ regularization. The parameters are initialized independently according to a given distribution $\rho_0$, i.e., $\{\btheta_i^0\}_{i\in[N]} \sim_{\textrm{i.i.d.}} \rho_0$.

For some $\varepsilon > 0$, we assume that the step size of the noisy SGD update \eqref{eq:SGD} is given by $s_k=\varepsilon\xi(\varepsilon k)$, where $\xi: \mathbb{R}_{\geq 0}\rightarrow\mathbb{R}_{\geq 0}$ is a sufficiently regular function. Let $\hat{\rho}^{N}_k := \frac{1}{N} \sum_{i=1}^N \delta_{\btheta_i^k}$ denote the empirical distribution of weights after $k$ steps of noisy SGD. Then, in \cite{mei2018mean}, it is proved that the evolution of $\hat{\rho}^{N}_k$ is well approximated by a certain distributional dynamics. In formulas,
$$
\hat{\rho}^{(N)}_{\floor*{t/\varepsilon}} \rightharpoonup \rho_t,
$$    
almost surely along any sequence $(N\rightarrow\infty, \varepsilon_N\rightarrow 0)$ such that $N/\log\left(N/\varepsilon_N\right)\rightarrow\infty$ and $\varepsilon_N\log\left(N/\varepsilon_N\right)\rightarrow 0$. (Here, we have put the subscript $N$ in $\varepsilon_N$ to emphasize that the choice of the learning rate depends on $N$.) The distribution $\rho_t$ is the solution of the following partial differential equation (PDE):
\begin{align}\label{eq:PDE}
    \partial_t \rho_t &= 2 \xi(t) \nabla_{\btheta} \cdot (\rho_t \nabla_{\btheta}\Psi_{\lambda}(\btheta, \rho_t)) + 2\xi(t)\beta^{-1}\Delta_{\btheta} \rho_t,\nonumber \\
    \Psi_{\lambda}(\btheta, \rho):&= \frac{1}{M} \sum_{i=1}^M \big(y^{\sigma^{*}}_{\rho}(x_i)-y_i\big) \cdot \sigma^{*}(x_i,\btheta) + \frac{\lambda}{2} \|\btheta\|_{2}^2.
\end{align}
Here, $\nabla_{\theta} \cdot \boldsymbol{v}(\btheta)$ stands for the divergence of the vector field $\boldsymbol{v}(\btheta)$, and $\Delta_{\btheta}f(\btheta)=\sum_{j=1}^{D} \partial^2_{\theta_j}f(\btheta)$ is the Laplacian of the function $f: \mathbb{R}^D\rightarrow\mathbb{R}^D$. To describe the next result, we first introduce a few related quantities. Define the infinite-width network with activation $\sigma^{*}: \mathbb{R}\times\mathbb{R}^D \rightarrow \mathbb{R}$ and weight distribution $\rho: \mathbb{R}^{D} \rightarrow [0,+\infty)$ as follows:
$$
y^{\sigma^{*}}_{\rho}(x) = \int \sigma^{*}(x,\btheta) \rho(\btheta)\mathrm{d}\btheta,
$$
where the integral is taken over the support of $\rho$. For the forthcoming analysis, a certain regularity is required for the weight distribution $\rho$. In particular, the weight distribution is restricted to a set of admissible densities
$$
{\mathcal{K} := \left\{\rho: \mathbb{R}^D \rightarrow [0,+\infty) \text{ measurable: } \int \rho(\btheta)\mathrm{d}\btheta = 1, \ M(\rho) < \infty\right\},}
$$
where $M(\rho) = \int \|\btheta\|^2_2\rho(\btheta)\mathrm{d}\btheta$. The expected risk attained on the distribution $\rho$ by the infinite-width network with activation $\sigma^{*}$ is defined by
$$
R^{\sigma^{*}}(\rho) := \frac{1}{M}\sum_{i=1}^M \big(y^{\sigma^{*}}_{\rho}(x_i) - y_i\big)^2.
$$
The quantity
$$
H(\rho) := -\int \rho(\btheta)\log \rho(\btheta)\mathrm{d}{\btheta}
$$
stands for the differential entropy of $\rho$, which is equal to $-\infty$ if the distribution $\rho$ is singular. In this view, the distributional dynamics (\ref{eq:PDE}) is the Wasserstein gradient flow that minimizes the free energy
\begin{align}\label{eq:free_energy}
    \mathcal{F}^{\sigma^{*}}(\rho) &= \frac{1}{2}R^{\sigma^{*}}(\rho) + \frac{\lambda}{2}M(\rho) - \beta^{-1}H(\rho),
\end{align}
over the set of admissible densities $\mathcal{K}$. Furthermore, this free energy has a unique minimizer and the solution of (\ref{eq:PDE}) converges to it as $t\rightarrow\infty$:
\begin{align*}
    \rho_t \rightharpoonup \rho^{*}_{\sigma^{*}} \in \argmin_{\rho'\in\mathcal{K}}\mathcal{F}^{\sigma^{*}}(\rho'), \quad \mbox{ as }t\to\infty.
\end{align*}
The unique minimizer $\rho^{*}_{\sigma^{*}}$ is absolutely continuous, and it has the Gibbs form
\begin{equation}\label{eq:gibbs}
    \rho^{*}_{\sigma^{*}}(\btheta) = \frac{\exp\big\{-\beta \Psi_{\lambda}(\btheta, \rho^{*}_{\sigma^{*}})\big\}}{Z_{\sigma^{*}}(\beta,\lambda)} ,
\end{equation}
where $Z_{\sigma^{*}}(\beta,\lambda)$ is the normalization constant, also referred to as partition function.

\subsection{Approximation of the ReLU Activation}\label{section:truncationact}

Let us elaborate on the properties which $\sigma^{*}$ should satisfy so that the results of Section \ref{section:background} hold. First, the distributional dynamic (\ref{eq:PDE}) is known to be well-defined for a smooth and bounded potential $\Psi_{\lambda}$. In particular, it suffices to choose a bounded, Lipschitz $\sigma^{*}$ with Lipschitz gradient, see assumptions A2-A3 in \cite{mei2018mean}. Furthermore, the minimizer of the free energy (\ref{eq:free_energy}) exists and has a Gibbs form even for non-smooth potentials and, in particular, it suffices that $\sigma^{*}$ is bounded and Lipschitz (this allows the first derivative to be discontinuous), see Lemmas 10.2-10.4 in \cite{mei2018mean}.

In the case of a ReLU activation, the corresponding $\sigma^{*}$ has the following form
$$
\sigma^{*}(x,\btheta) = a (wx + b)_{+} = a \max\{0, wx+b\},\quad \btheta = (a,w,b)\in\mathbb{R}^3,
$$
which does not satisfy some of the aforementioned conditions. The first salient problem is the lack of continuity of the derivative at zero. This issue can be dealt with by considering a soft-plus activation with scale $\tau$:
$$
(x)_{\tau} := \frac{\log(1 + e^{\tau x})}{\tau}.
$$
Notice that, as $\tau$ grows large, we have that $(\cdot)_{\tau} \rightarrow (\cdot)_{+}$. Another issue is that the function $\sigma^{*}(x,\btheta)$ is not Lipschitz in the parameters $\btheta$, and it is unbounded. This problem can be solved by an appropriate truncation applied to the parameter $a$ of the activation. The truncation should be Lipschitz and smooth for the dynamics to be well-defined.

In this view, we now provide the details of the approximation of the ReLU activation. For a parameter $v \in \mathbb{R}$, we denote by $v^m$ its $m$-truncation defined as
$$
v^m := \mathbbm{1}_{\{|v| > m\}} \cdot m\cdot\mathrm{sign}(v) + \mathbbm{1}_{\{|v| \leq m\}} \cdot v.
$$
Notice that the function $f(v)=v^m$ is Lipschitz continuous and bounded. For a parameter $v \in \mathbb{R}$, we denote by $v^{\tau,m}$ its $\tau$-smooth $m$-truncation defined as follows: $v^{\tau,m}$ converges pointwise to $v^m$ as $\tau \rightarrow \infty$, $v^{\tau,m}=v$ inside the ball $\{v\in\mathbb{R}: |v|<m-\frac{1}{\tau}\}$, and the map $v \mapsto v^{\tau,m}$ is odd and belongs to $C^4(\mathbb{R})$. For a visualization of $v^m$ and $v^{\tau, m}$, see Figure \ref{subfig:vm}. 

We define the smooth $m$-truncation $(\cdot)_{+}^{m}$ of the ReLU activation as
$$
(x)_{+}^{m} := \mathbbm{1}_{\{x\leq m^2\}} (x)_{+} + \mathbbm{1}_{\{x > m^2\}}\phi_{m}(x),
$$
where $\phi_m$ is chosen so that the following holds: $(x)_{+}^{m}\in C^4(\mathbb{R})$, $(x)^m_{+} \leq (x)_{+}$ for all $x \in \mathbb{R}$, and $|\phi''_m(x)| \leq \frac{1}{m^2}$ for $x > m$. Note that these conditions imply that $\phi_m(m^2) = m^2$ and $\phi_m'(m^2) = 1$. Furthermore, in order to enforce the bound on $\phi''_m$, we pick $\phi_m$ so that $\lim_{x\rightarrow+\infty}\phi_m(x) =2m^2$, and $\lim_{x\rightarrow+\infty}\phi_m'(x) = \lim_{x\rightarrow+\infty}\phi_m''(x) = 0$. For a visualization of $(\cdot)_{+}^{m}$, see Figure \ref{subfig:plus}.

Finally, we define the smooth $m$-truncation $(\cdot)_{\tau}^{m}$ of the softplus activation as
\begin{equation}\label{act_taum}
    (x)_{\tau}^{m} := \mathbbm{1}_{\{x\leq x_m\}} (x)_{\tau} + \mathbbm{1}_{\{x > x_m\}}\phi_{\tau,m}(x),
\end{equation}
where $x_m \in \mathbb{R}$ is such that $(x_m)_{\tau} = m^2$. As in the truncation of ReLU, we choose $\phi_{\tau,m}$ so that $(x)_{\tau}^{m}\in C^{4}(\mathbb{R})$ and $
|\phi''_{\tau,m}(x)| \leq \frac{1}{m^2}$ for $x > x_m$. Furthermore, we require that $(\cdot)_{\tau}^{m}$ converges pointwise to $(\cdot)_{+}^{m}$ as $\tau\to\infty$ (which we can guarantee since $(\cdot)_{\tau} \rightarrow (\cdot)_{+}$, as $\tau\rightarrow\infty$). To enforce these conditions, we pick $\phi_{\tau,m}$ so that $\phi_{\tau,m}(x_m) = m^2$, $\phi_{\tau,m}'(x_m) = (x)'_{\tau}\big|_{x=x_m}$, $\lim_{x\rightarrow+\infty}\phi_{\tau,m}(x) =2m^2$, and $\lim_{x\rightarrow+\infty}\phi_{\tau,m}'(x) = \lim_{x\rightarrow+\infty}\phi_{\tau,m}''(x) = 0$. For a visualization of $(\cdot)_{\tau}^{m}$, see Figure \ref{subfig:mplus}. 

Notice that, for $\tau\geq 1$, the soft-plus activation can be sandwiched as follows:
$$
(x)_{+} - \frac{1}{\tau} \leq (x)_{\tau} \leq (x)_{+} + \frac{1}{\tau}.
$$
In order to establish the continuity of a certain limit and smoothness properties, we also pick $\phi_{\tau,m}$ such that the smooth $m$-truncation of soft-plus activation satisfies a similar bound:
\begin{equation}\label{eq:sandwich}
    (x)_{+} - \frac{1}{\tau} \leq (x)_{\tau}^m \leq (x)_{+} + \frac{1}{\tau}.
\end{equation}

\begin{figure}[t!]
    \centering
    \subfloat[$v^m$ and $v^{\tau,m}$\label{subfig:vm}]{\includegraphics[width=.33\columnwidth]{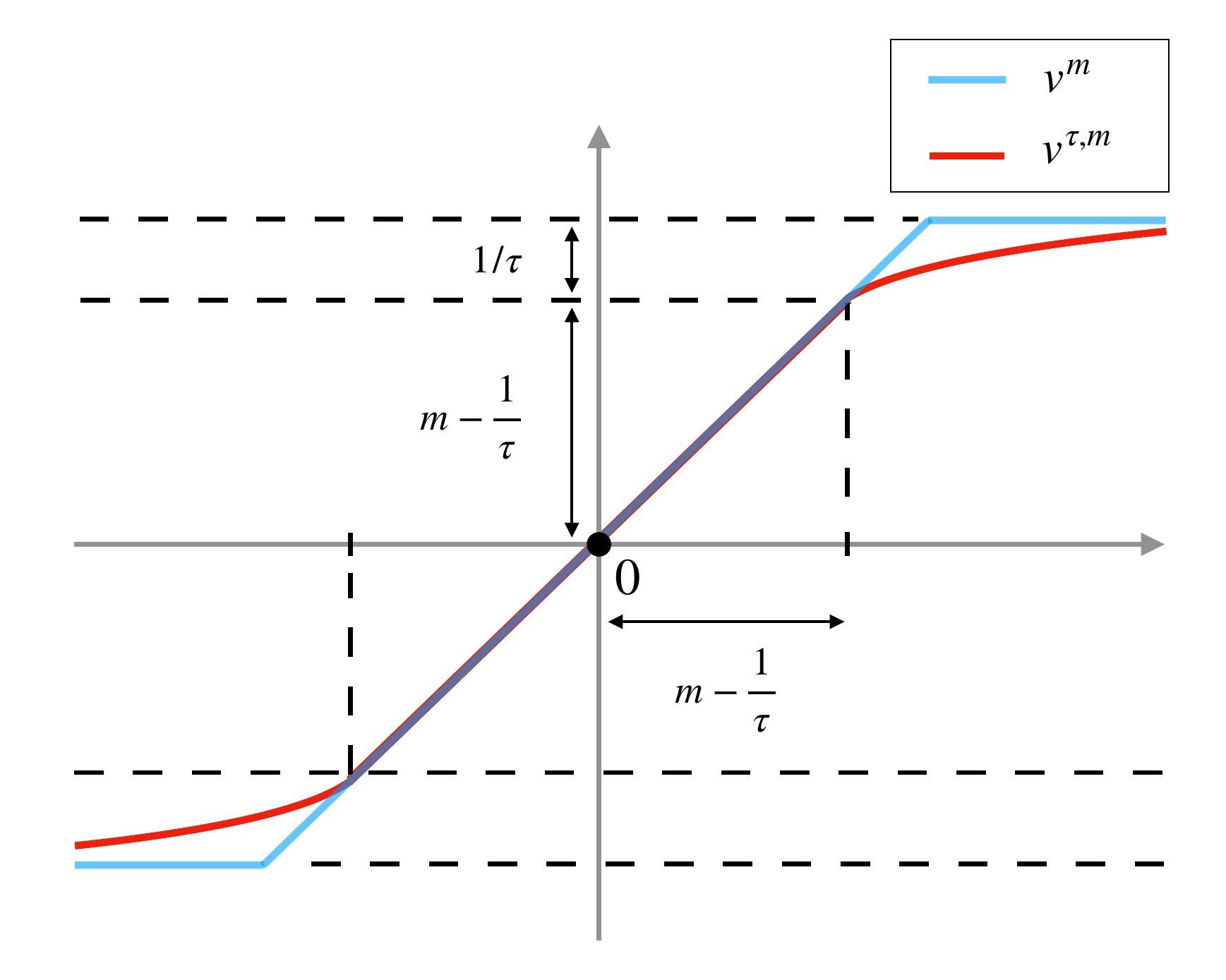}}
    \subfloat[$(\cdot)_+$ and $(\cdot)^{m}_{+}$\label{subfig:plus}]{\includegraphics[width=.33\columnwidth]{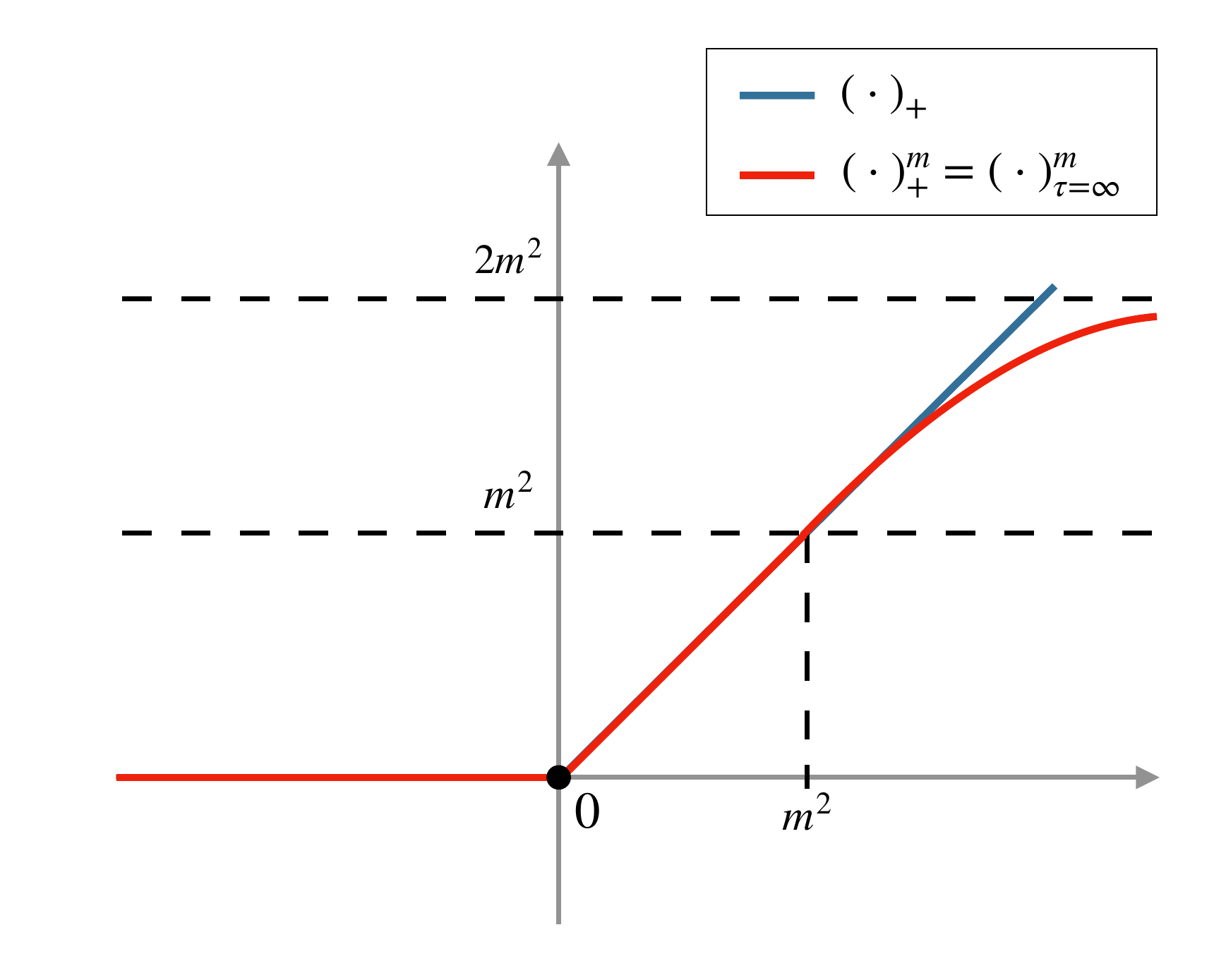}}
        \subfloat[$(\cdot)_+$ and $(\cdot)^{m}_{\tau}$\label{subfig:mplus}]{\includegraphics[width=.33\columnwidth]{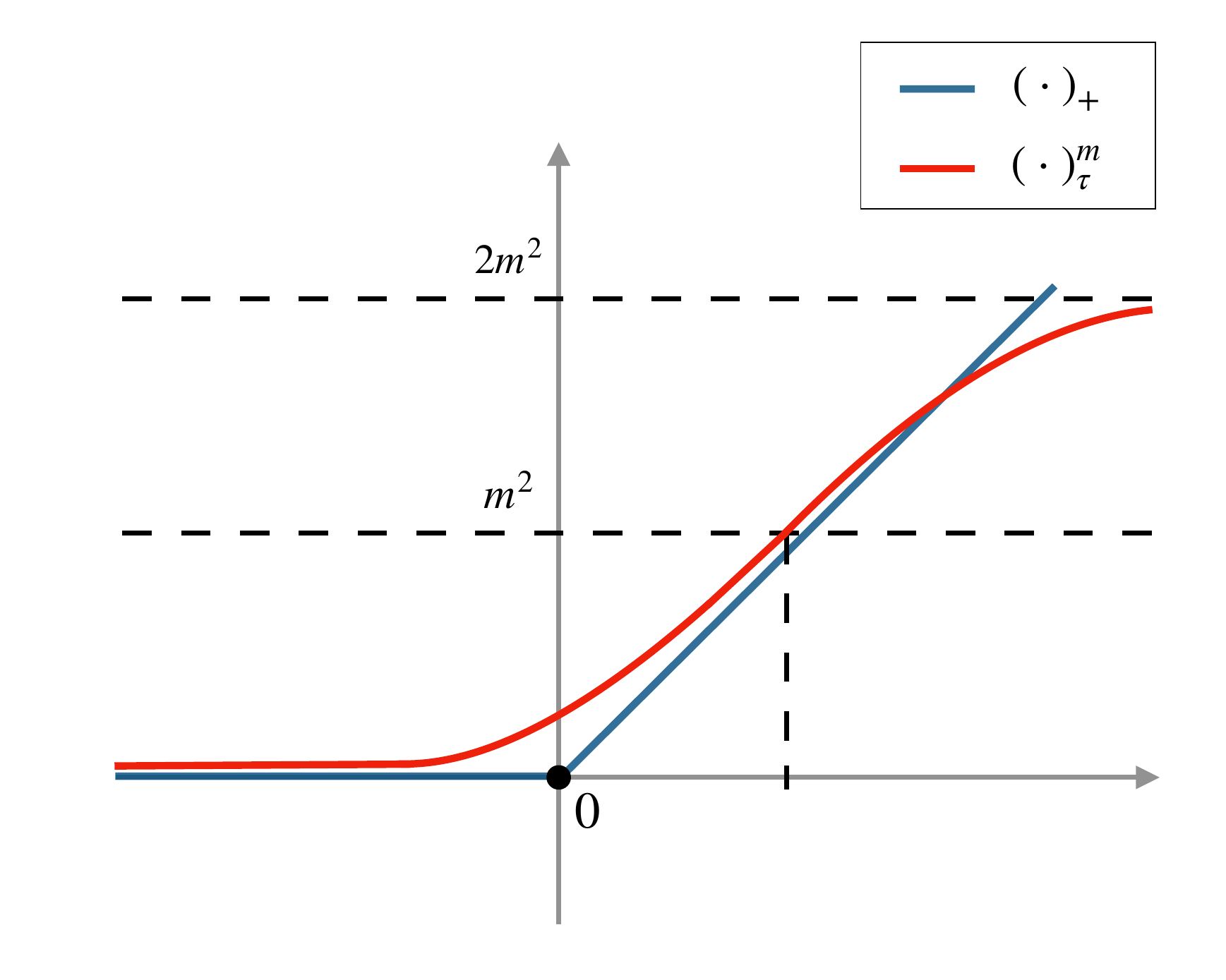}}
\caption{Visualization of the functions involved in the approximation of the ReLU activation.}\label{fig:approx}
\end{figure}

At this point, we remark that the activation $(\btheta,x) \mapsto a^{\tau,m}(w^{\tau,m} x + b)^m_{\tau}$ satisfies all the conditions necessary for the results of Section \ref{section:background} to hold. In what follows, we will also use the activation $(\btheta,x) \mapsto a^m(w^m x + b)^m_{\tau}$ as an auxiliary object. This map is not smooth, but it satisfies all the assumptions required for the existence of a free energy minimizer $\rho^{*}_{\sigma^{*}}$.
 We also note that the truncation of the parameter $w$ might seem unnatural (we are truncating the ReLU activation anyway), but it simplifies our analysis. In particular, it allows us to establish a connection between the derivatives (w.r.t. the input $x$) of the predictor implemented by the solution of the flow (\ref{eq:PDE}) and the same quantity evaluated on the minimizer, as $t$ grows large. 

We will use the following notation for the values of the risks corresponding to different activations
\begin{align*}
    R_i^{\tau,m}(\rho) &:= -\frac{1}{M}\left(y_i - \int a^{\tau,m}(w^{\tau,m}x_i +b)^{m}_{\tau}\rho(\btheta)\mathrm{d}\btheta\right),\quad R^{\tau,m}(\rho) := M \sum_{i=1}^M \left(R_i^{\tau,m}(\rho)\right)^2, \\ R_i^{m}(\rho) &:= -\frac{1}{M}\left(y_i - \int a^m(w^mx_i +b)^{m}_{+}\rho(\btheta)\mathrm{d}\btheta\right), \quad\quad\ \ \ \ R^{m}(\rho) := M \sum_{i=1}^M \left(R_i^{m}(\rho)\right)^2,
\end{align*}
and for the related free-energies
\begin{align*}
    \mathcal{F}^{\tau,m}(\rho) &:= \frac{1}{2}R^{\tau,m}(\rho) + \frac{\lambda}{2} M(\rho) - \beta^{-1} H(\rho),\\
    \mathcal{F}^{m}(\rho) &:= \frac{1}{2}R^{m}(\rho) + \frac{\lambda}{2} M(\rho) - \beta^{-1} H(\rho).
\end{align*}
Here, $R^{\tau,m}_i$ and $R^{m}_i$ represent the rescaled error on the $i$-th training sample, and $R^{\tau,m}$ and $R^{m}$ are the standard expected square losses. In this way, we can write the Gibbs minimizers in a compact form, namely, 
\begin{equation}\label{eq:rhodensitytm}
    \rho_{\tau,m}^{*}(\btheta) = Z^{-1}_{\tau,m}(\beta,\lambda)\exp\left\{-\beta\left[\sum\limits_{i=1}^M R^{\tau,m}_i(\rho_{\tau,m}^{*}) \cdot a^{\tau,m} (w^{\tau,m}x_i+b)_{\tau}^m + \frac{\lambda}{2}\|\bm\theta\|_2^2\right]\right\},
\end{equation}
\begin{equation}\label{eq:rhodensitytm2}
    \rho_{m}^{*}(\btheta) = Z^{-1}_{m}(\beta,\lambda)\exp\left\{-\beta\left[\sum\limits_{i=1}^M R^m_i(\rho_{m}^{*}) \cdot a^m (w^mx_i+b)_{+}^{m} + \frac{\lambda}{2}\|\bm\theta\|_2^2\right]\right\},
\end{equation}
where $Z_{\tau,m}(\beta,\lambda)$ and $Z_{m}(\beta,\lambda)$ denote the  partition functions.

\section{Main Results}\label{section:mainres}

Before presenting the main results, let us introduce the notion of a \textit{cluster set}. This set allows us to identify the locations of the knot points of an estimator function that is implemented by the neural network. In particular, we consider the second derivative of the predictor evaluated at the Gibbs distribution with activation $(\btheta,x) \mapsto a^{\tau,m}(w^{\tau,m} x + b)^m_{\tau}$, for large $\tau$, i.e.,
\begin{equation}\label{eq:secderq}
\lim_{\tau\rightarrow\infty}\frac{\partial^2}{\partial x^2}\int a^{\tau,m}(w^{\tau,m}x + b)^{m}_{\tau}\rho^{*}_{\tau,m}(\btheta)\mathrm{d}\btheta.
\end{equation}
Then, the cluster set is associated to the inputs on which the quantity \eqref{eq:secderq} might grow unbounded in absolute value, in the low temperature regime ($\beta^{-1}\rightarrow0$). Intuitively, this indicates that on some points of the cluster set, the tangent of the predictor changes abruptly, resulting in ``knots''. We denote the cluster set by $\Omega(m,\beta,\lambda)$, and we define it below.

Let $\mathcal{I}$ be the set of prediction intervals, i.e.,
$$
\mathcal{I} = \Big\{[x_0:=-L,x_1],[x_1,x_2],\cdots ,[x_{M-1},x_M],[x_M,x_{M+1}:=L]\Big\},
$$
where $L > \max\{|x_1|,\cdots,|x_M|\}$ is any fixed positive constant independent of $(\tau,m,\beta,\lambda)$. For each $I_j:=[x_j, x_{j+1}]\in\mathcal{I}$, the intersection of the cluster set with the prediction interval $I_j$ is denoted by $\overline{\Omega}_j(m,\beta,\lambda)$, i.e.,
\begin{equation}\label{eq:clustbeg}
\overline{\Omega}_j(m,\beta,\lambda) = \Omega(m,\beta,\lambda) \cap I_{\rev{j}}.
\end{equation}
Thus, in order to define the cluster set $\Omega(m,\beta,\lambda)$, it suffices to give the definition of $\overline{\Omega}_j(m,\beta,\lambda)$. To do so, consider the second-degree polynomials $f^j(x)$ and $f_j(x)$ given by \begin{equation}\label{eq:secdpoly}
    \begin{split}
        f^j(x) &:= 1 + x^2 - (A^jx - B^j)^2,\\
        f_j(x) &:= 1 + x^2 - (A_jx - B_j)^2,
    \end{split}
\end{equation}
with coefficients
\begin{figure}[t!]
    \subfloat[\label{subfig:1}]{\includegraphics[width=.33\columnwidth]{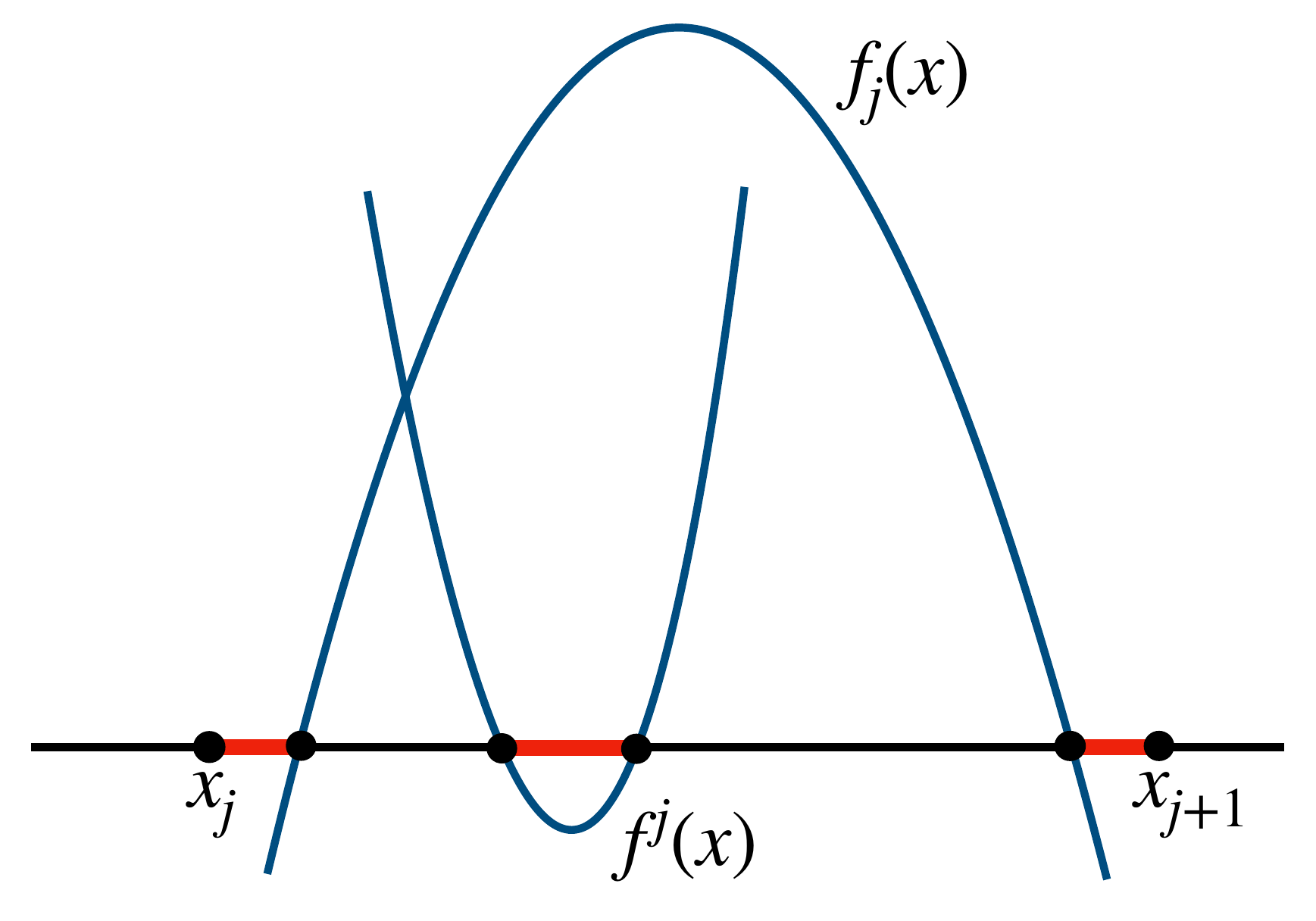}}\
    \subfloat[\label{subfig:2}]{\includegraphics[width=.33\columnwidth]{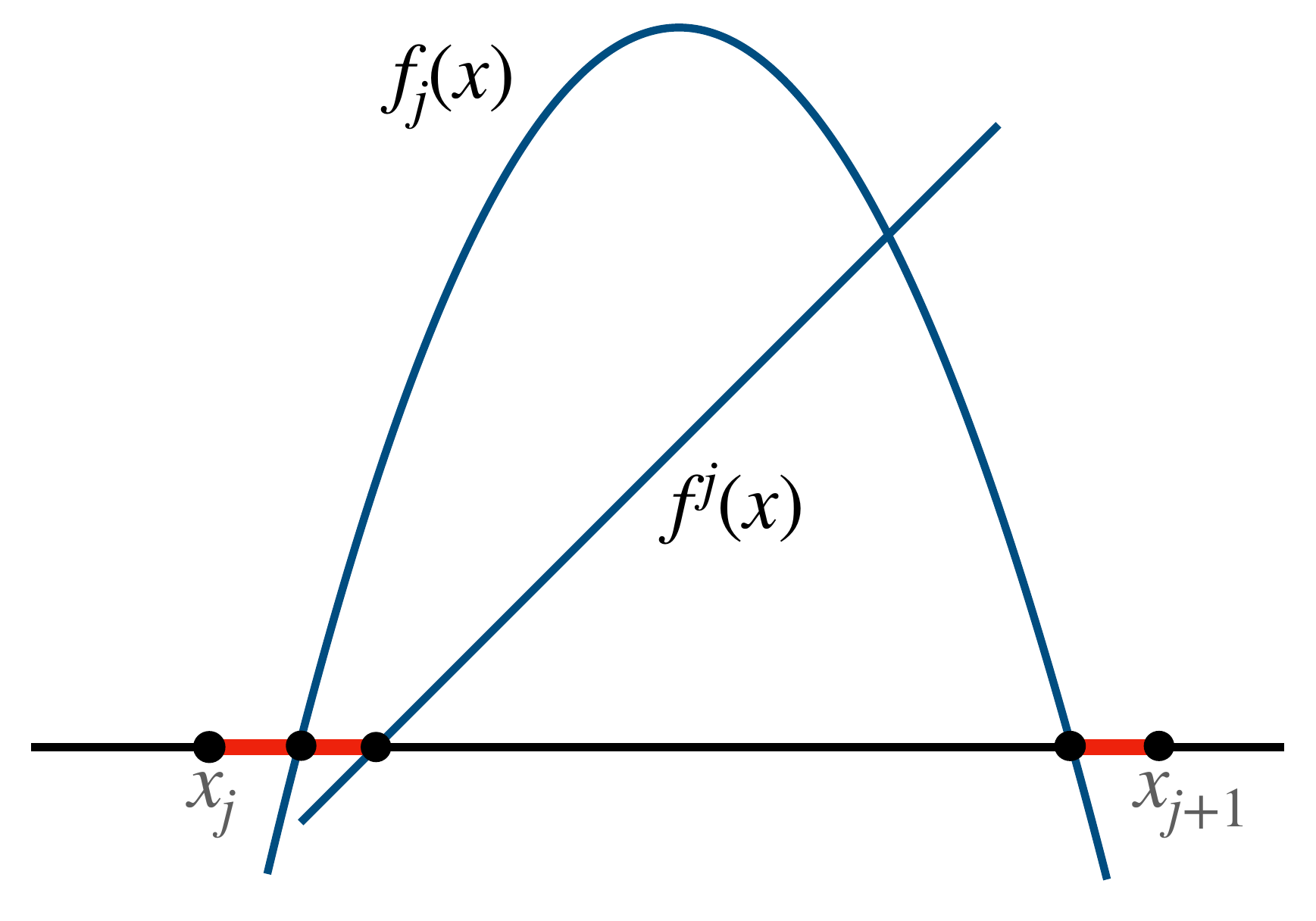}}
    \subfloat[\label{subfig:3}]{\includegraphics[width=.33\columnwidth]{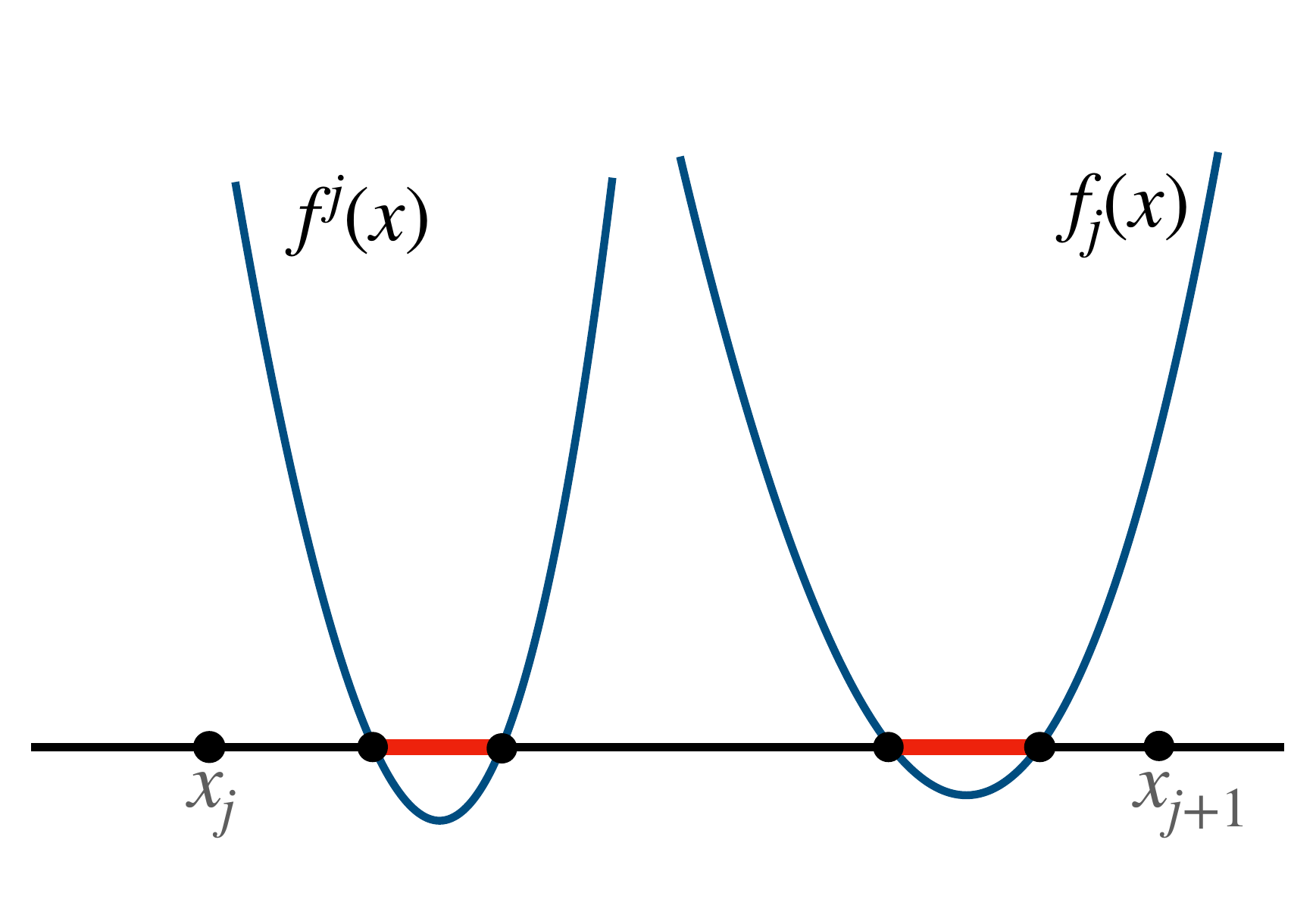}}
\caption{Three different configurations of the polynomials $f^j(x)$ and $f_j(x)$, together with the corresponding cluster set. The dark blue curves show the shape of polynomials, and the red bold intervals indicate the set on which polynomials attain non-positive value.}\label{fig:cluster_set_example}
\vspace{-3mm}
\end{figure}
\begin{align}\label{eq:defAB}
A^j &:= \frac{1}{\lambda}\sum_{i=j+1}^{M} R^m_i(\rho_m^{*}),\ A_j := \frac{1}{\lambda}\sum_{i=1}^{j} R^m_i(\rho_m^{*}),\nonumber\\  B^j &:= \frac{1}{\lambda}\sum_{i=j+1}^{M} R^m_i(\rho_m^{*})x_i,\ B_j := \frac{1}{\lambda}\sum_{i=1}^{j} R^m_i(\rho_m^{*})x_i.
\end{align}
Here, if the summation set is empty (e.g., for $A_0$), the corresponding coefficient is equal to zero. Then, the set $\overline{\Omega}_j(m,\beta,\lambda)$ is defined as the union 
of the non-positive sets of the second-degree polynomials $f^j(x)$ and $f_j(x)$:
\begin{equation}\label{clusterset}
    \overline{\Omega}_j(m,\beta,\lambda) = \Omega^j(m,\beta,\lambda) \cup \Omega_j(m,\beta,\lambda),
\end{equation}
where
\begin{align}\label{def:Om}
    &\Omega^j(m,\beta,\lambda) := \{x\in I_j: f^j(x)  \leq 0\},\nonumber \\
    &\Omega_j(m,\beta,\lambda) := \{x\in I_j: f_j(x)  \leq 0\}.
\end{align}
\rev{We now provide an informal explanation on how the non-positive sets of the second-degree polynomials $f^j(x)$ and $f_j(x)$ come into play. A central object of interest in our analysis is the second derivative of the estimator implemented by the neural network, and our strategy is to bound its magnitude by a particular Gaussian-like integral. This integral does not diverge as long as the corresponding covariance matrix is non-degenerate, i.e., it has strictly positive eigenvalues. In this view, the non-positive sets of the polynomials $f^j(x)$ and $f_j(x)$ precisely characterize the inputs $x$ for which this covariance matrix is degenerate. Hence, for such inputs $x$, this upper bound on the second derivative of the estimator diverges, which implies that the predictor may have a ``knot''.}

Since $f^j(x)$ and $f_j(x)$ are second-degree polynomials, the set $\overline{\Omega}_j(m,\beta,\lambda)$ can be always written as the union of at most $3$ intervals. Moreover, $\overline{\Omega}_j(m,\beta,\lambda)$ depends only on the errors of the estimator at the training points and on the penalty parameter $\lambda$. Thus, if one has access to the value of the errors at each training point for the optimal estimator, i.e., $R^{m}_i(\rho^{*}_m)$, an explicit expression for the cluster set can be readily obtained. Figure \ref{fig:cluster_set_example} shows three different configurations of the polynomials $f^j(x)$ and $f_j(x)$, together with the corresponding cluster set.

\begin{figure}[t!]
    \subfloat[]{\includegraphics[width=.33\columnwidth]{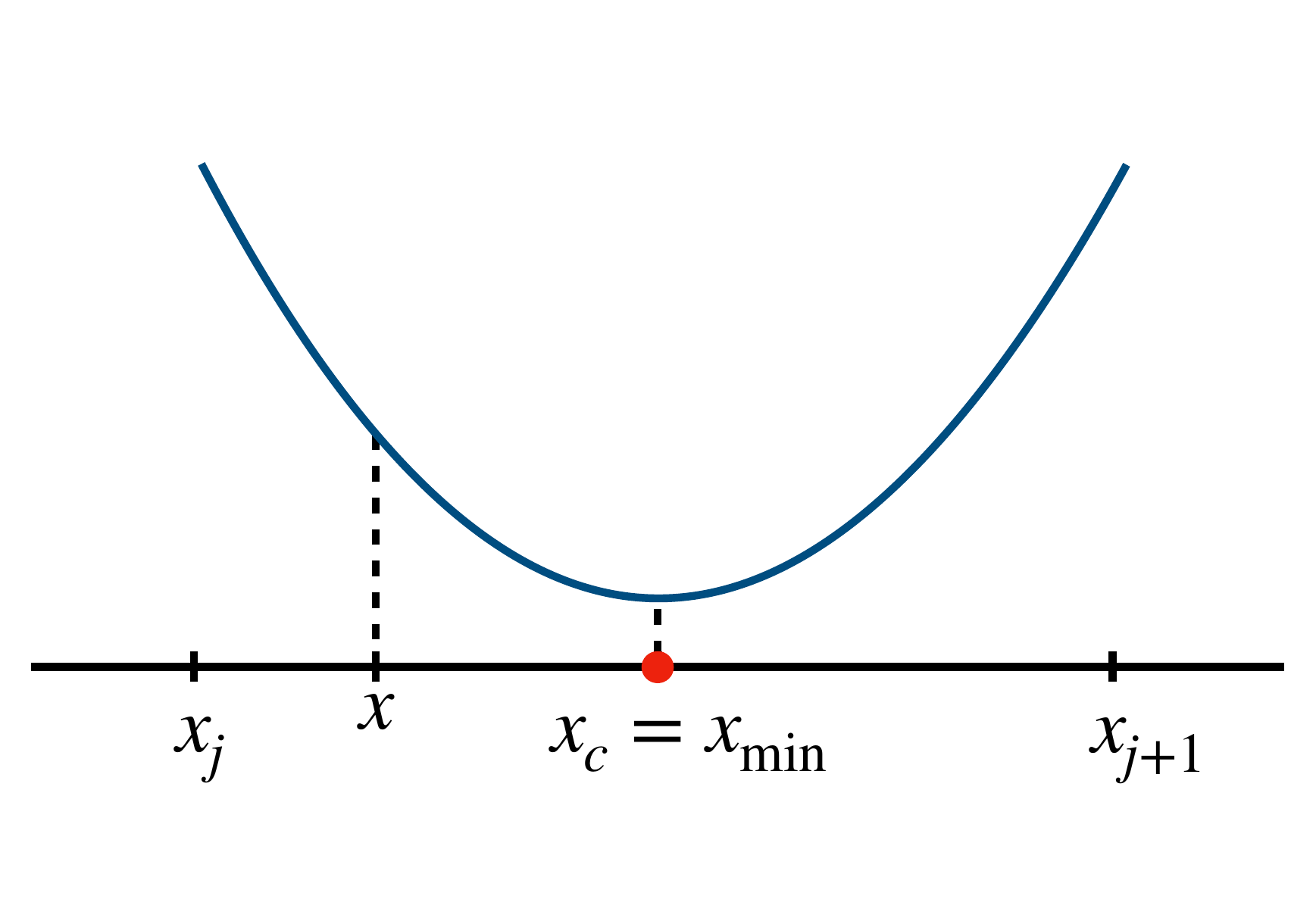}}\
    \subfloat[]{\includegraphics[width=.33\columnwidth]{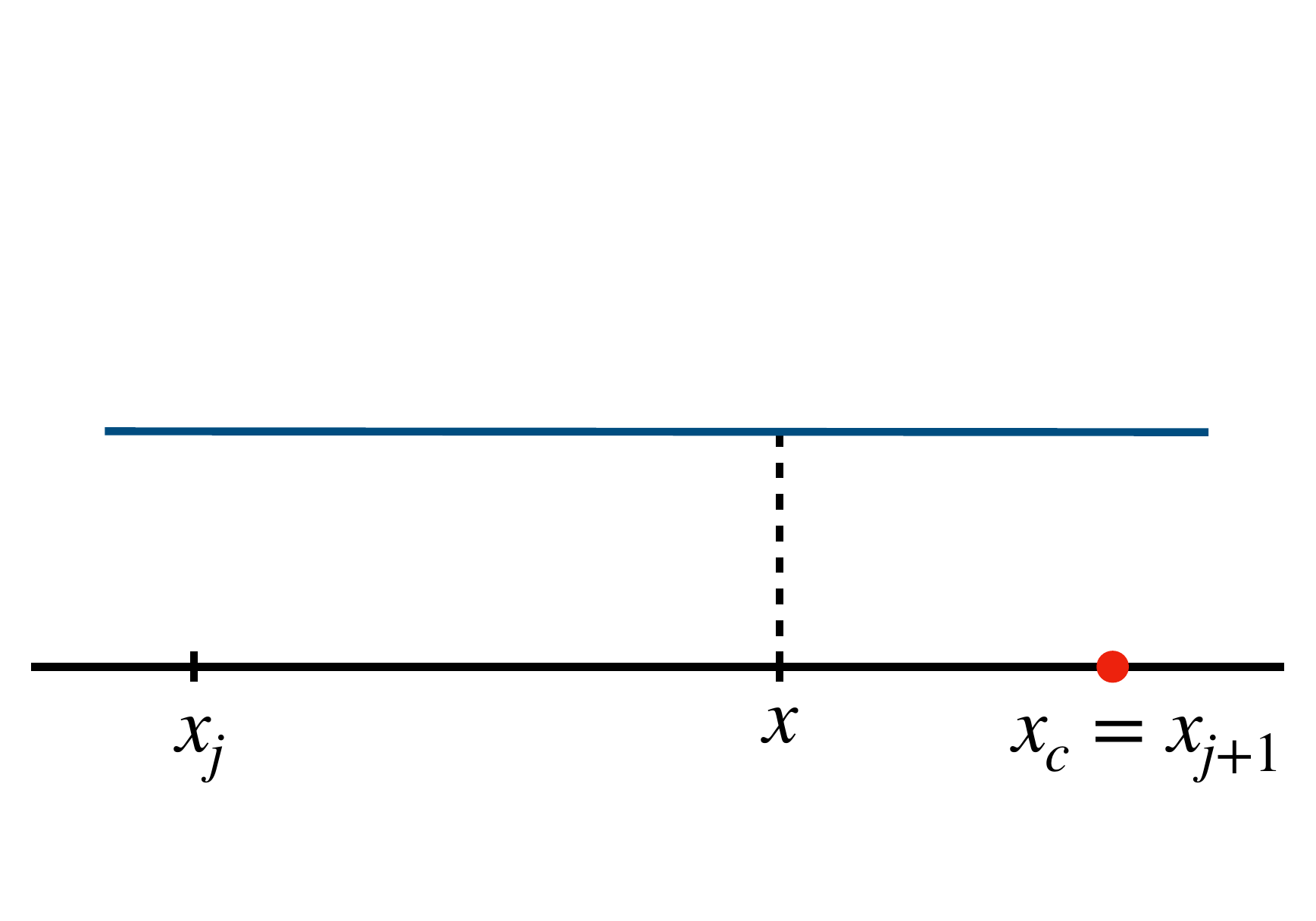}}
    \subfloat[]{\includegraphics[width=.33\columnwidth]{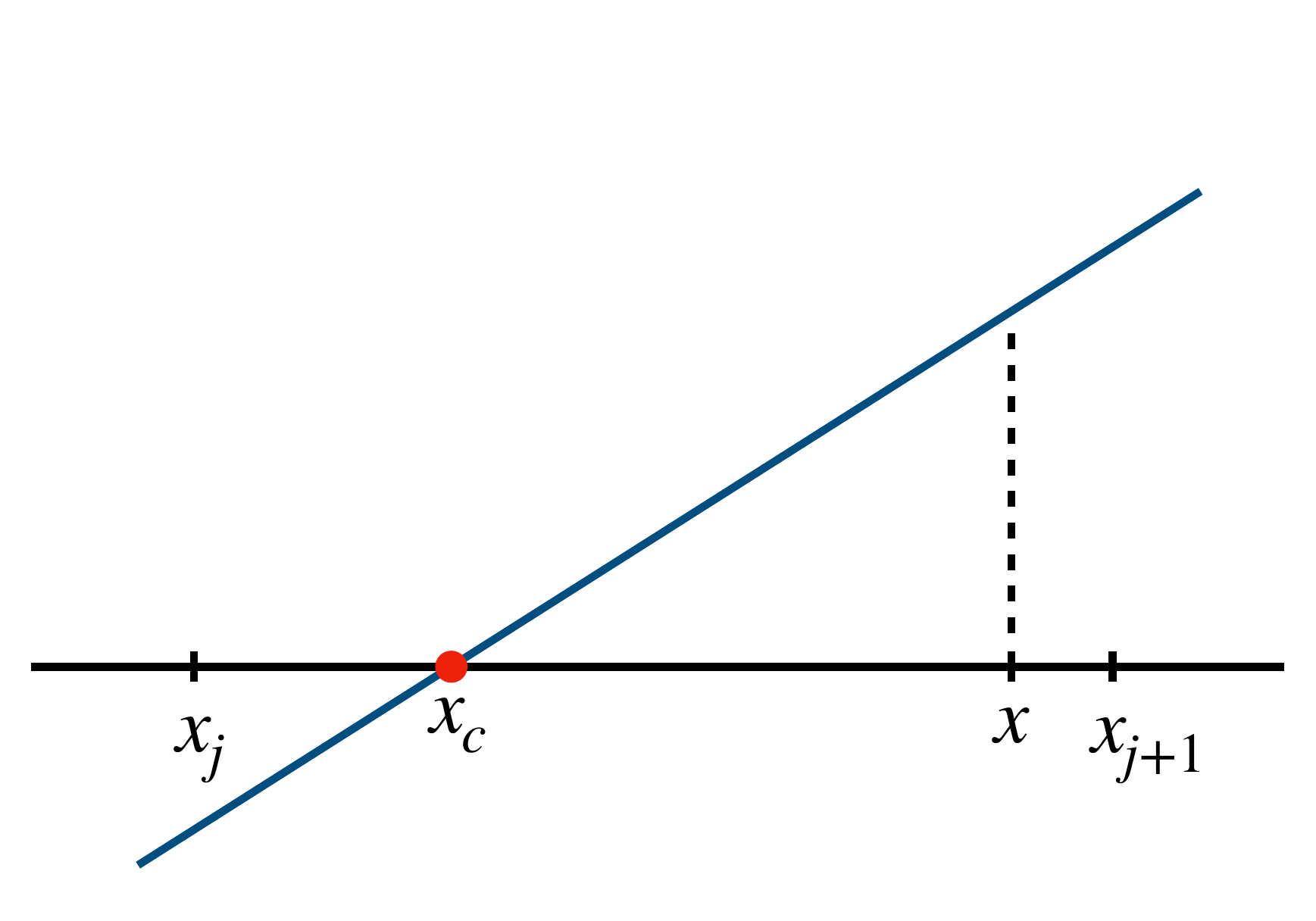}}
    
    \bigskip
    \subfloat[]{\includegraphics[width=.33\columnwidth]{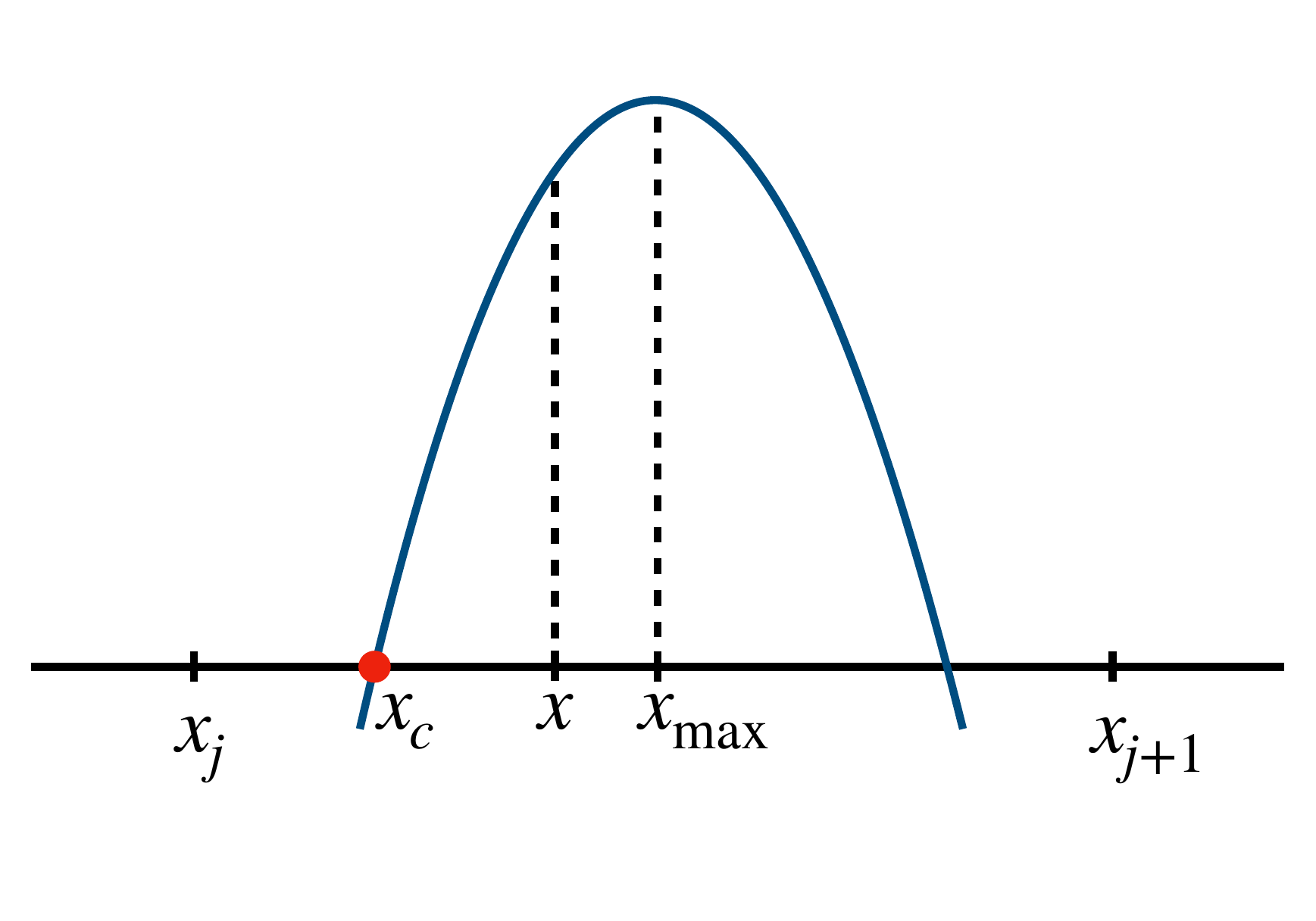}}\
    \subfloat[]{\includegraphics[width=.33\columnwidth]{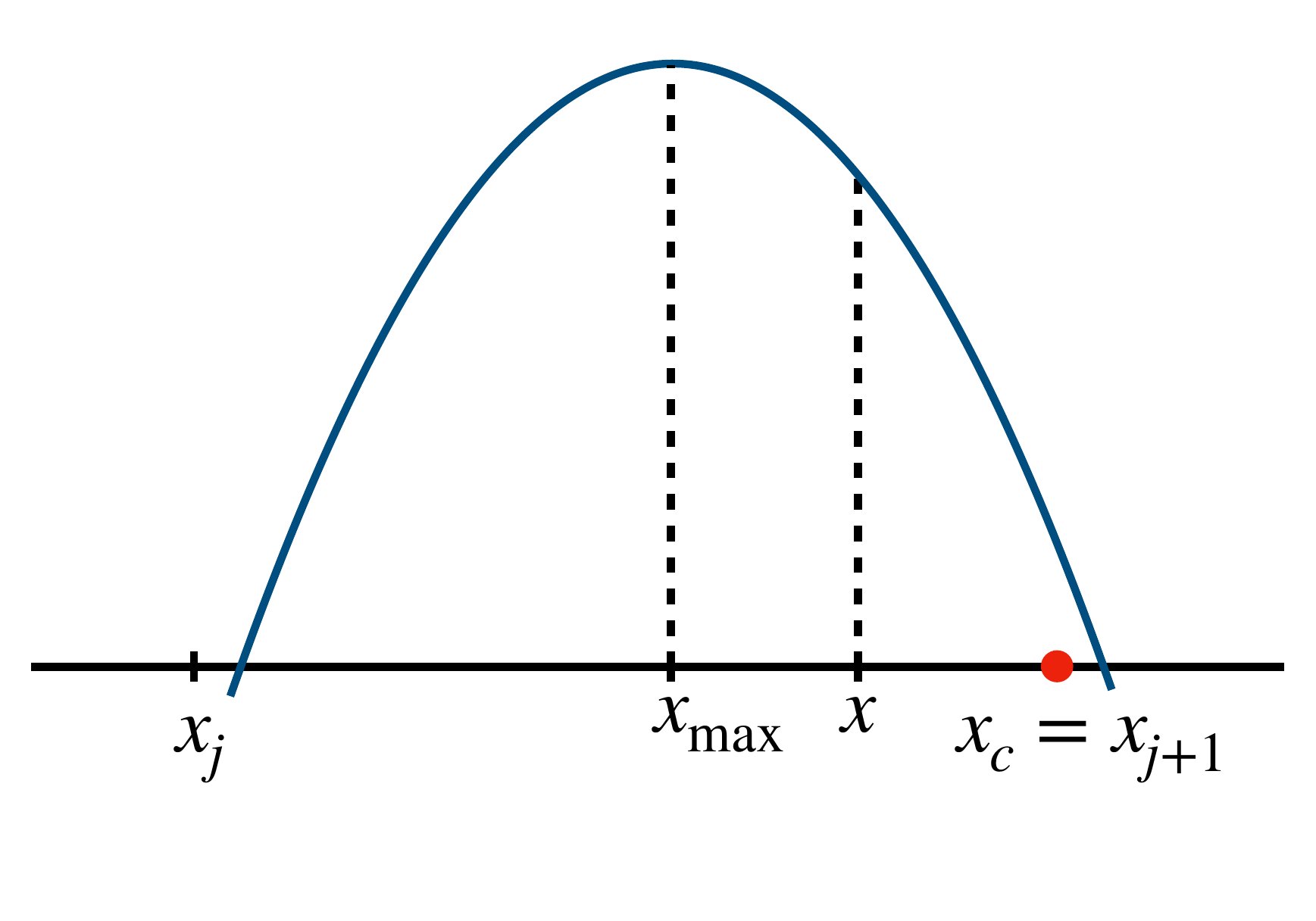}}
    \subfloat[]{\includegraphics[width=.33\columnwidth]{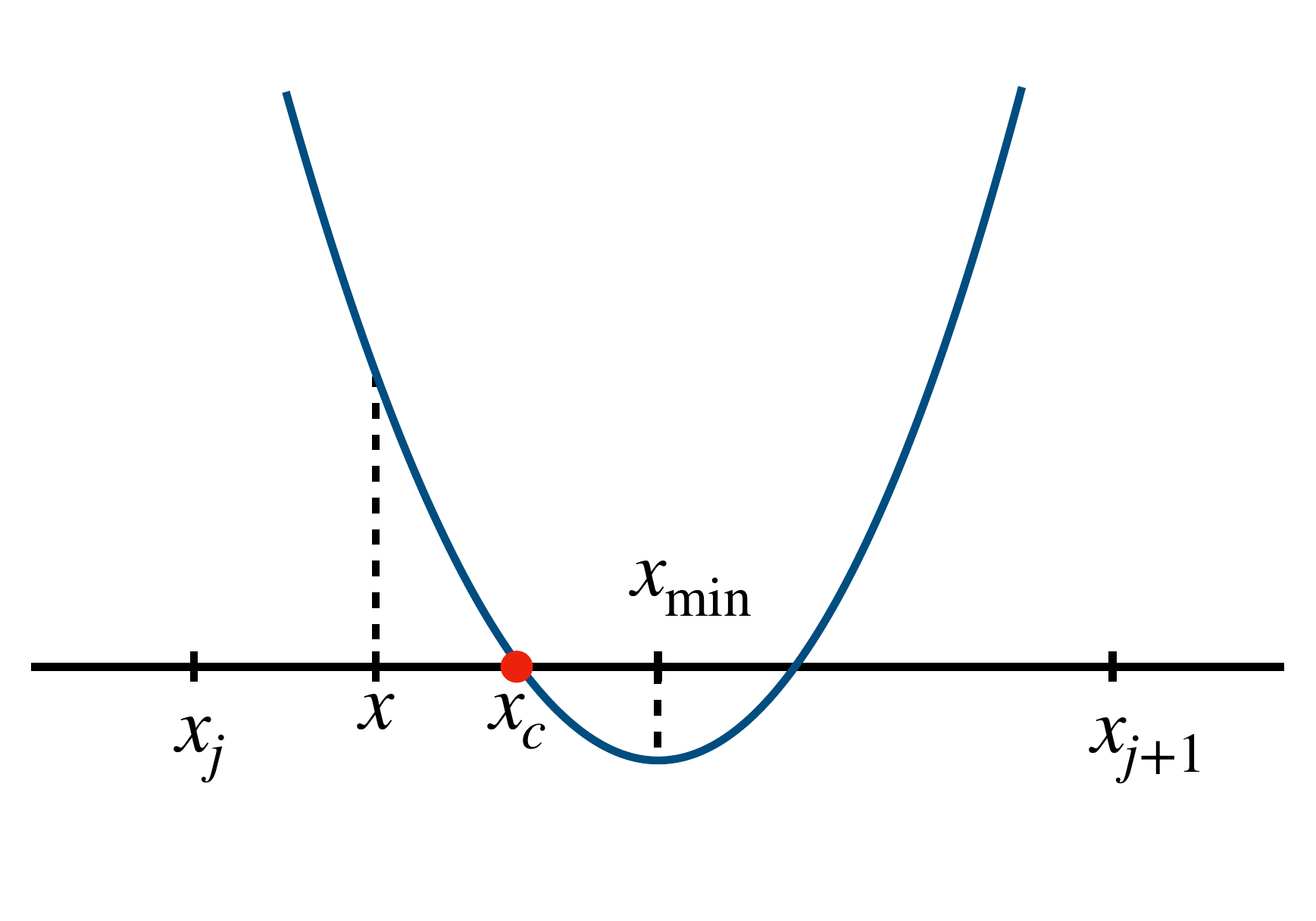}}
\caption{Representation of the critical point $x_c$ for different configurations of the polynomial $f^j$ and evaluation point $x$. The red dot indicates the location of the critical point. The dashed line indicates the value of $f^j$ attained at the corresponding point. The dark blue curve shows the shape of the polynomial $f^j$.}\label{fig:critical_example}
\vspace{-3mm}
\end{figure}

The size of the set $\overline{\Omega}_j(m,\beta,\lambda)$ can be controlled explicitly as a function of the parameters $(m,\beta,\lambda)$.  More formally, in Lemma \ref{worstcasebound}, we show that the Lebesgue measure of $\overline{\Omega}_j(m,\beta,\lambda)$ can be upper bounded as
\begin{equation}\label{eq:controlnorm_prev}
|\overline{\Omega}_j(m,\beta,\lambda)| \leq \frac{e^{C\beta}}{m^2},
\end{equation}
where $C>0$ denotes a numerical  constant independent of $(\tau,m,\beta,\lambda)$ and we have made the following assumption:
\begin{enumerate}[label=\textbf{A\arabic*.}]
  \item $\tau \geq 1$, $\beta \geq \max\Big\{C_1,\frac{1}{\lambda},\frac{1}{\lambda}\log\frac{1}{\lambda}\Big\}$, $m > C_2$ and $\lambda < C_3$ for some numerical constants $C_1,C_2,C_3 > 0$.
\end{enumerate}
In particular, \eqref{eq:controlnorm_prev} implies that the cluster set vanishes as $\beta\rightarrow\infty$ and $m=e^{\Theta(\beta)}$. Therefore, as $\overline{\Omega}_j(m,\beta,\lambda)$ is the union of at most $3$ intervals, the cluster set concentrates on at most 3 points per prediction interval. 

We note that our use of \textbf{A1} throughout the sequel is with the flexibility of $C_1$, $C_2$, and $C_3$ in mind; we are interested in the behavior as $m$ and $\beta$ grow large, so we permit liberty in the determination of the constants implying the formal statements we intend to show. 

A key step of our analysis (cf. Theorem \ref{mainT0}) consists in showing that, outside the cluster set, the absolute value of the second derivative vanishes. Our bound on this absolute value is connected to the speed of decay to zero of the polynomials $f^j(x)$ and $f_j(x)$, as the input $x$ approaches the cluster set. In order to establish a quantitative bound for such a decay, we introduce an auxiliary quantity, namely, a \emph{critical point}, that is associated to each input point outside of the cluster set. Given the polynomial $f^j(\cdot)$ and the input $x \in I_j \setminus \Omega^j(m,\beta,\lambda)$, the critical point $x_c$ associated to $x$ is defined below.

\begin{definition}[Critical point]\label{def:critical}
If $f_j(\tilde{x})=0$ has no solutions for $\tilde{x} \in \mathbb{R}$, then the critical point $x_c$ associated to $x$ and $I_j \setminus \Omega^j(m,\beta,\lambda)$ is defined to be the minimizer of $f_j(\cdot)$ on $I_j$, i.e., $x_c=\arg\min_{\tilde{x}\in I_j} f_j(\tilde{x})$. In case of multiple minimizers, e.g., $(a,b)=(1,0)$, we set $x_c=x_{j+1}$. If $f_j(\tilde{x})=0$ has at least one solution for $\tilde{x}\in \mathbb{R}$, then we let $x_r$ be the root of $f_j$ (in $\mathbb R$ and not necessarily in the segment $I_j$) which is the closest in Euclidean distance to $x$, and we define the \emph{critical point} $x_c$ to be the closest point to $x_r$ in $I_j$, i.e., $x_c=x_r$ if $x_r\in I_j$ and $x_c$ is one of the two extremes of the interval otherwise.
\end{definition}

Figure \ref{fig:critical_example} provides a visualization of the critical point associated to $x$ for several configurations of $f^j$. For the  polynomial $f_j(\cdot)$ and an input $x \in I_j \setminus \Omega_j(m,\beta,\lambda)$, the critical point $\bar{x}_c$ is defined in a similar fashion.
In this view, we show in Lemma \ref{welldefquad} that the following lower bounds on $f^j, f_j$ hold for $x \in I_j \setminus \overline{\Omega}(m,\beta,\lambda)$,
\begin{equation}\label{eq:totallbpoly}
     C^j(x) := \gamma_1 (x - x_c)^2 + \gamma_2 \leq f^j(x) , \quad C_j(x) := \gamma_3 (x - \bar{x}_c)^2 + \gamma_4 \leq f_j(x).
\end{equation}
The coefficients $\gamma_1,\gamma_2,\gamma_3,\gamma_4 > 0$ satisfy the following condition: either $\gamma_1 > \varepsilon$ or $\gamma_2 > \varepsilon$, and either $\gamma_3 > \varepsilon$ or $\gamma_4 > \varepsilon$,
where $\varepsilon>0$ is a numerical constant independent of the choice of $(m,\beta,\lambda)$. 

At this point, we are ready to state our upper bound on the second derivative outside the cluster set.

\begin{theorem}[Vanishing curvature]\label{mainT0} Assume that condition \textbf{A1} is satisfied and that $m>e^{K_1\beta}$ for some numerical constant $K_1>0$ independent of $(\tau,m,\beta,\lambda)$. Then, for each $x\in I_j\setminus\overline{\Omega}_j(m,\beta,\lambda)$, the following upper bound on the second derivative  holds
\begin{equation}\label{eq:decay}
    \lim_{\tau\rightarrow +\infty}  \Bigg|\frac{\partial^2}{\partial x^2}\int a^{\tau,m}(w^{\tau,m}x + b)^{m}_{\tau}\rho^{*}_{\tau,m}(\btheta)\mathrm{d}\btheta\Bigg| \leq \mathcal{O}\left(\frac{1}{m\lambda} + \frac{1}{\beta\lambda^{7/4}(\bar{C}^j(x))^2} \right),
\end{equation}
where the coefficient $\bar{C}^j(x)$ is defined as
\begin{equation}\label{eq:barCdef}
\bar{C}^j(x) = \min \left\{C^j(x), C_j(x), 1\right\},
\end{equation}
with $C^j(x)$ and $C_j(x)$ given by (\ref{eq:totallbpoly}). Furthermore, the following upper-bound on the size of the cluster set holds
\begin{equation}\label{eq:controlnorm}
    |\Omega(m,\beta,\lambda)| \leq \frac{\rev{K_2}}{m},
\end{equation}
for some numerical constant $K_2>0$ independent of $(\tau,m,\beta,\lambda)$.
\end{theorem}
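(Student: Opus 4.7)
The overall strategy is threefold: (i) pass to the limit $\tau\to\infty$, so that the smoothed activation reduces to $a^m(w^m x+b)^m_{+}$ and the Gibbs measure $\rho^{*}_{\tau,m}$ is replaced by $\rho^{*}_{m}$ given in \eqref{eq:rhodensitytm2}; (ii) differentiate twice in $x$ under the integral, producing a Dirac-mass contribution on the activation hyperplane $\{w^m x+b=0\}$ plus a smooth outer-truncation term on $\{w^m x+b>m^{2}\}$; (iii) identify a quadratic form in $(a,w)$ whose determinant is exactly $f^j(x)$ or $f_j(x)$ (depending on $\operatorname{sign}(w)$), and evaluate the resulting Gaussian-type moment. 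The outer-truncation term is handled separately: there the second derivative of $(\cdot)^m_{+}$ equals $\phi_m''$, bounded by $1/m^2$, so that a Gaussian tail estimate on $\rho^{*}_m$ based on the $\ell_2$ penalty yields an $\mathcal{O}(1/(m\lambda))$ contribution, which accounts for the first term in \eqref{eq:decay}. The region $|w|>m$ is treated similarly, since there $w^m$ is frozen and the contribution is likewise exponentially suppressed.

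\textbf{Gaussian reduction and determinantal identity.} For the main surface term we substitute $b=-wx$ in \eqref{eq:rhodensitytm2} and use the identity $\big(w(x_i-x)\big)_{+}=w(x_i-x)\,\mathbbm{1}\{w(x_i-x)\geq 0\}$. For $x\in I_j$ and $w>0$ only the points with $x_i>x$ (i.e., $i\geq j+1$) contribute, yielding
\begin{equation*}
\sum_{i=j+1}^M R^m_i(\rho^{*}_m)\,a w(x_i-x)=\lambda\,a w(B^j-A^j x).
\end{equation*}
Completing the square in $a$, together with $\|\btheta\|_2^2=a^2+w^2(1+x^2)$ on the hyperplane, rewrites the exponent as a positive multiple of
\begin{equation*}
\big(a+w(B^j-A^j x)\big)^2+w^2 f^j(x),
\end{equation*}
so that $f^j(x)$ is literally the determinant of the associated $2\times 2$ quadratic form; the calculation for $w<0$ mirrors this and produces $f_j(x)$. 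The extra factor $a w^2$ coming from the delta yields, via an explicit Gaussian moment evaluation, a bulk contribution of order $1/(\beta\, (f^j(x))^2)$ (resp.\ $1/(\beta\,(f_j(x))^2)$), modulated by the prefactor $|B^j-A^j x|\lesssim 1/\lambda$ from \textbf{A1} and by the partition function $Z_m(\beta,\lambda)$, which is lower bounded by $(\beta\lambda)^{-3/2}$ using Gaussian dominance of the $\ell_2$ penalty. Replacing $f^j, f_j$ by the quadratic lower bounds $C^j(x), C_j(x)$ from \eqref{eq:totallbpoly}, and then by $\bar{C}^j(x)$ defined in \eqref{eq:barCdef}, and carefully tracking the powers of $\lambda$ produced at each step yields the announced rate $\mathcal{O}(1/(\beta\lambda^{7/4}(\bar{C}^j(x))^2))$.

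\textbf{Cluster set measure.} The bound \eqref{eq:controlnorm} is a direct corollary of the interval estimate $|\overline{\Omega}_j(m,\beta,\lambda)|\leq e^{C\beta}/m^2$ stated in \eqref{eq:controlnorm_prev} and proven in Lemma \ref{worstcasebound}: summing over the $M+1$ prediction intervals and using the hypothesis $m>e^{K_1\beta}$ with $K_1$ chosen large enough ($K_1\geq C$) gives
\begin{equation*}
|\Omega(m,\beta,\lambda)|\leq(M+1)\frac{e^{C\beta}}{m^2}\leq\frac{(M+1)e^{C\beta}}{m\,e^{K_1\beta}}\leq\frac{K_2}{m}.
\end{equation*}

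\textbf{Main obstacle.} The most delicate step is the commutation of the $\tau\to\infty$ limit with the second derivative in $x$. The family $\{\rho^{*}_{\tau,m}\}_\tau$ converges only weakly to $\rho^{*}_m$, whereas the second derivative probes an $x$-dependent Dirac mass concentrated on the measure-zero hyperplane $\{w^m x+b=0\}$. Making this rigorous will require uniform-in-$\tau$ pointwise bounds on $\rho^{*}_{\tau,m}$ in a neighborhood of this hyperplane, which cannot be read off from weak convergence alone; instead they must be extracted from the Gibbs structure \eqref{eq:rhodensitytm} itself, combining the sandwich bound \eqref{eq:sandwich} with the uniform smoothness of the truncation to guarantee pointwise convergence of the relevant integrand, and with the $\ell_2$ penalty to secure the dominated-convergence envelope. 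Once this regularity is established, the computations above go through uniformly and the limit may be taken termwise.
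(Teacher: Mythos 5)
Your proposal follows essentially the same route as the paper: the paper splits the argument into Lemma \ref{convdelta} (pass $\tau\to\infty$, collapsing the soft-plus second derivative to a delta on $\{w^m x+b=0\}$ up to an $\mathcal{O}(1/(m\lambda))$ outer-truncation error) and Lemma \ref{UBintegral} (bound the resulting line integral by a Gaussian moment whose covariance has discriminant $f^j(x)$ or $f_j(x)$), plus Lemma \ref{worstcasebound} for the cluster-set measure exactly as you summed over intervals. Your ``determinantal identity'' and completion-of-square computation match the paper's $\Sigma^{-1}$ eigenvalue decomposition in Lemma \ref{worstcasebound} and the square-completion in Lemma \ref{UBintegral}, and you correctly isolated the chief technical obstacle (pointwise rather than weak convergence of $\rho^{*}_{\tau,m}$ plus a dominated-convergence envelope from the Gibbs form), which is precisely how Lemmas \ref{convmin} and \ref{limitrho_mtau} supply what \ref{convdelta} needs. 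The only small discrepancy is in the $\lambda$-bookkeeping: outside the cluster set the paper bounds $|B^j-A^jx|\leq\sqrt{1+x^2}$ (a constant), not $\lesssim 1/\lambda$ as you indicate, and extracts the $\lambda^{7/4}$ power from the interplay of this constant bound, $Z_m\gtrsim(\beta\lambda)^{-3/2}$ from Lemma \ref{finiteZ}, and the explicit Gaussian integrals; your sketch acknowledges more care is needed there, so this is a loose end rather than a flaw.
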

Some remarks are in order. First, the inequality \eqref{eq:decay} shows that, in the low temperature regime, the curvature vanishes outside the cluster set, and it also provides a decay rate. Second, we will upper bound the measure of the cluster set as in \eqref{eq:controlnorm_prev}, thus the condition  $m>e^{K_1\beta}$ ensures that the upper bound \eqref{eq:controlnorm} holds.
Finally, the presence of the coefficient $\bar{C}^j(x)$ is due to the fact that the second derivative can grow unbounded for points approaching the cluster set. Let us highlight that this growth is solely dictated by the distance to the cluster set, and it does not depend on $(m, \beta, \lambda)$. In fact, \eqref{eq:totallbpoly} holds, where one of the coefficients in $\{\gamma_1, \gamma_2\}$ and in $\{\gamma_3, \gamma_4\}$ is lower bounded by a strictly positive constant independent of $(m, \beta, \lambda)$. 

From Theorem \ref{mainT0}, we conclude that, as $m\lambda \rightarrow \infty$ and $\beta\lambda^{7/4}\rightarrow\infty$, the second derivative vanishes for all $x\in I_j\setminus\overline{\Omega}_j(m,\beta,\lambda)$. Furthermore, for $m > e^{C\beta}$ and $\beta\rightarrow\infty$, the cluster set concentrates on at most 3 points per interval. Therefore, the estimator $\int a^{\tau,m}(w^{\tau,m}x + b)^{m}_{\tau}\rho^{*}_{\tau,m}(\btheta)\mathrm{d}\btheta$ is piecewise linear with ``knot'' points given by the cluster set (cf. Theorem \ref{mainT1}). To formalize this result, we define the notion of an \textit{admissible piecewise linear solution}.

\begin{figure}[t!]
    \subfloat[\label{fig:firstconf}]{\includegraphics[width=.33\columnwidth]{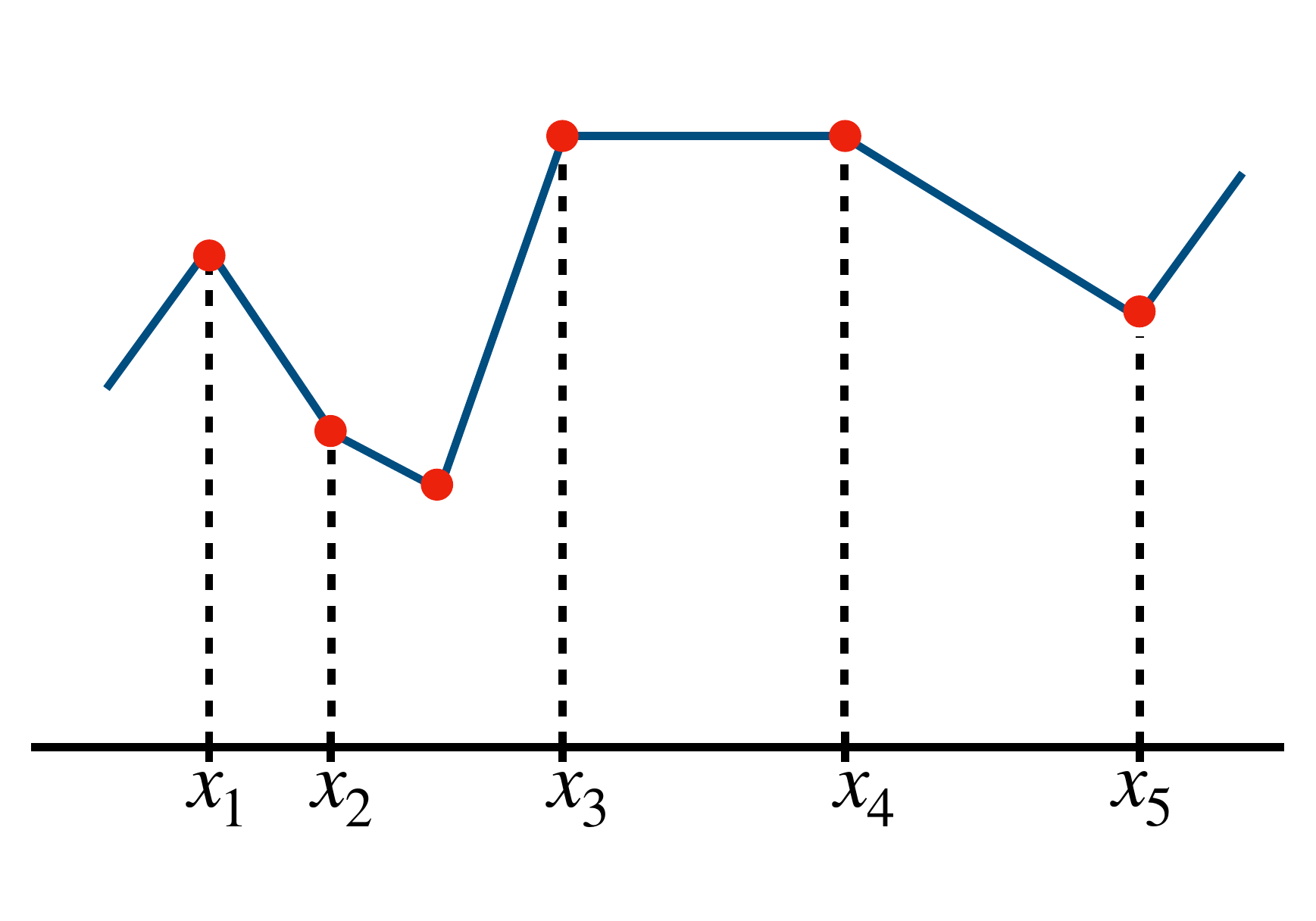}}\
    \subfloat[\label{fig:secondconf}]{\includegraphics[width=.33\columnwidth]{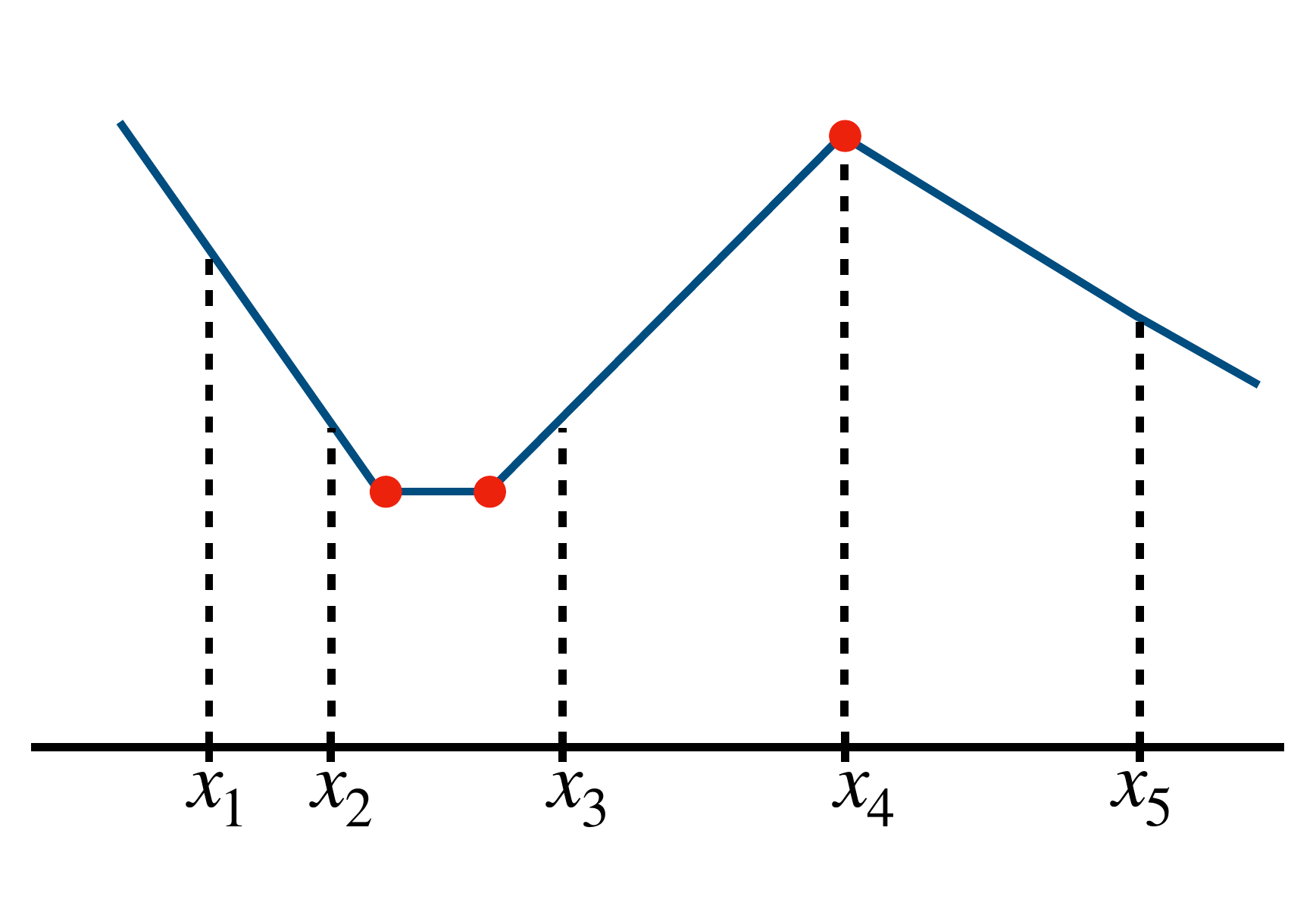}}
    \subfloat[\label{fig:thirdconf}]{\includegraphics[width=.33\columnwidth]{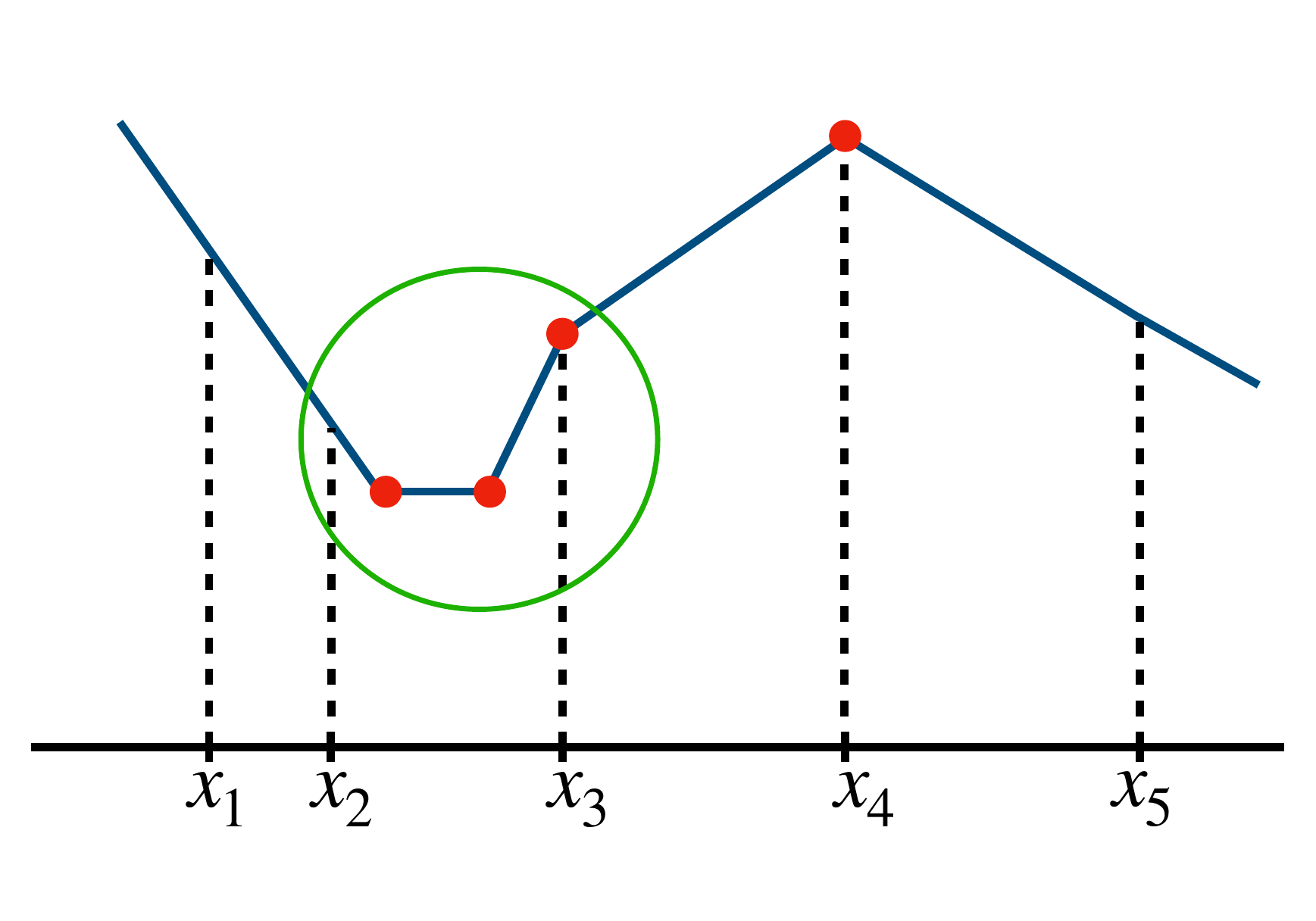}}
\caption{Three examples of piecewise linear functions that fit the data with zero squared error. Dashed black line indicates the $y$ value for each training input. Red dots are located at points where the function changes its tangent.
(a) and (b) illustrates two admissible piecewise linear solutions, while (c) is not admissible due to the location of break points on interval $[x_2,x_3]$.}\label{fig:pwl_example}
\vspace{-3mm}
\end{figure}

\begin{definition}[Admissible piecewise linear solution]\label{def:apwlsolution} Given a set of prediction intervals $\mathcal{I}$, a function $f: \mathbb{R}\rightarrow\mathbb{R}$
is an admissible piecewise linear solution if $f$ is continuous, piecewise linear and has at most 3 knot points (i.e., the points where a change of tangent occurs) per prediction interval $I_j \in \mathcal{I}$.
Moreover, the only configuration possible for 3 knots to occur is the following: two knots are located strictly at the end points of the interval, and the remaining point lies strictly in the interior of the interval.
\end{definition}

 Figure \ref{fig:pwl_example} provides some examples of piecewise linear solutions: (a) and (b) are admissible (in the sense of Definition \ref{def:apwlsolution}), while (c) is not admissible, since it has two knots in the interior of the prediction interval and one located at the right endpoint. As mentioned before, the location of the knot points is associated with the limiting behaviour of the corresponding polynomials $f^j(x)$ and $f_j(x)$. For instance, consider the prediction interval $[x_2,x_3] \in \mathcal{I}$. Then, the configuration of Figure \ref{fig:firstconf} corresponds to the case described in Figure \ref{subfig:1}. In fact, $f^j$ has a negative leading coefficient, and its roots are converging to the end points of the interval. Moreover, $f_j$ has positive curvature and the minimizer is located inside the interval. The same parallel can be drawn between Figure \ref{fig:secondconf} and Figure \ref{subfig:3}. Furthermore, one can verify that the situation described in Figure \ref{fig:thirdconf} cannot be achieved for any configuration of $f^j(x)$ and $f_j(x)$.

We are now ready to state our result concerning the structure of the function obtained from the Gibbs distribution $\rho^{*}_{\tau,m}$.

\begin{theorem}[Free energy minimizer solution is increasingly more piecewise linear]\label{mainT1} Assume that condition \textbf{A1} is satisfied and that $m > e^{K_1\beta}$, where $K_1 > 0$ is a constant independent of $(\tau,m,\beta,\lambda)$. Then, given a set of prediction intervals $\mathcal{I}$, there exists a family of admissible piecewise linear solutions $\{f_{m,\beta,\lambda}\}$ as per Definition \ref{def:apwlsolution}, such that, for any $I\in\mathcal{I}$ and $x \in I$, the following convergence result holds
$$
\lim_{\substack{\beta\lambda^{7/4}\rightarrow+\infty}} \lim_{\tau\rightarrow +\infty}  \Bigg|f_{m,\beta,\lambda}(x) - \int a^{\tau,m}(w^{\tau,m}x + b)^{m}_{\tau}\rho^{*}_{\tau,m}(\btheta)\mathrm{d}\btheta\Bigg| = 0.
$$
\end{theorem}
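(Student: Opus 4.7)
The plan is to exploit the two messages of Theorem \ref{mainT0}: (i) the pointwise bound on the second derivative of
\[
    g_{m,\beta,\lambda}(x):=\lim_{\tau\to\infty}\int a^{\tau,m}(w^{\tau,m}x+b)^{m}_{\tau}\,\rho^{*}_{\tau,m}(\btheta)\,\mathrm{d}\btheta
\]
on $I_j\setminus\overline{\Omega}_j$, and (ii) the measure bound $|\overline{\Omega}_j(m,\beta,\lambda)|\le K_2/m$. First I would justify that the $\tau\to\infty$ limit exists, via dominated convergence using the sandwich \eqref{eq:sandwich} and the uniform boundedness of the smooth $m$-truncated activations; the limit is $\int a^{m}(w^{m}x+b)^{m}_{+}\rho^{*}_{m}(\btheta)\,\mathrm{d}\btheta$ with $\rho^{*}_m$ as in \eqref{eq:rhodensitytm2}. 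Under assumption \textbf{A1} the $\ell_2$ part of the potential concentrates $\rho^{*}_m$ on $\|\btheta\|_2\lesssim\sqrt{\log m/(\beta\lambda)}$, so that $g_{m,\beta,\lambda}$ is Lipschitz in $x$ with constant $\mathcal{O}(1/\lambda)$, a refined bound much better than the naive $\mathcal{O}(m^2)$ and crucial for case (ii) below.

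\textbf{Building the approximant.} On each $I_j$, enumerate the connected components $C_1,\dots,C_{k_j}$ of $\overline{\Omega}_j(m,\beta,\lambda)$; the quadratic structure of $f^j,f_j$ in \eqref{eq:secdpoly}, detailed in Figure \ref{fig:cluster_set_example}, guarantees $k_j\le 3$ and forces the $2+1$ configuration of Definition \ref{def:apwlsolution} when $k_j=3$ (the two-piece non-positivity set of a downward parabola intersected with $I_j$ necessarily has its two pieces touching the endpoints). For each $C_r$ select a representative knot $c_r\in C_r$: the midpoint if $C_r$ lies in the interior of $I_j$, and the touching endpoint otherwise. Define $f_{m,\beta,\lambda}$ globally as the unique continuous piecewise linear function whose knots on $I_j$ are $\{c_r\}_{r=1}^{k_j}$ and whose values at those knots and at the training points coincide with $g_{m,\beta,\lambda}$. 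Admissibility in the sense of Definition \ref{def:apwlsolution} then holds by construction.

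\textbf{Pointwise convergence.} Fix $x\in I\in\mathcal{I}$. Because $|\overline{\Omega}_j|\le K_2/m\to 0$, for $\beta$ sufficiently large either (i) $x$ lies in a clean region $[a,b]\subset I_j\setminus\overline{\Omega}_j$ whose endpoints are the adjacent knots of $f_{m,\beta,\lambda}$, so that $f_{m,\beta,\lambda}(a)=g_{m,\beta,\lambda}(a)$ and $f_{m,\beta,\lambda}(b)=g_{m,\beta,\lambda}(b)$; or (ii) $x$ eventually falls inside a shrinking component $C_r$ with $|C_r|\le K_2/m\to 0$. In case (i), Taylor's identity with integral remainder yields
\[
    f_{m,\beta,\lambda}(x)-g_{m,\beta,\lambda}(x)=\int_a^b G_{a,b}(x,y)\,g''_{m,\beta,\lambda}(y)\,\mathrm{d}y,
\]
with $G_{a,b}$ the Green's function of $-\partial_x^2$ on $[a,b]$. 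Inserting the bound \eqref{eq:decay} makes the first term $\mathcal{O}(1/(m\lambda))\to 0$, and reduces the second to $\mathcal{O}(1/(\beta\lambda^{7/4}))\cdot\int_a^b|G_{a,b}(x,y)|/\bar C^j(y)^2\,\mathrm{d}y$, whose vanishing requires controlling the integral uniformly in $(m,\beta,\lambda)$. In case (ii), $|x-c_r|\le K_2/m$ combines with the Lipschitz bounds on $g_{m,\beta,\lambda}$ and $f_{m,\beta,\lambda}$ established in the first paragraph to give $|f_{m,\beta,\lambda}(x)-g_{m,\beta,\lambda}(x)|=\mathcal{O}(1/(m\lambda))\to 0$.

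\textbf{Main obstacle.} The technical heart is the integral $\int_a^b|G_{a,b}(x,y)|/\bar C^j(y)^2\,\mathrm{d}y$. Whenever $\gamma_2=0$ or $\gamma_4=0$ in \eqref{eq:totallbpoly}, the lower bound $\bar C^j(y)$ vanishes quadratically at a critical point, so $1/\bar C^j(y)^2$ behaves like $(y-x_c)^{-4}$ near that point and a naive integration diverges. Showing that the integral is nonetheless bounded uniformly in $(m,\beta,\lambda)$ requires a careful geometric analysis of the critical points $x_c,\bar x_c$ from Definition \ref{def:critical} relative to $[a,b]$, going case by case through all configurations of $f^j,f_j$ in Figure \ref{fig:cluster_set_example} and verifying in each case either that the critical point lies strictly outside $[a,b]$ at a $(m,\beta,\lambda)$-uniform distance (making the singularity harmless), or that the vanishing of $G_{a,b}$ at the endpoints compensates the blow-up of $1/\bar C^j(y)^2$. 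This is the step I expect to be the most delicate part of the argument.
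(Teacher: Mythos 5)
Your high-level structure mirrors the paper's: establish an equi-Lipschitz bound on the predictors, approximate by linear pieces outside the cluster set via a Taylor argument with the second-derivative bound of Theorem~\ref{mainT0}, and handle the interior of the cluster set via Lipschitz continuity plus the measure bound. The ``main obstacle'' you flag at the end is, however, exactly where your argument breaks, and the paper resolves it with an idea your proposal does not contain.

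Concretely: the Green's-function integral you set up, $\int_a^b |G_{a,b}(x,y)|/\bar C^j(y)^2\,\mathrm{d}y$, genuinely diverges whenever the critical point $x_c$ lies in or near $[a,b]$ and $\gamma_2$ (or $\gamma_4$) degenerates, because $\bar C^j(y)^{-2}\sim (y-x_c)^{-4}$ while $G_{a,b}$ vanishes only linearly at the endpoints; the ``compensation'' you hope for does not occur (the endpoint zero of $G$ cannot beat a fourth-order pole). The case analysis over the configurations in Figure~\ref{fig:cluster_set_example} cannot rescue this, since nothing prevents $x_c$ from lying strictly inside a clean region with $\gamma_2\to 0$ along the limiting sequence. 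The paper's fix is not to bound that integral at all. Instead it \emph{enlarges} the cluster set: when $\gamma_2$ (resp.\ $\gamma_4$) is not bounded below, it defines $\mathrm{ext}_\delta(\Omega^j)$ as the $\delta$-neighborhood of $\Omega^j\cup\{x_c\}$, and then works on $I_j\setminus\bar\Omega^j_{\mathrm{ext}}$. There one has $|x-x_c|\geq\delta$, hence $\bar C^j(x)\geq\gamma_1\delta^2\geq\varepsilon\delta^2$ by Lemma~\ref{welldefquad}, so the Taylor remainder is bounded uniformly by $\mathcal O(1/(m\lambda)+\delta^{-4}/(\beta\lambda^{7/4}))$, which vanishes for each fixed $\delta$. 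The enlarged cluster set has measure $\mathcal O(\delta + e^{\beta K_2}/m^2)$ and is handled by the equi-Lipschitz bound; $\delta$ is sent to $0$ afterwards. You have correctly located the hard step, but not supplied its resolution; without the $\delta$-extension device (or an equivalent truncation of the singularity) the Taylor step does not go through.

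Two smaller points. First, the paper derives an equi-Lipschitz constant that is $\mathcal O(1)$ uniformly in $(\tau,m,\beta,\lambda)$ from Lemma~\ref{unifboundM_rhomtau}, whereas your $\mathcal O(1/\lambda)$ bound still suffices (since $m\lambda\to\infty$ under the stated limits) but is weaker than what is available. Second, the paper does not pin $f_{m,\beta,\lambda}$ to interpolate $g_{m,\beta,\lambda}$ at knot locations as you do; it takes first-order Taylor expansions of $y_n$ at a point $\tilde x$ in each clean interval $B_i^j$ and then shows the pieces connect, which avoids committing to particular knot values before the approximation estimates are in place.
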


In words, Theorem \ref{mainT1} means that the solution resulting from the minimization of the free energy \eqref{eq:free_energy} approaches a piecewise linear function, as the noise vanishes. Let us highlight that our result tackles both the regularized case in which $\lambda$ approaches a fixed positive constant and the un-regularized one in which $\lambda$ vanishes (as long as its vanishing rate is sufficiently slow to ensure that $\beta\lambda^{7/4}\rightarrow\infty$). We also note that that the family $\{f_{m,\beta,\lambda}\}$ is well-behaved, i.e., on each linear region the function $f_{m,\beta,\lambda}$ has the following representation:
 $
 f_{m,\beta,\lambda} = u x + v$  for some $u, v \in \mathbb{R},
 $
 and the coefficients $|u|,|v|$ are uniformly bounded in $(m,\beta,\lambda)$.
 
The proof of Theorem \ref{mainT1} crucially relies on the fact that the second moment of $\rho^{*}_{\tau,m}$ is uniformly bounded along the sequence $\beta\lambda^{7/4}\rightarrow\infty$. In fact, the uniform bound on the second moment implies that the \emph{first} derivatives of the predictors w.r.t. the input are uniformly bounded (even for points \textit{inside} the cluster set), and therefore the sequence of predictors is equi-Lipschitz. This, in particular, allows us to show that the limit is well-behaved, as function changes can be controlled via Lipschitz bounds.

 Let us clarify that Theorem \ref{mainT1} does not establish the uniqueness of the limit in $(m, \beta, \lambda)$, i.e., that the limiting piecewise linear function is the same regardless of the subsequence. Our numerical results reported in Figures \ref{fig:intro}, \ref{subfig:plots2}, \ref{fig:6} and \ref{fig:7} suggest that the limit is unique. However, a typical line of argument (see e.g. \cite{jordan1998variational}) would require the lower-semicontinuity of the free energy (which does not hold for $m=\infty$). Furthermore, even the uniqueness of the minimizer for $\beta=\infty$ remains unclear in our setup.
 \rev{Nevertheless, let us point out that the sequence $\{\rho^{*}_{\tau,m}\}$ is tight, since the second moments are uniformly bounded by Lemma \ref{unifboundM_rhomtau}, and Proposition 2.3 in \cite{hu2021mean} suggests that at least the cluster points of the sequence $\{\rho^{*}_{\tau,m}\}$ as $\beta\rightarrow\infty$ coincide with the set of minimizers of the limiting objective ($\beta=\infty$). Another piece of evidence comes from the fact that the annealed dynamics converges to the minimizers of the noiseless objective \cite{chizat2022mean}.}
 We leave for future work the resolution of these issues. 

We remark that providing a quantitative bound on the parameter $\tau$ appears to be challenging. The current analysis relies on a dominated convergence argument which does not lead to an explicit convergence rate. Obtaining such a rate requires understanding the trade-off between the terms in the free energy (\ref{eq:free_energy}) for varying $\tau$, and it is also left for future work. 
 
Finally, by combining Theorem \ref{mainT1} with the mean-field analysis in \cite{mei2018mean}, we obtain the desired result on finite-width networks trained via noisy SGD in the low temperature regime.

\begin{corollary}[Noisy SGD solution is increasingly more piecewise linear]\label{mainT2} Assume that condition \textbf{A1} holds and that $m > e^{K_1\beta}$, where $K_1 > 0$ is a constant independent of $(\tau,m,\beta,\lambda)$. Let $\rho_0$ be absolutely continuous and $K_0$ sub-Gaussian, where $K_0 > 0$ is some numerical constant. Assume also that $M(\rho_0)<\infty$ and $H(\rho_0) > -\infty$. Let $\sigma^{*}(x,\btheta) = a^{\tau,m}(w^{\tau,m}x + b)^{m}_{\tau}$ be the activation function, and let $\btheta^k$ be obtained by running $k=\floor*{t/\varepsilon}$ steps of the noisy SGD algorithm (\ref{eq:SGD}) with data $(\tilde{x}_k,\tilde{y}_k)_{k\geq 0}\sim_{\rm i.i.d.} \mathbb P$ and initialization $\rho_0$. Then, given a set of prediction intervals $\mathcal{I}$, there exists a family of admissible piecewise linear solutions $\{f_{m,\beta,\lambda}\}$ as per Definition \ref{def:apwlsolution}, such that, for any $I\in\mathcal{I}$ and $x \in I$, the following convergence result holds almost surely:
$$
\lim_{\substack{\beta\lambda^{7/4}\rightarrow+\infty}}  \lim_{\tau\rightarrow +\infty} \lim_{t\rightarrow+\infty}\lim_{\substack{\varepsilon\rightarrow0\\N\rightarrow\infty}}\Bigg|f_{m,\beta,\lambda}(x) - \frac{1}{N}\sum_{i=1}^N \sigma^{*}(x,\btheta^k_i)\Bigg| = 0,
$$
where the limit in $N, \varepsilon$ is taken along any subsequence $\{(N, \varepsilon = \varepsilon_N)\}$ with $N/\log\left(N/\varepsilon_N\right)\rightarrow\infty$ and $\varepsilon_N\log\left(N/\varepsilon_N\right)\rightarrow 0$.
\end{corollary}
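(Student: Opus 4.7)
The plan is to combine Theorem \ref{mainT1} with the two mean-field convergence results recalled in Section \ref{section:background} via a triangle inequality chained along the nested limits. Writing
$$
\left|f_{m,\beta,\lambda}(x) - \frac{1}{N}\sum_{i=1}^N \sigma^{*}(x,\btheta_i^k)\right| \leq T_1 + T_2 + T_3,
$$
with
$$
T_1 = \left|f_{m,\beta,\lambda}(x) - \int \sigma^{*}(x,\btheta)\rho^{*}_{\tau,m}(\btheta)\mathrm{d}\btheta\right|,
$$
$$
T_2 = \left|\int \sigma^{*}(x,\btheta)\rho^{*}_{\tau,m}(\btheta)\mathrm{d}\btheta - \int \sigma^{*}(x,\btheta)\rho_t(\btheta)\mathrm{d}\btheta\right|,
$$
$$
T_3 = \left|\int \sigma^{*}(x,\btheta)\rho_t(\btheta)\mathrm{d}\btheta - \frac{1}{N}\sum_{i=1}^N \sigma^{*}(x,\btheta_i^k)\right|,
$$
I would show that each of these terms vanishes in its corresponding regime, treating the limits from the inside out.

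For $T_3$, I would verify the hypotheses of the mean-field convergence theorem in \cite{mei2018mean}. By construction (cf.~Section \ref{section:truncationact}), the activation $\sigma^{*}(x,\btheta)=a^{\tau,m}(w^{\tau,m}x+b)^{m}_{\tau}$ is bounded and Lipschitz in $\btheta$ with Lipschitz gradient, and the initialization $\rho_0$ is absolutely continuous, sub-Gaussian, with finite second moment and finite differential entropy. Thus the propagation-of-chaos result applies: along any subsequence $(N\to\infty,\varepsilon_N\to 0)$ satisfying the stated growth conditions, $\hat{\rho}^{(N)}_{\lfloor t/\varepsilon_N\rfloor}\rightharpoonup\rho_t$ almost surely. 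Because $\btheta\mapsto\sigma^{*}(x,\btheta)$ is bounded and continuous for each fixed $x$, weak convergence of the empirical measures lifts to pointwise convergence of the network output, so $T_3\to 0$ almost surely.

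For $T_2$, the global convergence of the Wasserstein gradient flow (also from \cite{mei2018mean}) gives $\rho_t\rightharpoonup\rho^{*}_{\tau,m}$ as $t\to\infty$, where $\rho^{*}_{\tau,m}$ is the unique Gibbs minimizer of $\mathcal{F}^{\tau,m}$. The same bounded–continuous test-function argument then yields $T_2\to 0$ in this limit. Finally, Theorem \ref{mainT1} directly supplies the family $\{f_{m,\beta,\lambda}\}$ of admissible piecewise linear solutions and the convergence $T_1\to 0$ along $\tau\to\infty$ followed by $\beta\lambda^{7/4}\to\infty$. Chaining the three conclusions in the order specified by the statement of the corollary completes the proof.

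The main subtlety is not technical but organizational: each inner limit must be evaluated with the outer parameters held fixed, and the almost-sure convergence of $T_3$ must be propagated through the subsequent deterministic limits in $t$, $\tau$, and $\beta\lambda^{7/4}$. Since the almost-sure convergence holds on a single probability-one event and the outer three limits are deterministic, the overall statement indeed holds almost surely. A secondary point is verifying that \textbf{A1} together with $m>e^{K_1\beta}$ continues to hold throughout the nested limit, which is already encoded in the quantifier structure of the statement.
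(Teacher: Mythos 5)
Your proof is correct and takes a genuinely different, more direct route than the paper's. You treat Theorem~\ref{mainT1} as a black box and chain it to the mean-field convergence results via a triangle inequality on the \emph{predictor} itself, exploiting that for fixed $(\tau,m)$ the test function $\btheta\mapsto\sigma^{*}(x,\btheta)$ is bounded and continuous, so weak convergence of measures ($\hat\rho^{(N)}_k\rightharpoonup\rho_t$ and $\rho_t\rightharpoonup\rho^{*}_{\tau,m}$) lifts immediately to pointwise convergence of the network output. The paper instead re-establishes the structural argument of Theorem~\ref{mainT1} at the level of the SGD iterates: it decomposes the \emph{second derivative} of the empirical predictor as in \eqref{decompose1}, shows this converges (via weak convergence of $\rho_t$ and of $\hat\rho^{(N)}_k$) to the second derivative of the Gibbs predictor, separately establishes almost-sure equi-Lipschitz control on the empirical predictor via the uniform bound on the second moment of $\rho_t$, and then repeats the Taylor/Lipschitz gluing from the proof of Theorem~\ref{mainT1}. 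Your decomposition into $T_1+T_2+T_3$ buys simplicity: the existence of the family $\{f_{m,\beta,\lambda}\}$ and the vanishing of $T_1$ are exactly Theorem~\ref{mainT1}, and $T_2,T_3$ are each handled by a single application of a bounded–continuous test-function argument. Two small details you gloss over that the paper spells out: (i) to apply the long-time convergence $\rho_t\rightharpoonup\rho^{*}_{\tau,m}$ (Theorem 4 of \cite{mei2018mean}) one must check $\mathcal{F}^{\tau,m}(\rho_0)<\infty$; this follows from $M(\rho_0)<\infty$, $H(\rho_0)>-\infty$, and the sandwich bound \eqref{eq:sandwich} via Cauchy–Schwarz, as the paper notes; (ii) your remark that the a.s.\ event can be taken uniformly across the outer deterministic limits should really restrict the $t\to\infty$ limit to a countable sequence (or use monotonicity/continuity in $t$) so that a single probability-one set suffices — the paper is also silent on this, so it is not a defect relative to the paper's argument.
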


In words, Corollary \ref{mainT2} means that, at convergence, the estimator implemented by a wide two-layer ReLU network approaches a piecewise linear function, in the regime of vanishingly small noise. In fact, as $\tau, m\to\infty$, the activation function $\sigma^{*}(x,\btheta) = a^{\tau,m}(w^{\tau,m}x + b)^{m}_{\tau}$ converges pointwise to the ReLU activation $a(wx+b)_+$. We also remark that our result holds for any initialization of the weights of the network, as long as some mild technical conditions are fulfilled (absolute continuity, sub-Gaussian tails, finite second moment and entropy).

\rev{Let us clarify some technical aspects of  the statement of Corollary \ref{mainT2}. The result holds for a particular sequence of minimizers, since some of the limits ($t\rightarrow\infty$, $(N,\varepsilon^{-1})\rightarrow\infty$, and $\beta\rightarrow\infty$) are not interchangeable.
Furthermore, it appears to be difficult to prove the same statement directly for the noiseless case ($\beta=\infty$). We also point out that 
the stochasticity of the gradient descent algorithm does not play a role in our analysis, since its impact is seen to be inconsequential by the usual concentration argument \cite{mei2018mean} when passing to its non-stochastic counterpart.
}

As concerns the limit in $t$, describing the dependence of the mixing time of the diffusion dynamics \eqref{eq:PDE} on the temperature parameter $\beta$ is a cumbersome task. In particular, \cite{gayrard2004metastability} show that an exponentially bad dependence could occur if the target function has multiple small risk regions. \rev{However, some recent studies show an exponentially fast convergence of the noisy dynamics under some reasonable but particular conditions on the objective landscape \cite{chizat2022mean, nitanda2022convex}.}

As concerns the limit in $(N, \varepsilon)$, the analyses in \cite{mei2018mean,mei2019mean} lead to an upper bound on the error term that, with probability at least $1-e^{-z^2}$, is given by
\begin{equation}\label{mei_coupling}
    C e^{Ct} \sqrt{1/N \lor \varepsilon} \cdot \left[\sqrt{1 + \log (N (t/\varepsilon \lor 1))}+z\right],
\end{equation}
where $a\lor b$ denotes the maximum between $a$ and $b$. The exponential dependence of \eqref{mei_coupling} in the time $t$ of the dynamics is a common drawback of existing mean-field analyses, and improving it is an open problem which lies beyond the scope of this work. Let us conclude by mentioning that the numerical results presented in Section \ref{section:numsim} suggest that, in practical settings, the convergence to the limit occurs rather quickly in the various parameters.

\section{Proof of the Main Results}\label{section:proofsketch}

\subsection{Roadmap of the Argument}

We start by providing an informal outline of the proof for the main statements. In Section \ref{selected_proofs}, we show that, in the low temperature regime, the curvature of the predictor evaluated at the Gibbs distribution $\rho^{*}_{\tau,m}$ vanishes everywhere except at a small neighbourhood of at most three points per prediction interval $I_j \in \mathcal{I}$ (Theorem \ref{mainT0}). This is done in a few steps. First, in Lemma \ref{convdelta}, we show that, as $\tau\rightarrow\infty$, the density
$\rho^{*}_{\tau,m}$ acts similarly to a delta distribution supported on the lower-dimensional linear subspace $\{b\in\mathbb{R}: b=-w^mx\}$, namely,
\begin{equation}\label{eq:step1}
\lim_{\tau\rightarrow\infty}\frac{\partial^2}{\partial x^2}\int a^{\tau,m}(w^{\tau,m}x + b)^{m}_{\tau}\rho^{*}_{\tau,m}(\btheta)\mathrm{d}\btheta \approx \int a^m (w^m)^2\rho^{*}_m(a,w,-w^mx) \mathrm{d}a\mathrm{d}w.
\end{equation}
To do so, in Lemma \ref{convmin} we prove that, as $\tau\rightarrow\infty$, the sequence $\rho^{*}_{\tau,m}(\btheta)$ of  minimizers of the free energy $\mathcal F^{\tau, m}$ converges pointwise for all $\btheta$  to a minimizer $\rho^{*}_m(\btheta)$ of the free energy $\mathcal F^m$ with truncated ReLU activation. Then, a dominated convergence argument allows us to obtain \eqref{eq:step1}. Next, in Lemma \ref{UBintegral} we show that, as $\beta\rightarrow\infty$, the absolute value of the integral
\begin{equation}\label{sketch_delta}
    \int a^m (w^m)^2\rho^{*}_m(a,w,-w^mx) \mathrm{d}a\mathrm{d}w
\end{equation}
can be made arbitrary small for all $x$ except those in the cluster set. The idea is that the absolute value of (\ref{sketch_delta}) can be bounded by a certain Gaussian integral, and the corresponding covariance matrix is well-defined everywhere except in the cluster set (see Lemmas \ref{welldefquadgeneric} and \ref{welldefquad}). The definition of the cluster set (see \eqref{eq:clustbeg}-\eqref{def:Om}) together with the fact that the partition function of $\rho^{*}_m$ is uniformly bounded in $m$ (see Lemma \ref{unifboundZm}) allows us to show that the cluster set concentrates on at most three points per interval as $\beta\rightarrow\infty$.

In Section \ref{subsec:pfth2}, we show that the predictor evaluated at the Gibbs distribution $\rho^{*}_{\tau,m}$ can be approximated arbitrarily well by an admissible piecewise linear solution (Theorem \ref{mainT1}).
First, via a Taylor argument, since the curvature vanishes, the estimator can be approximated by a linear function on each interval of $\mathcal{I} \setminus \Omega(m,\beta,\lambda)$. Since the cluster set vanishes concentrating on at most three points per prediction interval, the predictor converges to an admissible piecewise linear solution. However, there is one technical subtlety to consider before reaching this conclusion. Namely, we must consider the possibility that the sequence of predictors experiences unbounded oscillations inside the cluster set, which might ultimately result in a discontinuous limit. Fortunately, this scenario is ruled out because the sequence $\rho^{*}_{\tau,m}$ has uniformly bounded second moments. This fact in conjunction with the structure of the \emph{first} derivative of the predictor yields the conclusion that the sequence of predictors is equi-Lipschitz, and therefore the limit is well-behaved.

Finally, the proof of Corollary \ref{mainT2} follows from similar arguments together with the application of the result established in \cite{mei2018mean}. More specifically, first, the truncation of the parameter $w$ ensures that, as $t\rightarrow\infty$, the curvature of the predictor evaluated on the solution $\rho_t$ of the flow (\ref{eq:PDE}) converges pointwise in $x$ to the corresponding evaluation on the Gibbs distribution $\rho^{*}_{\tau,m}$. Next, following \cite{mei2018mean}, we couple the weights obtained after $\floor*{t/\varepsilon}$ steps of the SGD iteration \eqref{eq:SGD} with $N$ i.i.d. particles with distribution $\rho_t$, thus obtaining that the curvature of the SGD predictor converges to the curvature of the flow predictor. By using this coupling again, together with the fact that along the trajectory of the flow $M(\rho_t) < C$ (see \cite{mei2018mean} or \cite{jordan1998variational}), we obtain a uniform bound on the second moment of the empirical distribution $\hat{\rho}^{N}_{\floor*{t/\varepsilon}}$ of the SGD weights. The final result then follows from the same Lipschitz argument described above.

\subsection{Proof of Theorem \ref{mainT0}}\label{selected_proofs}

Let us start with the proof of the vanishing curvature phenomenon. The quantity
\begin{equation}\label{quant1}
    \frac{\partial^2}{\partial x^2}\int a^{\tau,m}(w^{\tau,m}x + b)^{m}_{\tau}\rho^{*}_{\tau,m}(\btheta)\mathrm{d}\btheta
\end{equation}
is hard to analyze directly due to the presence of the $\tau$-smoothing in the soft-plus activation. However, the structure of the activation $(\cdot)_{\tau}^m$ alongside with the pointwise convergence of the minimizers $\rho^{*}_{\tau,m}$ to $\rho^{*}_m$ (cf. Lemma \ref{convmin}) allows us to infer the properties of (\ref{quant1}) through the analysis of the auxiliary object:
\begin{equation}\label{quant2}
    \int a^m (w^m)^2\rho^{*}_m(a,w,-w^mx) \mathrm{d}a\mathrm{d}w.
\end{equation}
Formally, we show that the approximation result below holds.
\begin{lemma}[Convergence to delta]\label{convdelta} Assume that condition \textbf{A1} holds.
Let $\rho^{*}_{\tau,m}$ and $\rho^{*}_{m}$ be the minimizers of the free energy for truncated softplus and ReLU activations, respectively, as defined in (\ref{eq:rhodensitytm})-(\ref{eq:rhodensitytm2}). Then, 
$$
\lim_{\tau \rightarrow \infty}\left|\frac{\partial^2}{\left(\partial x\right)^2}\int a^{\tau,m}\left[ (w^{\tau,m}x+b)^m_{\tau}\right]\rho^{*}_{\tau,m}(\btheta)  \mathrm{d}\btheta - \int a^m (w^m)^2\rho^{*}_m(a,w,-w^mx) \mathrm{d}a\mathrm{d}w\right| \le \frac{C}{m\lambda},
$$
where $C$ is a constant independent of $(m, \tau, \beta, \lambda)$. 
\end{lemma}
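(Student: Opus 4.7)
My plan is to differentiate under the integral sign and exploit the distributional convergence of the truncated-softplus second derivative to a Dirac mass. Writing $g_{\tau,m}(u) := (u)^m_\tau$, two applications of the chain rule give
\begin{equation*}
\frac{\partial^2}{\partial x^2}\int a^{\tau,m}(w^{\tau,m}x+b)^m_\tau\, \rho^*_{\tau,m}(\btheta)\, \mathrm{d}\btheta \;=\; \int a^{\tau,m}(w^{\tau,m})^2\, g_{\tau,m}''(w^{\tau,m}x+b)\, \rho^*_{\tau,m}(\btheta)\, \mathrm{d}\btheta.
\end{equation*}
As $\tau\to\infty$, $g_{\tau,m}(u)\to (u)^m_+$ pointwise, whose second (distributional) derivative equals $\delta_0(u)$ plus a bounded correction $\phi_m''(u)\mathbbm{1}_{u>m^2}$ supported in the truncation tail. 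Combined with the pointwise convergence $a^{\tau,m}\to a^m$ and $w^{\tau,m}\to w^m$, this suggests that the integral concentrates on the hyperplane $\{b = -w^m x\}$, yielding the claimed limiting expression.

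To make this rigorous, I would perform the change of variables $u = w^{\tau,m}x + b$ in the $b$-integral and invoke the pointwise convergence $\rho^*_{\tau,m}\to\rho^*_m$ from Lemma \ref{convmin} together with a dominated convergence argument, to conclude that the $u$-integral against $g_{\tau,m}''$ converges to the evaluation at $u=0$ of $\rho^*_m(a,w,u-w^m x)$, which is precisely $\rho^*_m(a,w,-w^m x)$. This produces the main term $\int a^m(w^m)^2\rho^*_m(a,w,-w^m x)\,\mathrm{d}a\,\mathrm{d}w$ of the lemma, and a residual
\begin{equation*}
\mathcal{E}_m(x) := \int a^m(w^m)^2\,\phi_m''(w^m x+b)\,\mathbbm{1}_{w^m x+b>m^2}\,\rho^*_m(a,w,b)\,\mathrm{d}a\,\mathrm{d}w\,\mathrm{d}b.
\end{equation*}
Bounding $\mathcal{E}_m(x)$ uses $|\phi_m''|\leq 1/m^2$, the truncation estimates $|a^m|\leq m$ and $(w^m)^2\leq m^2$, together with the Gibbs form \eqref{eq:rhodensitytm2} of $\rho^*_m$: the pre-factor $\exp(-\beta\lambda\|\btheta\|^2/2)$ gives Gaussian decay with variance $\sim 1/(\beta\lambda)$, while the boundedness of the risk term in the exponent controls the remaining factor. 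Collecting these contributions and integrating produces the desired $C/(m\lambda)$ upper bound, where the $1/\lambda$ factor reflects moment scaling of the Gibbs tails and the $1/m$ factor comes from the truncation scale.

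The principal technical difficulty is the rigorous justification of the limit interchange, since $g_{\tau,m}''$ concentrates into a measure of infinite amplitude as $\tau\to\infty$. A clean way to control this is to split $g_{\tau,m}''$ into its near-zero bump and its truncation-region piece; the bump integrates to a value approaching $1$ and acts on $\rho^*_{\tau,m}(a,w,u-w^{\tau,m}x)$ via a Taylor expansion around $u=0$, producing $\rho^*_{\tau,m}(a,w,-w^{\tau,m}x)$ up to an $O(1/\tau^2)$ remainder that is uniform in $(a,w)$ thanks to the smoothness of $\rho^*_{\tau,m}$ in $b$. Passing to the limit then reduces to applying the pointwise convergence of the minimizers. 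To license dominated convergence at this step, a uniform-in-$\tau$ upper envelope for $\rho^*_{\tau,m}$ is required: this is extracted from the Gibbs form \eqref{eq:rhodensitytm} by combining a uniform lower bound on $Z_{\tau,m}(\beta,\lambda)$ (obtainable from the uniform boundedness of the activation by $m^2$, which bounds the risk term in the exponent) with the $\exp(-\beta\lambda\|\btheta\|^2/2)$ majorant. The uniform second-moment control from Lemma \ref{unifboundM_rhomtau} supplies the tightness needed to pass to the limit in the factor $(w^{\tau,m})^2$ as well.
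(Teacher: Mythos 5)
Your proposal follows essentially the same route as the paper: differentiate under the integral sign, split the activation's second derivative into a near-zero bump plus a truncation-tail correction, invoke the pointwise convergence $\rho^*_{\tau,m}\to\rho^*_m$ from Lemma~\ref{convmin} together with a Gibbs-form envelope to apply dominated convergence, and control the tail by the $|\phi''_{\tau,m}|\le 1/m^2$ bound. The change of variables $u=w^{\tau,m}x+b$ (the paper uses $y=\tau(w^{\tau,m}x+b)$, so that the weight $e^y/(e^y+1)^2$ integrates to one) and the identification of the limiting hyperplane $\{b=-w^m x\}$ are the same key ideas.

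There is, however, an imprecision in your tail estimate that would make the step fail as written. You bound $(w^m)^2\le m^2$; combined with $|a^m|\le m$ and $|\phi''_m|\le 1/m^2$ this makes the integrand in $\mathcal E_m(x)$ of order $m$, not $1/(m\lambda)$. As written, the only way to recover the claimed bound from that estimate would be to show that the truncation region $\{w^m x+b>m^2\}$ has mass at most $O(1/(m^2\lambda))$ under $\rho^*_m$ (which is in fact exponentially small thanks to the $\exp(-\beta\lambda\|\btheta\|^2/2)$ factor, since the region forces $|b|\gtrsim m^2$), but you do not carry out that estimate. The paper sidesteps this entirely by keeping the sharper truncation inequality $(w^{\tau,m})^2\le w^2$, dropping the indicator, and invoking the second-moment bound $M(\rho^*_m)\le C''/\lambda$ from Lemma~\ref{limitrho_mtau}, which immediately gives $\frac{1}{m}\int w^2\rho^*_m\,\mathrm d\btheta\le C''/(m\lambda)$. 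Note also that the $1/\lambda$ factor comes from that free-energy-based moment bound (Lemma~\ref{limitrho_mtau}, part~1), not directly from the Gaussian variance $1/(\beta\lambda)$ of the $\ell_2$ prefactor as you suggest (the partition function normalization makes a direct Gaussian-tail reading misleading), and not from Lemma~\ref{unifboundM_rhomtau}, which gives a $\lambda$-independent bound valid only for $\tau>\tau(m,\beta,\lambda)$.
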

\begin{proof}[Proof of Lemma \ref{convdelta}] First, we show that
\begin{align}\label{eq:intclaim}
 \begin{split}
    \lim_{\tau \rightarrow \infty}\Bigg|\int a^{\tau,m}&\left[\frac{\partial^2}{\left(\partial x\right)^2} (w^{\tau,m}x+b)^m_{\tau}\right]\rho^{*}_{\tau,m}(\btheta)  \mathrm{d}\btheta \\ &-  \int a^m (w^m)^2\rho^{*}_m(a,w,-w^mx) \mathrm{d}a\mathrm{d}w\Bigg| \le \frac{C}{m\lambda}.
 \end{split}
\end{align}
Recall the definition of the activation $(\cdot)^{\tau}_m$ provided in (\ref{act_taum}). We can decompose the integral into two pieces with respect to the domain of truncation and obtain
\begin{align}\label{decompose1}
    &\int a^{\tau,m}\left[\frac{\partial^2}{(\partial x)^2}(w^{\tau,m}x+b)_{\tau}^m\right] \rho^{*}_{\tau,m}(\btheta)\mathrm{d}\btheta  \nonumber\\ =& \int_{w^{\tau,m}x+b\leq x_m} a^{\tau,m}\left[\frac{\partial^2}{(\partial x)^2}(w^{\tau,m}x+b)_{\tau}\right] \rho^{*}_{\tau,m}(\btheta)\mathrm{d}\btheta\  \nonumber\\
    +& \int_{w^{\tau,m}x+b > x_m} a^{\tau,m}(w^{\tau,m})^2\left[\frac{\partial^2}{(\partial u)^2}\phi_{\tau,m}(u)\Bigg|_{u=w^{\tau,m}x+b}\right] \rho^{*}_{\tau,m}(\btheta)\mathrm{d}\btheta.
\end{align}
Let us focus on the first term in the RHS of (\ref{decompose1}). The second derivative has the following form
$$
\frac{\partial^2}{\left(\partial x\right)^2} (w^{\tau,m}x+b)_{\tau} = (w^{\tau,m})^2 \cdot \frac{\tau  e^{\tau(w^{\tau,m}x+b)}}{\left(e^{\tau(w^{\tau,m}x+b)} + 1\right)^2} > 0.$$
Thus, the following chain of equalities holds
\begin{align*}
    &\int_{w^{\tau,m}x+b\leq x_m} a^{\tau,m} \left[\frac{\partial^2}{\left(\partial x\right)^2} (w^{\tau,m}x+b)_{\tau}\right] \rho^{*}_{\tau,m}(\btheta)\mathrm{d}\btheta  \\ &=\int_{w^{\tau,m}x+b\leq x_m}  a^{\tau,m}(w^{\tau,m})^2 \cdot \frac{\tau  e^{\tau(w^{\tau,m}x+b)}}{\left(e^{\tau(w^{\tau,m}x+b)} + 1\right)^2}\rho^{*}_{\tau,m}(\btheta) \mathrm{d}\btheta  \\ &=\int \mathbbm{1}_{\{y\leq \tau x_m\}} \cdot  a^{\tau,m}(w^{\tau,m})^2 \cdot \frac{ e^{ y}}{\left(e^{y} + 1\right)^2} \rho^{*}_{\tau,m}\left(a,w,\frac{y}{\tau}-w^{\tau,m}x\right) \mathrm{d}(a,w,y),
\end{align*}
where in the last step we have performed the change of variables $y=\tau(w^{\tau,m}x+b)$.
By Lemma \ref{convmin}, we have that, as $\tau\to\infty$, $\rho^{*}_{\tau,m}(\btheta)$ converges to $\rho^{*}_m(\btheta)$ pointwise in $\btheta$. Furthermore, as $\tau\to\infty$, $a^{\tau,m}$ converges to $a^{m}$ for any $a$, and $w^{\tau,m}$ converges to $w^{m}$ for any $w$. Thus, as the
Gibbs distributions $\rho^{*}_{\tau,m}(\btheta)$ and $\rho^{*}_m(\btheta)$ are continuous with respect to $\btheta$, we have that
\begin{align*}\label{convdelta1}
    &\lim_{\tau\rightarrow \infty} \left[\mathbbm{1}_{\{y\leq \tau x_m\}} \cdot a^{\tau,m}(w^{\tau,m})^2 \cdot \frac{ e^{ y}}{\left(e^{y} + 1\right)^2} \rho^{*}_{\tau,m}\left(a,w,\frac{y}{\tau}-w^{\tau,m}x\right)\right] \\ &=a^{m}(w^m)^2 \cdot \frac{ e^{ y}}{\left(e^{y} + 1\right)^2} \rho^{*}_m\left(a,w,-w^mx\right).
\end{align*}
Furthermore, combining (\ref{Zmt_bound}) and (\ref{eq:bd1}) from Lemma \ref{limitrho_mtau}, we get the following bound
\begin{equation}\label{rhobound1}
  \rho^{*}_{\tau,m}(\btheta) \leq C' \exp\left(-\frac{\beta\lambda \|\btheta\|^2_2}{2}\right),
\end{equation}
for some constant $C' > 0$ independent of $\btheta$ and $\tau$. Thus, we have
\begin{align*}
    &|a^{\tau,m}|(w^{\tau,m})^2 \cdot \frac{ e^{ y}}{\left(e^{y} + 1\right)^2} \rho^{*}_{\tau,m}\left(a,w,\frac{y}{\tau}-w^{\tau,m}x\right) \\ \leq &C' m^3 \cdot \frac{ e^{ y}}{\left(e^{y} + 1\right)^2} \cdot \exp\left(-\frac{\beta\lambda (a^2 + w^2)}{2}\right),
\end{align*}
which is integrable in $(y,a,w)$. 
Hence, by using the Dominated Convergence theorem and integrating out $y$ using Tonelli's theorem, we have
\begin{align}\label{eq:dec2}
 \begin{split}
\lim_{\tau\rightarrow\infty} \Bigg|\int_{w^{\tau,m}x+b\leq x_m} a^{\tau,m}&\left[\frac{\partial^2}{(\partial x)^2}(w^{\tau,m}x+b)_{\tau}\right] \rho^{*}_{\tau,m}(\btheta)\mathrm{d}\btheta\\ &- \int a^m (w^m)^2\rho^{*}_m(a,w,-w^mx) \mathrm{d}a\mathrm{d}w \Bigg| = 0.
\end{split}
\end{align}
Now, by triangle inequality, it remains to show that the absolute value of the second term in the RHS of (\ref{decompose1}) can be upper bounded by $\mathcal{O}\left(\frac{1}{m\lambda}\right)$ as $\tau\rightarrow\infty$. Recall that, by construction,
$$
|\phi''_{\tau,m}(x)|\leq \frac{1}{m^2}, \quad |a^{\tau,m}| \leq m, \quad |w^{\tau,m}| \leq |w|,
$$
for any $x>x_m$ and any $(a,w) \in\mathbb{R}^2$. Thus, the following upper bound holds
\begin{align}
    &\lim_{\tau\rightarrow\infty} \int_{w^{\tau,m}x+b > x_m} |a^{\tau,m}|(w^{\tau,m})^2\left|\frac{\partial^2}{(\partial u)^2}\phi_{\tau,m}(u)\Bigg|_{u=w^{\tau,m}x+b}\right| \rho^{*}_{\tau,m}(\btheta)\mathrm{d}\btheta \nonumber\\&\leq \frac{1}{m}\lim_{\tau\rightarrow\infty}\int w^2 \rho^{*}_{\tau,m}(\btheta)\mathrm{d}\btheta.\label{eq:dec3}
\end{align}
In addition, we have the following pointwise convergence of the integrand
$$
\lim_{\tau\rightarrow\infty} w^2 \rho^{*}_{\tau,m}(\btheta) = w^2 \rho^{*}_{m}(\btheta).
$$
Furthermore, by using (\ref{rhobound1}), we conclude that the integrand can be dominated by an integrable function. Hence, an application of the Dominated Convergence theorem gives that 
\begin{equation}\label{eq:dec4}
\frac{1}{m}\lim_{\tau\rightarrow\infty} \int w^2\rho^{*}_{\tau,m}(\btheta)\mathrm{d}\btheta = \frac{1}{m}\int w^2 \rho^{*}_{m}(\btheta)\mathrm{d}\btheta \leq \frac{C''}{m\lambda},
\end{equation}
where the last inequality follows from Lemma \ref{limitrho_mtau}, which gives that $M(\rho^{*}_{m})<C''/\lambda$ for some $C''>0$ that is independent of $(m,\lambda)$.
By combining (\ref{decompose1}), (\ref{eq:dec2}), (\ref{eq:dec3}) and (\ref{eq:dec4}), we conclude that (\ref{eq:intclaim}) holds. Finally, by using a standard line of arguments, i.e., Mean Value theorem and Dominated Convergence, the derivative can be pushed inside the integral sign, which finishes the proof.
\end{proof}

Next, we study the set on which (\ref{quant2}) might grow unbounded. In particular, in Lemma \ref{worstcasebound}, we provide an upper bound on the measure of the set $\overline{\Omega}_j(m,\beta,\lambda)$ defined in (\ref{clusterset})-(\ref{def:Om}). To do so, we will first show that the partition function of $\rho^{*}_m$ is uniformly bounded in $m$, as stated and proved below.

\begin{lemma}[Uniform bound on partition function]\label{unifboundZm}
Consider $\sigma^{*}(\btheta,x) = a^{\tau,m}(w^{\tau,m}x+b)^m_{\tau}$ or $\sigma^{*}(\btheta,x) = a^m(w^mx+b)^m_{+}$, and let $\rho^{*}_{\sigma^{*}}$ be the Gibbs distribution with activation $\sigma^{*}$. Then, the following upper bound holds for its partition function $Z_{\sigma^{*}}(\beta,\lambda)$:
$$
\ln Z_{\sigma^{*}}(\beta,\lambda) \leq \beta C + 1 + 3 \log \frac{8\pi}{\beta\lambda},
$$
where $C>0$ is a constant independent of $(m,\tau, \beta,\lambda)$. 
\end{lemma}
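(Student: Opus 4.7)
The plan is to exploit the fact that $\rho^{*}_{\sigma^{*}}$ is itself the Gibbs distribution, so that $\log Z_{\sigma^{*}}(\beta,\lambda)$ admits a clean representation in terms of the free energy $\mathcal{F}^{\sigma^{*}}(\rho^{*}_{\sigma^{*}})$ evaluated at the minimizer. Concretely, taking $\log$ of the Gibbs form (\ref{eq:rhodensitytm})--(\ref{eq:rhodensitytm2}), multiplying by $-\rho^{*}_{\sigma^{*}}$ and integrating, I obtain
\begin{equation*}
H(\rho^{*}_{\sigma^{*}}) = \log Z_{\sigma^{*}}(\beta,\lambda) + \beta \int \Psi_{\lambda}(\btheta,\rho^{*}_{\sigma^{*}})\,\rho^{*}_{\sigma^{*}}(\btheta)\,\mathrm{d}\btheta.
\end{equation*}
Inserting the explicit expression for $\Psi_{\lambda}$, using that $R^{\sigma^{*}}_i(\rho^{*}_{\sigma^{*}}) = (\widehat{y}_i - y_i)/M$ with $\widehat{y}_i := \int \sigma^{*}(x_i,\btheta)\rho^{*}_{\sigma^{*}}\mathrm{d}\btheta$, and expanding $R^{\sigma^{*}}(\rho^{*}_{\sigma^{*}}) = \frac{1}{M}\sum_i(\widehat{y}_i - y_i)^2$, a short algebraic manipulation collapses the cross terms and yields the identity
\begin{equation*}
\log Z_{\sigma^{*}}(\beta,\lambda) = -\beta\,\mathcal{F}^{\sigma^{*}}(\rho^{*}_{\sigma^{*}}) + \frac{\beta}{2M}\sum_{i=1}^M y_i^2 - \frac{\beta}{2M}\sum_{i=1}^M \widehat{y}_i^{\,2}.
\end{equation*}
Discarding the last (non-positive) term gives $\log Z_{\sigma^{*}}(\beta,\lambda) \le -\beta\,\mathcal{F}^{\sigma^{*}}(\rho^{*}_{\sigma^{*}}) + \beta C$ with $C := \frac{1}{2M}\sum_i y_i^2$, a constant depending only on the training labels.

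Next I would lower bound $\mathcal{F}^{\sigma^{*}}(\rho^{*}_{\sigma^{*}})$ in an activation-free way. Since $R^{\sigma^{*}}(\rho^{*}_{\sigma^{*}})\ge 0$,
\begin{equation*}
\mathcal{F}^{\sigma^{*}}(\rho^{*}_{\sigma^{*}}) \ge \tfrac{\lambda}{2}M(\rho^{*}_{\sigma^{*}}) - \beta^{-1} H(\rho^{*}_{\sigma^{*}}).
\end{equation*}
For an absolutely continuous measure on $\mathbb{R}^3$ with second moment $\mu$, the maximum differential entropy is attained by the centered isotropic Gaussian and equals $\frac{3}{2}\log(2\pi e\mu/3)$. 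Plugging this in and minimizing the resulting expression $g(\mu) = \frac{\lambda}{2}\mu - \frac{3}{2\beta}\log(2\pi e\mu/3)$ over $\mu>0$, the minimum is attained at $\mu^{\star}=3/(\beta\lambda)$ and evaluates to $\frac{3}{2\beta}\log(\beta\lambda/(2\pi))$. Therefore
\begin{equation*}
-\beta\,\mathcal{F}^{\sigma^{*}}(\rho^{*}_{\sigma^{*}}) \le \tfrac{3}{2}\log\!\big(2\pi/(\beta\lambda)\big),
\end{equation*}
and combining with the identity above yields $\log Z_{\sigma^{*}}(\beta,\lambda) \le \beta C + \frac{3}{2}\log(2\pi/(\beta\lambda))$. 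The form stated in the lemma then follows by absorbing the constants (note that $\frac{3}{2}\log(2\pi/(\beta\lambda)) \le 1 + 3\log(8\pi/(\beta\lambda))$ under assumption \textbf{A1}, since $\beta\lambda$ is then bounded away from extreme values by $C_3$ and the assumption $\beta\ge 1/\lambda$).

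The main obstacle is the self-consistent nature of the Gibbs potential: $\Psi_{\lambda}(\cdot,\rho^{*}_{\sigma^{*}})$ depends on the unknown $\rho^{*}_{\sigma^{*}}$ through the residuals $R^{\sigma^{*}}_i$, so a naive direct integration bound on $Z_{\sigma^{*}}$ requires controlling $\|R\|_1$, which is itself a function of $\rho^{*}_{\sigma^{*}}$. The variational identity above sidesteps this circularity cleanly: the contribution of the predictions $\widehat{y}_i$ only enters through a manifestly non-positive term $-\frac{\beta}{2M}\sum_i \widehat{y}_i^{\,2}$ that is simply dropped. Moreover, the lower bound on $\mathcal{F}^{\sigma^{*}}(\rho^{*}_{\sigma^{*}})$ uses nothing about the activation beyond $R^{\sigma^{*}}\ge 0$ and the max-entropy inequality on $\mathbb{R}^3$, which is what makes the argument simultaneously valid for the smooth-truncated softplus $a^{\tau,m}(w^{\tau,m}x+b)^m_{\tau}$ and the (non-smooth) truncated ReLU $a^m(w^m x+b)^m_{+}$.
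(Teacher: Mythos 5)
Your core argument mirrors the paper's exactly: you exploit the same variational identity relating $\log Z_{\sigma^*}(\beta,\lambda)$ to $-\beta\mathcal{F}^{\sigma^*}(\rho^*_{\sigma^*})$ plus the data term $\frac{\beta}{2M}\sum_i y_i^2 - \frac{\beta}{2M}\sum_i \widehat{y}_i^2$ (the paper writes it as $\mathcal{F}^{\sigma^*}(\rho)=-\frac{1}{\beta}\ln Z_{\sigma^*} - \frac{1}{2M}\sum_i\widehat{y}_i^2 + \frac{1}{2M}\sum_i y_i^2$, which is the same identity). Where you diverge is in the free-energy lower bound: the paper simply cites Lemma 10.2 of \cite{mei2018mean}, whereas you re-derive a lower bound by the max-entropy argument, which is clean and self-contained. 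That re-derivation yields $\mathcal{F}^{\sigma^*}(\rho^*_{\sigma^*}) \ge -\frac{3}{2\beta}\log\frac{2\pi}{\beta\lambda}$ and hence $\log Z_{\sigma^*} \le \beta C + \frac{3}{2}\log\frac{2\pi}{\beta\lambda}$.

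The gap is in your last sentence. You claim that $\frac{3}{2}\log\frac{2\pi}{\beta\lambda} \le 1 + 3\log\frac{8\pi}{\beta\lambda}$ follows from \textbf{A1} because ``$\beta\lambda$ is bounded away from extreme values.'' This is false: assumption \textbf{A1} gives $\lambda < C_3$ and $\beta \ge 1/\lambda$, which only bounds $\beta\lambda$ from \emph{below} (by $1$); there is no upper bound on $\beta\lambda$, and indeed the paper's regime of interest is $\beta\to\infty$ with $\lambda$ possibly fixed, so $\beta\lambda\to\infty$. Rearranging, your claimed inequality is equivalent to $\frac{3}{2}\log(\beta\lambda) \le 1 + 3\log(8\pi) - \frac{3}{2}\log(2\pi)$, i.e., $\beta\lambda \lesssim 32\pi e^{2/3} \approx 195$. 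So for $\beta\lambda$ large your upper bound on $\log Z_{\sigma^*}$ is strictly \emph{weaker} (larger) than the form stated in the lemma, and your argument does not establish that exact form. The bound you do prove, $\log Z_{\sigma^*}\le \beta C + \frac{3}{2}\log\frac{2\pi}{\beta\lambda}$, is of the same qualitative shape and would serve the downstream uses (in Lemma \ref{worstcasebound}, only the $e^{\beta C}$ factor and the polynomial-in-$(\beta\lambda)^{-1}$ correction matter, and $\beta\lambda\ge 1$ bounds the correction by a constant), so the fix is either to restate the lemma with your constants or to match the paper and invoke Mei's Lemma 10.2 directly; but as written, the absorption step does not go through.
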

\begin{proof}[Proof of Lemma \ref{unifboundZm}] Let $R^{\sigma^{*}}_i(\rho^{*}_{\sigma^{*}})$ be defined as follows
$$
R^{\sigma^{*}}_i(\rho^{*}_{\sigma^{*}}) := -\frac{1}{M}\left(y_i - y^{\sigma^{*}}_{\rho^{*}_{\sigma^{*}}}(x_i)\right).
$$
By substituting the form (\ref{eq:gibbs}) of the Gibbs distribution into the free energy functional (\ref{eq:free_energy}), we have that
\begin{align*}
   &\mathcal{F}^{\sigma^{*}}(\rho) = \frac{1}{2M}\sum_{i=1}^M \left(y_i - y^{\sigma^{*}}_{\rho^{*}_{\sigma^{*}}}(x_i) \right)^2 + \frac{\lambda}{2} M(\rho^{*}_{\sigma^{*}}) \\
   &\hspace{5em}- \int  \sum_{i=1}^M \left[R^{\sigma^{*}}_i(\rho_{\sigma^{*}}^{*}) \cdot \sigma^{*}(x_i,\btheta)\right]\rho^{*}_{\sigma^{*}}(\btheta)\mathrm{d}\btheta - \frac{\lambda}{2} \int  \|\btheta\|_2^2\rho^{*}_{\sigma^{*}}(\btheta) \mathrm{d}\btheta - \frac{1}{\beta}\ln Z_{\sigma^{*}}(\beta,\lambda).
 \end{align*}
 Note that, by Fubini's theorem, we can interchange summation and integration in the first integral, since the activation and the labels are bounded. By using also the definition of $R^{\sigma^{*}}_i(\rho_{\sigma^{*}}^{*})$, we have that
 \begin{align*}
   \mathcal{F}^{\sigma^{*}}(\rho)&= \frac{1}{2M}\sum_{i=1}^M y^2_i + \frac{1}{2M}\sum_{i=1}^M \left(y^{\sigma^{*}}_{\rho^{*}_{\sigma^{*}}}(x_i)\right)^2 - \frac{1}{M}\sum_{i=1}^M y_i \cdot y^{\sigma^{*}}_{\rho^{*}_{\sigma^{*}}}(x_i) + \frac{\lambda}{2} M(\rho^{*}_{\sigma^{*}}) \\
   &\hspace{1.2em}- \frac{1}{M}\sum_{i=1}^M \left(y^{\sigma^{*}}_{\rho^{*}_{\sigma^{*}}}(x_i)\right)^2 + \frac{1}{M}\sum_{i=1}^M y_i \cdot y^{\sigma^{*}}_{\rho^{*}_{\sigma^{*}}}(x_i) - \frac{\lambda}{2} M(\rho^{*}_{\sigma^{*}}) - \frac{1}{\beta}\ln Z_{\sigma^{*}}(\beta,\lambda)
   \\&=-\frac{1}{\beta}\ln Z_{\sigma^{*}}(\beta,\lambda) - \frac{1}{2M}\sum_{i=1}^M \left(y^{\sigma^{*}}_{\rho^{*}_{\sigma^{*}}}(x_i)\right)^2 + \frac{1}{2M}\sum_{i=1}^M y_i^2 \\ &\leq -\frac{1}{\beta}\ln Z_{\sigma^{*}}(\beta,\lambda) + \frac{1}{2M}\sum_{i=1}^M y_i^2 \\ &\leq -\frac{1}{\beta}\ln Z_{\sigma^{*}}(\beta,\lambda) + C, 
\end{align*}
where $C>0$ is independent of $(m,\tau, \beta,\lambda)$. From Lemma  10.2 in \cite{mei2018mean}, we obtain that, for any $\rho \in \mathcal{K}$,
$$
\mathcal{F}(\rho) \geq \frac{1}{2} R(\rho) + \frac{\lambda}{4} M(\rho) - \frac{1}{\beta} \left[1 + 3 \log \frac{8\pi}{\beta\lambda}\right] \geq - \frac{1}{\beta} \left[1 + 3 \log \frac{8\pi}{\beta\lambda}\right],
$$
where the last inequality follows from non-negativity of $R(\rho)$ and $M(\rho)$. Combining the upper and lower bounds gives
$$
-\frac{1}{\beta}\ln Z_{\sigma^{*}}(\beta,\lambda) + C \geq - \frac{1}{\beta} \left[1 + 3 \log \frac{8\pi}{\beta\lambda}\right].
$$
After a rearrangement, we have
$$
\ln Z_{\sigma^{*}}(\beta,\lambda) \leq \beta C + 1 + 3 \log \frac{8\pi}{\beta\lambda},
$$
which concludes the proof.
\end{proof}

In order to bound the measure of $\overline{\Omega}_j(m,\beta,\lambda)$, the idea is to combine the upper bound on the partition function of Lemma \ref{unifboundZm} with a lower bound that diverges in $m$ unless $|\overline{\Omega}_j(m,\beta,\lambda)|$ vanishes. In particular, we derive a lower bound with the structure of a Gaussian integral which grows unbounded for a certain set of inputs. This set of inputs corresponds to the scenario when the Gaussian covariance has non-positive eigenvalues, and it can be expressed as the set in which the polynomials $f_j$ and $f^j$ defined in (\ref{eq:secdpoly}) are non-negative.
For brevity, we suppress the dependence of $\Omega_j$ and $\Omega^j$ on $(m,\beta,\lambda)$ in the proofs below.

\begin{lemma}[Bound on measure of cluster set]\label{worstcasebound} Assume that condition \textbf{A1} holds. For $j\in \{0, \ldots, M\}$, let $\Omega^j$ and $\Omega_j$ be defined as in (\ref{def:Om}). Then,
\begin{equation}\label{eq:ubmu}
    |\Omega^j|,\ |\Omega_j| \leq K_1 \frac{e^{\beta K_2}}{m^2},
\end{equation}
where $K_1, K_2>0$ is independent of $(m,\beta,\lambda)$.

\end{lemma}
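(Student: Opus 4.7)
The plan is to couple the uniform upper bound on the partition function $Z_m(\beta,\lambda)$ from Lemma \ref{unifboundZm} with a carefully tailored lower bound on $Z_m(\beta,\lambda)$ whose dependence on $|\Omega^j|$ and $m$ delivers the claim by direct comparison. I would work out the argument for $|\Omega^j|$ in detail; the bound on $|\Omega_j|$ follows by the symmetric argument with ``$w<0$'' in place of ``$w>0$'' and $(A_j,B_j)$ in place of $(A^j,B^j)$. First, I would restrict the defining integral of $Z_m$ to the clean region $\{w\in(0,m],\ |a|\leq m,\ b\in[-w x_{j+1},-w x_j]\}$, in which all truncations are inactive ($|a|,|w|\leq m$ and $|w x_i+b|\leq 2mL\leq m^2$ once $m\geq 2L$), $-b/w\in I_j$, and therefore $(wx_i+b)_+ = wx_i+b$ for $i>j$ while the contribution vanishes for $i\leq j$. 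Substituting into the exponent and performing the change of variables $b=-wx$ with $x\in I_j$ (Jacobian $w$), then completing the square in $a$ and recognizing the polynomial $f^j(x)=1+x^2-(A^jx-B^j)^2$ of (\ref{eq:secdpoly}), yields
\[
Z_m(\beta,\lambda) \;\geq\; \int_{I_j} \int_0^m \int_{-m}^m w\, \exp\!\Bigl\{-\tfrac{\beta\lambda}{2}[a+w(B^j-A^jx)]^2 - \tfrac{\beta\lambda}{2}\, w^2 f^j(x)\Bigr\}\,da\,dw\,dx.
\]
Restricting further to $x\in\Omega^j$ makes the factor $\exp\{-\tfrac{\beta\lambda}{2} w^2 f^j(x)\}\geq 1$, since $f^j(x)\leq 0$ on $\Omega^j$.

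Next I would lower bound the remaining Gaussian integrals. The $a$-integral over $[-m,m]$ is at least $c/\sqrt{\beta\lambda}$ as long as the Gaussian mean $-w(B^j-A^jx)$ lies in $[-m/2,m/2]$. The uniform second-moment bound $M(\rho_m^*)=O(1/\lambda)$ from Lemma \ref{limitrho_mtau}, combined with Cauchy--Schwarz, gives $|R_i^m(\rho_m^*)|=O(1/\lambda)$ and hence $|B^j-A^jx|=O(1/\lambda^2)$ uniformly in $j$ and $x\in I_j$. Choosing $w^*:=c_1 m \lambda^2$ with a small enough constant $c_1>0$ (depending only on $M$ and $L$) and restricting $w\in[w^*/2,w^*]$ keeps $|w(B^j-A^jx)|\leq m/2$ while $\int_{w^*/2}^{w^*} w\,dw\geq c_2 m^2\lambda^4$. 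Collecting these bounds and integrating over $x\in\Omega^j$ produces
\[
Z_m(\beta,\lambda) \;\geq\; c_3\,|\Omega^j|\,\frac{m^2\lambda^4}{\sqrt{\beta\lambda}}.
\]

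Comparing with $\ln Z_m(\beta,\lambda)\leq \beta C + O(\log(1/(\beta\lambda)))$ from Lemma \ref{unifboundZm} and rearranging yields $\ln|\Omega^j|\leq \beta C - 2\ln m + O(\log(1/\lambda))$. Assumption \textbf{A1} (specifically $\beta\geq \lambda^{-1}\log\lambda^{-1}$) implies $\log(1/\lambda)\leq \lambda\beta\leq \beta$, so the logarithmic overhead is absorbed into the exponential factor, giving the desired $|\Omega^j|\leq K_1 e^{\beta K_2}/m^2$. The main technical obstacle is ensuring that the Gaussian shift $|w(B^j-A^jx)|$ stays inside the truncation window $[-m,m]$ while still yielding an $m^2$ factor in the lower bound; this is exactly what forces me to exploit the second-moment estimate on $\rho_m^*$ to control $|B^j-A^jx|$ and to restrict $w$ to a subinterval of size proportional to $m\lambda^2$ rather than to the full range $[0,m]$.
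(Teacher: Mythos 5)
Your proposal is correct, and the core strategy — sandwich $Z_m$ between a lower bound that inherits a factor of $|\Omega^j|$ from the non-positivity of $f^j$ on $\Omega^j$, and the uniform upper bound from Lemma \ref{unifboundZm} — is exactly that of the paper. Two points of genuine divergence are worth noting. First, where you complete the square in $a$ alone (leaving $w$ as a free parameter and exposing $f^j(x)$ directly as the effective coefficient of $w^2$), the paper instead writes the full $2\times 2$ quadratic form $\Sigma^{-1}$ in $(a,w)$, invokes Sylvester's criterion to show that $\det\Sigma^{-1}=f^j(x)\le 0$ forces a non-positive eigenvalue, and diagonalizes with explicit eigenvectors before restricting the rotated coordinates to $(0,m/2]^2$; your version is more elementary and arrives at the same factor of $m^2$. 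Second, you control the Gaussian shift via the second-moment bound $M(\rho_m^*)=O(1/\lambda)$ of Lemma \ref{limitrho_mtau}, giving $|B^j-A^jx|=O(\lambda^{-2})$ and forcing you to shrink the $w$-window to $[w^*/2,w^*]$ with $w^*\sim m\lambda^2$; the paper instead uses the sharper Lemma \ref{lambdarisk}, which gives $|R_i^m(\rho_m^*)|$ bounded by an absolute constant and hence $\lambda|B^j-A^jx|=O(1)$, so the full range $w\in(0,m]$ works without paying the extra $\lambda^4$. Your extra polynomial factors in $\lambda$ are correctly absorbed into $e^{\beta K_2}$ via the A1 constraint $\beta\ge\lambda^{-1}\log\lambda^{-1}$, so the final bound is of the same form, merely with a worse constant $K_2$. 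One small omission: the degenerate boundary cases ($j=M$ for $\Omega^j$, $j=0$ for $\Omega_j$, where the corresponding polynomial reduces to $1+x^2>0$ and the set is empty) should be dispatched separately, as the paper does, since your restriction $b\in[-wx_{j+1},-wx_j]$ presumes both endpoints are meaningful.
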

\begin{proof}[Proof of Lemma \ref{worstcasebound}] We start with the proof for $\Omega^j$. For $j=M$, the corresponding polynomial $f^M(x)$ is equal to $1+x^2$ and therefore $|\Omega^M| = 0$. Let us now consider the case $j\neq M$, and assume that $\mu(\Omega^j) > 0$. (If that's not the case, the claim trivially holds.)

Note that, as $f^j(x)$ is a polynomial of degree at most two in $x$, $\Omega^j$ is the union of at most two intervals. Then, the following set has a non-zero Lebesgue measure in $\mathbb{R}^2$:
$$
\Omega := \{(w,b)\in\mathbb{R}_{+}\times\mathbb{R}: b = -w^mx,\ 0<w<m  ,\ x\in \Omega^j\}.
$$
Now, we can lower bound the partition function as
\begin{equation}\label{eq:lb}
    \begin{split}
Z_m(\beta, \lambda) &\geq \int_{\{|a|<m\}\times\Omega} \exp\left\{-\frac{\beta\lambda}{2}\left[\frac{2}{\lambda}\sum_{i=1}^M R_i^m(\rho^{*}_m)\cdot a^m(w^mx_i+b)^{m}_{+}+\|\btheta\|_2^2\right]\right\}\mathrm{d}\btheta\\
&= \int_{\{|a|<m\}\times\Omega} \exp\left\{-\frac{\beta\lambda}{2}\left[\frac{2}{\lambda}\sum_{i=j+1}^M R_i^m(\rho^{*}_m)\cdot a^m(w^mx_i+b)+\|\btheta\|_2^2\right]\right\}\mathrm{d}\btheta.   
    \end{split}
\end{equation}
Here, the equality in the second line follows from the following observation: if $i\in [j]$ and $(w,b)\in\Omega$, then $w^m x_i+b\le 0$ and therefore $(w^m x_i+b)_+^m=0$; if $i>j$ and $(w,b)\in\Omega$, then $0<w^mx_i+b< m^2$ ($|x|,|x_i|\leq L$, hence $|x_i-x| \leq m$, as $L$ is a numerical constant independent of $m$ and $m$ is sufficiently large by assumption \textbf{A1})
and therefore $(w^m x_i+b)_+^m=w^mx_i +b$ for all $(w, b)\in \Omega$. Thus, after the change of variables $(a,w, b) \mapsto (a,w, -w^mx)$ and an application of Tonelli's theorem, the RHS in (\ref{eq:lb}) reduces to
\begin{equation}\label{eq:lb2}
\int_{x\in\Omega^j}\int_{\{|a|<m\}\times \{0<w<m\}} w\cdot\exp\left\{-\frac{\beta\lambda}{2}\left[2aw(B^j-A^jx)+a^2+w^2(1+x^2)\right]\right\}\mathrm{d}(a,w)\mathrm{d}x.
\end{equation}
Here the coefficients $A^j$ and $B^j$ are defined as per (\ref{eq:defAB}). The term under the exponent can be rewritten as
$$
2aw(B^j-A^jx)+a^2+w^2(1+x^2)=\begin{bmatrix}
a    &
w    
\end{bmatrix}\Sigma^{-1}\begin{bmatrix}
a     \\
w    
\end{bmatrix},
$$
with
$$
\Sigma^{-1} = \begin{bmatrix}
1 & (B^j-A^jx) \\
(B^j-A^jx) & 1+x^2
\end{bmatrix}.
$$
By definition of $\Omega^j$ in conjunction with Sylvester's criterion, we have that $\Sigma^{-1}$ has a non-positive eigenvalue with corresponding eigenvector
$$
\lambda_{-} = \frac{1}{2}\left(-\sqrt{4(B^j-A^jx)^2+x^4} + x^2 + 2\right) \leq 0,\ \ \ v_{-} = \left(-\frac{x^2 + \sqrt{4(B^j-A^jx)^2+x^4}}{2(B^j-A^jx)},1\right).
$$
Furthermore, the other eigenvalue with corresponding eigenvector is given by
$$
\lambda_{+} = \frac{1}{2}\left(\sqrt{4(B^j-A^jx)^2+x^4} + x^2 + 2\right) > 0,\ \ \  v_{+} = \left(-\frac{x^2 - \sqrt{4(B^j-A^jx)^2+x^4}}{2(B^j-A^jx)},1\right).
$$
Note that $v_{-}$ and $v_{+}$ are orthogonal, and consider the following change of variables for the integral
$$
\mathbf{z}  = \begin{bmatrix}
z_1 \\
z_2
\end{bmatrix} = \begin{bmatrix}
v_{-} / \|v_{-}\|_2 \\
v_{+} / \|v_{+}\|_2
\end{bmatrix} 
\begin{bmatrix}
a \\
w
\end{bmatrix} = Q^T \begin{bmatrix}
a \\
w
\end{bmatrix} \Leftrightarrow Q \mathbf{z} = \begin{bmatrix}
a \\
w
\end{bmatrix} \Leftrightarrow \begin{bmatrix}
a(\mathbf{z}) \\
w(\mathbf{z})
\end{bmatrix}:= Q \mathbf{z} .
$$
As the matrix $Q$ is unitary, the quantity in (\ref{eq:lb2}) can be rewritten as
$$
\int_{x\in\Omega^j}\int_{\{|a(\mathbf{z})|<m\}\times \{0<w(\mathbf{z})<m\}} w(\mathbf{z})\cdot\exp\left\{-\frac{\beta\lambda}{2}\left[\lambda_{-} z_1^2+\lambda_{+}z_2^2\right]\right\}\mathrm{d}\mathbf{z} \mathrm{d}x,
$$
as the determinant of the Jacobian is 1 for any unitary linear transformation. As $\lambda_{-} \leq 0$, this quantity is lower bounded by
\begin{equation}\label{eq:lb3}
    \int_{x\in\Omega^j}\int_{\{|a(\mathbf{z})|<m\}\times \{0<w(\mathbf{z})<m\}} w(\mathbf{z})\cdot\exp\left\{-\frac{\beta\lambda}{2}\left[ \lambda_{+}z_2^2\right]\right\}\mathrm{d}\mathbf{z} \mathrm{d}x. 
\end{equation}
Notice that $\|v_{-}\| \geq 1$, $\|v_{+}\| \geq 1$ and
$
w(\mathbf{z}) = z_1 / \|v_{-}\|_2 + z_2 / \|v_{+}\|_2.
$
Thus, picking $z_1 \in (0,m/2]$ and $z_2\in(0,m/2]$ ensures that $0<w(\mathbf{z})<m$. Furthermore, these conditions on $\mathbf{z}$ do not violate the requirement on $a(\mathbf{z})$, since
$
|a(\mathbf{z})| \leq |z_1| + |z_2| \leq m.
$
Consequently, as the integrand is non-negative, the integral in (\ref{eq:lb3}) is lower bounded by
\begin{equation}\label{eq:lb4}
    \int_{x\in\Omega^j}\int_{\{0<z_1<m/2\}\times \{0<z_2<m/2\}} w(\mathbf{z})\cdot\exp\left\{-\frac{\beta\lambda}{2}\left[ \lambda_{+}z_2^2\right]\right\}\mathrm{d}\mathbf{z} \mathrm{d}x. 
\end{equation}
By Lemma \ref{lambdarisk}, $|R_i^m(\rho^{*}_m)|$ is bounded by a constant independent of  $(m, \beta,\lambda)$, since $\lambda<C_3$ from condition \textbf{A1}. Hence, $\lambda|A^jx-B^j|$ is also uniformly bounded in $(m, \beta,\lambda)$. This, in particular, implies that
$$
\lambda \cdot \lambda_{+} \leq K_1,
$$
where $K_1>0$ is independent of  $(m, \beta,\lambda)$. Furthermore, by definition of $\Omega^j$, $|B^j-A^jx|>1$, which implies that $\|v_+\|_2$ and $\|v_-\|_2$ are also upper bounded by a constant $K_2>0$ independent of $(m, \beta,\lambda)$, and therefore
$$
w(z) \leq \frac{z_1+z_2}{K_2}.
$$
With this in mind, we can then further lower bound the integral in (\ref{eq:lb4}) by
\begin{equation}\label{eq:lb5}
\begin{split}
     &\int_{x\in\Omega^j}\int_{\{0<z_1<m/2\}\times \{0<z_2<m/2\}} \frac{1}{K_2}(z_1+z_2)\cdot\exp\left\{-\frac{K_1\beta}{2}\cdot z_2^2\right\}\mathrm{d}\mathbf{z} \mathrm{d}x \\ 
    &=|\Omega^j| \int_{\{0<z_1<m/2\}\times \{0<z_2<m/2\}} \frac{1}{K_2}(z_1+z_2)\cdot\exp\left\{-\frac{K_1\beta}{2}\cdot z_2^2\right\}\mathrm{d}\mathbf{z} \\
    &\geq |\Omega^j|\int_{\{0<z_1<m/2\}\times \{0<z_2<m/2\}} \frac{1}{K_2}z_1\cdot\exp\left\{-\frac{K_1\beta}{2}\cdot z_2^2\right\}\mathrm{d}\mathbf{z} \\
    &= \frac{|\Omega^j|}{K_2} \left[\frac{m^2}{8}\sqrt{\frac{\pi}{2K_1\beta}}\mathrm{erf}\left(\frac{m\sqrt{K_1\beta}}{2\sqrt{2}}\right)\right]\\
    &\ge |\Omega^j| \frac{K_3\, m^2}{\sqrt{\beta}},
\end{split}
\end{equation}
where $K_3>0$ is independent of  $(m,\beta,\lambda)$ and in the last passage we have used that $\mathrm{erf}\left(\frac{m\sqrt{K_1\beta}}{2\sqrt{2}}\right) \geq 1/10$ for sufficiently large $m$ and $\beta$. By combining (\ref{eq:lb5}) with the upper bound on the partition function given by Lemma \ref{unifboundZm}, the desired result immediately follows and the proof for $\Omega^j$ is complete.

In regards to the argument for $\Omega_j$, for $j=0$ the result trivially holds, since $f_0(x) = 1+x^2$ and, thus, $|\Omega_0| = 0$. For $j>0$, the partition function can be lower bounded by
\begin{equation}\label{eq:lb_j}
   \int_{\{|a|<m\}\times\Omega} \exp\left\{-\frac{\beta\lambda}{2}\left[\frac{2}{\lambda}\sum_{i=1}^j R_i^m(\rho^{*}_m)\cdot a^m(w^mx_i+b)+\|\btheta\|_2^2\right]\right\}\mathrm{d}\btheta,   
\end{equation}
where the set $\Omega$ is defined on non-positive $w$ and $x\in\Omega_j$, i.e.,
$$
\Omega := \{(w,b)\in\mathbb{R}_{+}\times\mathbb{R}: b = -w^mx,\ -m<w<0  ,\ x\in \Omega_j\}.
$$
The rest of the argument remains the same by noting that with the change of variable $$(a,w, b) \mapsto (-a,-w, w^mx)$$ the quantity in \eqref{eq:lb_j} is equal to
$$
\int_{x\in\Omega_j}\int_{\{|a|<m\}\times \{0<w<m\}} w\cdot\exp\left\{-\frac{\beta\lambda}{2}\left[2aw(B_j-A_jx)+a^2+w^2(1+x^2)\right]\right\}\mathrm{d}(a,w)\mathrm{d}x,
$$
which is exactly as in \eqref{eq:lb2}, but with $x\in\Omega_j$ and the polynomial $(B_j-A_jx)$ in place of $x\in\Omega^j$ and the polynomial $(B^j-A^jx)$.
\end{proof}

In order to control the magnitude of (\ref{quant2}), it is also necessary to understand the behavior of the polynomials defined in (\ref{eq:secdpoly}). The worst case scenario, in terms of presenting a challenge to bounding the curvature, corresponds to $f^j$ or $f_j$ being arbitrarily close to zero on the whole area outside of cluster set. In fact, this would imply that the Gaussian-like integral arising in the computation of (\ref{quant2}) has arbitrary small eigenvalues. More specifically, our plan is to exploit the following bound for $x \in I_j\setminus \overline{\Omega}_j(m,\beta,\lambda)$:
\begin{equation}\label{eq:newbd}
\begin{split}
|(\ref{quant2})| \leq C \int |a|w^2 \Bigg[\exp&\left\{-\frac{\beta\lambda}{2} \cdot f^j(x) \cdot (a^2+w^2)\right\}\\&+\exp\left\{-\frac{\beta\lambda}{2} \cdot f_j(x) \cdot (a^2+w^2)\right\}\Bigg]\mathrm{d}\mathbf{\theta}.
\end{split}
\end{equation}
Now, the RHS of \eqref{eq:newbd} diverges (and, therefore, the bound is useless), if either of the polynomials is arbitrarily close to zero outside of the cluster set. Fortunately, we are able to prove that this cannot happen: in Lemma \ref{welldefquad} we show that $f^j(x)$ and $f_j(x)$ can be small only when $x$ approaches the cluster set, i.e.,
$$
f^j(x), f_j(x) \geq  \min\{C^j(x), C_j(x), 1\},
$$
where $C^j(x), C_j(x)$ are defined in \eqref{eq:totallbpoly} and, because of the condition on their coefficients $\{K_i\}_{i=1}^4$, they cannot be arbitrarily close to $0$ in any interval $I_j$.

As a preliminary step towards the proof of Lemma \ref{welldefquad}, we  show an auxiliary result for polynomials of a certain form. Fix some interval $I=[I_l,I_r]\subset\mathbb{R}$. Given two quantities $a,b\in\mathbb{R}$, consider the following polynomial of degree at most two
\begin{equation}\label{def:genericpoly}
    P_2(x) := (1-a^2) \cdot x^2 + 2ab  \cdot x + (1-b^2),\quad  x \in I,
\end{equation}
where we suppress the dependence on $(a,b)$, i.e., $P_2(x;a,b)=P_2(x)$, for more compact notation. In addition, let $\Omega_{+}$ be the subset of $I$ on which $P_2$ is strictly positive, i.e.,
$$
\Omega_{+} := \{x\in I: P_2(x) > 0\}.
$$
For a fixed small constant $C_{\Omega} > 0$, define the set of admissible coefficients as follows 
\begin{equation}\label{goodab}
\mathcal{U} := \{(a,b)\in\mathbb{R}^2: |\Omega_+| \geq C_{\Omega}\}.
\end{equation}
Given $(a,b) \in \mathcal{U}$ and $x\in  \Omega_{+}$, we define the critical point $x_c$ of the polynomial $P_2$ associated with $x$ and $\Omega_{+}$ in the same fashion as in Definition \ref{def:critical}, after replacing $f^j(\cdot)$ with $P_2(\cdot)$ and $I_j \setminus \Omega^j(m,\beta,\lambda)$ with $\Omega_{+}$.  Notice that, since $\Omega_{+}$ has strictly positive Lebesgue measure for $(a,b)\in\mathcal{U}$, the critical point is well-defined and, in particular, $x_c \in I$ always holds.

\begin{lemma}[Lower bound on polynomial]\label{welldefquadgeneric} 
Fix some $C_{\Omega}$ such that $\mathcal{U}$, as defined in (\ref{goodab}), is of positive measure.
Pick some interval $(a,b)\in\mathcal{U}$. Let $x\in \Omega_{+}$ and $x_c$ be the critical point associated to $x$. Then, the following holds
\begin{equation}\label{eq:p2def}
P_2(x) \geq \alpha_{2} (x-x_c)^2 + \alpha_{1} |x-x_c| + \alpha_{0},
\end{equation}
where $\alpha_{0}, \alpha_{1}, \alpha_{2} \geq 0$ and at least one of them is lower bounded by a strictly positive constant depending on $C_{\Omega}$ but independent of the choice of $(a,b)\in\mathcal{U}$.
\end{lemma}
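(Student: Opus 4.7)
The plan is to rely on the exact Taylor expansion of the quadratic $P_2$ around the critical point $x_c$,
$$P_2(x) = P_2(x_c) + P_2'(x_c)(x-x_c) + (1-a^2)(x-x_c)^2,$$
which is exact since $P_2$ is quadratic. A short case check using Definition \ref{def:critical} shows that $P_2(x_c) \ge 0$ and $P_2'(x_c)(x-x_c) \ge 0$ for every $x \in \Omega_{+}$ in the connected component associated to $x_c$: if $x_c$ is an interior minimizer then $P_2'(x_c) = 0$; if $x_c$ is an endpoint of $I$ (because the unconstrained minimizer lies outside $I$) then $P_2'$ points into $I$, matching the sign of $x - x_c$; if $x_c$ is (or projects to) a root, then $P_2(x_c) = 0$ and $P_2'$ at that root points into the positive region where $x$ lives. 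Consequently $P_2'(x_c)(x-x_c) = |P_2'(x_c)| \cdot |x-x_c|$ and the first two terms of the Taylor identity are already non-negative.

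Next, I would construct the decomposition based on the sign of $1-a^2$. In the convex case $a^2 \le 1$, set $\alpha_0 := P_2(x_c)$, $\alpha_1 := |P_2'(x_c)|$, $\alpha_2 := 1-a^2$; then \eqref{eq:p2def} holds with equality and all coefficients are non-negative. In the concave case $a^2 > 1$ the quadratic term is negative, and I would absorb it by using $(x-x_c)^2 \le |I| \cdot |x-x_c|$ on $I$: set $\alpha_2 := 0$, $\alpha_1 := \bigl(|P_2'(x_c)| + (1-a^2)|I|\bigr)_{+}$, and move any remainder into $\alpha_0$ using $(x-x_c)^2 \le |I|^2$. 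This produces a valid decomposition with non-negative coefficients for every $(a,b) \in \mathcal{U}$.

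The main technical step, and the main obstacle, is the uniform positive lower bound on $\max\{\alpha_0,\alpha_1,\alpha_2\}$ over $(a,b) \in \mathcal{U}$. I would split into four regimes. (i) Convex, no real roots, interior minimum: the constraint $|x^*| = |ab|/(1-a^2) \le L$ together with $a^2 + b^2 < 1$ implies that either $\alpha_2 = 1-a^2$ is bounded below, or $\alpha_0 = (1-a^2-b^2)/(1-a^2)$ is close to $1$. (ii) Convex, no real roots, endpoint minimum: $\alpha_0 = P_2(x_c)$ is controlled by the direct monotone behaviour of $P_2$ on $I$. (iii) Convex, real roots: the measure condition $|\Omega_{+} \cap I| \ge C_\Omega$ constrains the root separation $D = 2\sqrt{a^2+b^2-1}/(1-a^2)$, forcing a lower bound on one of $\alpha_1 = 2\sqrt{a^2+b^2-1}$ or $\alpha_2 = 1-a^2$. (iv) Concave: the identity $c_1^2 = 4(1-c_2)(1-c_0)$ implied by $P_2 = 1 + x^2 - (ax-b)^2$ forces an asymmetric divergence of the roots as $a^2 \to 1^+$, pushing $x_c$ to an endpoint of $I$ and yielding uniform convergence of $P_2$ to a linear limit of the form $1 + 2b_0 x - b_0^2$; $\alpha_0$ or $\alpha_1$ is then bounded below depending on whether $b_0$ is close to $0$ or bounded away from it, and for $a^2$ bounded away from $1$ the prefactor $a^2-1$ in $\alpha_1$ combined with the measure condition gives the bound. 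Regime (iv) is the most subtle: the naive Taylor analysis alone is insufficient, and one must unfold the parametric form of $P_2$ to track the asymptotic behaviour of $x_c$ and $P_2(x_c)$. Taking the minimum over the four regimes yields the uniform constant $\varepsilon(C_\Omega) > 0$.
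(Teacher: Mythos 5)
Your starting point coincides with the paper's: the exact quadratic Taylor expansion $P_2(x)=P_2(x_c)+P_2'(x_c)(x-x_c)+\tfrac{1}{2}P_2''(x_c)(x-x_c)^2$, the observation $P_2(x_c)\geq 0$, and the sign argument $P_2'(x_c)(x-x_c)\geq 0$ are all identical, as is the convex-case decomposition $\alpha_2 = 1-a^2$, $\alpha_1 = |P_2'(x_c)|$, $\alpha_0 = P_2(x_c)$.

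The two approaches diverge in the concave case, and this is where your sketch has a genuine gap. Your absorption scheme uses $(x-x_c)^2 \le |I|\,|x-x_c|$, leading to $\alpha_1 = \bigl(|P_2'(x_c)| - |1-a^2|\,|I|\bigr)_+$. When this coefficient truncates to zero, you propose to ``move the remainder into $\alpha_0$'' via $(x-x_c)^2\le |I|^2$, giving $\alpha_0 = P_2(x_c) - |1-a^2|\,|I|^2$, which can be strictly negative; the statement requires $\alpha_0\ge 0$, and simply replacing such an $\alpha_0$ with $0$ does not yield a pointwise lower bound on $P_2$. The paper instead observes that, by definition of $x_c$ in the concave case, the evaluation point $x$ lies between $x_c$ and the vertex $x_{\max}$, and it bounds the parabola from below on that sub-interval by the secant from $(x_c, P_2(x_c))$ to $(x_{\max}, P_2(x_{\max}))$. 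A short computation shows this secant has slope $|P_2'(x_c)|/2$, giving the clean choice $\alpha_2=0$, $\alpha_1=|P_2'(x_c)|/2$, $\alpha_0=P_2(x_c)$, both of which are non-negative by construction.

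The second place the paper is substantially simpler is the uniform lower bound on $\max\{\alpha_0,\alpha_1,\alpha_2\}$. Your four-regime analysis tracks roots, root separation, the identity $c_1^2=4(1-c_2)(1-c_0)$, and an asymmetric root divergence as $a^2\to 1^+$; regime (iv) in particular is left as an intuition rather than a proof. The paper replaces all of this with one parametric observation: since $P_2''(x_c)=2(1-a^2)$, $P_2'(x_c)=2(1-a^2)x_c+2ab$ and $P_2(x_c)=(1-a^2)x_c^2+2abx_c+1-b^2$, with $|x_c|\le L$ bounded, if $|P_2''(x_c)|$ and $|P_2'(x_c)|$ are both small then $a^2\approx 1$ and $|b|\approx 0$, forcing $P_2(x_c)\approx 1$. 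Hence the three Taylor coefficients cannot simultaneously be small, independent of $(a,b)$. In the concave case the same contradiction is reached after inserting the measure condition $|\Omega_+|\ge C_\Omega$ through a single auxiliary point $\tilde x\in\Omega_+$ with $|\tilde x-x_c|=C$, which yields $|P_2'(x_c)|\,C + P_2(x_c)\ge -\tfrac{1}{2}P_2''(x_c)\,C^2$ and hence $\max\bigl(|P_2'(x_c)|,P_2(x_c)\bigr)$ bounded below. You would do well to adopt this structural-coefficient argument: it avoids any discussion of where the roots or the vertex lie, and it makes the concave case follow in a few lines once the secant-line decomposition is in place.
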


We defer the proof of Lemma \ref{welldefquadgeneric} to Appendix \ref{appendix:lbpoly}. Recall the definition of the polynomial $f^j(x)$ given in (\ref{eq:secdpoly}), and notice that expression can be rearranged such that $f^j(x)$ is in the form of (\ref{def:genericpoly}), namely
$$
f^j(x) = 1 + x^2 - (A^jx - B^j)^2 = (1-(A^j)^2)  x^2 + 2A^jB^j x + (1 - (B^j)^2).
$$
In this view, the following result follows from Lemma \ref{welldefquadgeneric}.

\begin{lemma}[Well-defined quadratic form]\label{welldefquad} Assume that $(A^j, B^j)\in\mathcal{U}$, i.e., $|I_j \setminus \Omega^j|$ is lower bounded by a positive constant. Given $x\in I_j \setminus \Omega^j$, let $x_c$ be the critical point associated to $x$. Then, we have that 
\begin{equation}\label{eq:Csupj}
f^j(x) \geq C^j(x):= \gamma_1 (x-x_c)^2 + \gamma_2,
\end{equation}
where $\gamma_1, \gamma_2 > 0$ and either $\gamma_1 > \varepsilon$ or $\gamma_2 > \varepsilon$ for some $\varepsilon > 0$  that is independent of $(A^j,B^j)$ but depending on $C_{\Omega}$ as  appearing in the definition of $\mathcal{U}$.
\end{lemma}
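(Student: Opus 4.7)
The plan is to recognize $f^j(x)$ as a special case of the polynomial $P_2(x)$ treated in Lemma \ref{welldefquadgeneric}, and then collapse the three-term lower bound supplied by that lemma into the two-term form claimed in \eqref{eq:Csupj}. Expanding
\begin{equation*}
    f^j(x) = 1 + x^2 - (A^jx - B^j)^2 = \bigl(1-(A^j)^2\bigr)x^2 + 2A^jB^j\,x + \bigl(1-(B^j)^2\bigr),
\end{equation*}
matches $P_2$ in \eqref{def:genericpoly} under the identification $a = A^j$, $b = B^j$. Hence $\Omega^j = \{x\in I_j : f^j(x)\leq 0\}$ is exactly the complement (within $I_j$) of the strict positivity set $\Omega_+$ used in the auxiliary lemma, and the standing assumption $(A^j,B^j)\in\mathcal{U}$ is precisely the hypothesis needed to apply Lemma \ref{welldefquadgeneric}. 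Moreover, by construction the notion of critical point in Definition \ref{def:critical} coincides with the one used in Lemma \ref{welldefquadgeneric} after the above substitution.

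The second step is to invoke Lemma \ref{welldefquadgeneric}: for every $x\in I_j\setminus\Omega^j$ with associated critical point $x_c\in I_j$, I obtain
\begin{equation*}
    f^j(x)\;\geq\;\alpha_2 (x-x_c)^2 + \alpha_1\,|x-x_c| + \alpha_0,
\end{equation*}
with $\alpha_0,\alpha_1,\alpha_2\geq 0$ and at least one of them bounded below by a strictly positive constant $\varepsilon_0$ that depends only on $C_\Omega$. To reduce this to the two-term form $\gamma_1(x-x_c)^2+\gamma_2$, I exploit the fact that each prediction interval $I_j$ is contained in $[-L,L]$, where $L$ is a constant independent of $(m,\beta,\lambda)$. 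Since $x,x_c\in I_j$, we have $|x-x_c|\leq 2L$ and therefore $|x-x_c|\geq (x-x_c)^2/(2L)$, which lets me absorb the linear contribution into the quadratic one:
\begin{equation*}
    f^j(x)\;\geq\;\Bigl(\alpha_2+\tfrac{\alpha_1}{2L}\Bigr)(x-x_c)^2 + \alpha_0.
\end{equation*}
Setting $\gamma_1 := \alpha_2 + \alpha_1/(2L)$ and $\gamma_2 := \alpha_0$ then yields \eqref{eq:Csupj}. Because at least one of the $\alpha_i$ is uniformly bounded below by $\varepsilon_0$, at least one of $\gamma_1, \gamma_2$ is uniformly bounded below by $\varepsilon := \varepsilon_0/(2L)$, and this constant inherits from Lemma \ref{welldefquadgeneric} the property of being independent of the specific choice of $(A^j,B^j)\in\mathcal{U}$.

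The main point requiring a little care is the case in which the uniformly positive coefficient is $\alpha_1$ or $\alpha_2$ while $\alpha_0=0$; the construction above then delivers $\gamma_2=0$, which is weaker than the strict inequality $\gamma_2>0$ stated in the lemma. I view this as a bookkeeping issue rather than a genuine obstacle, because the sequel only uses $C^j(x)$ through the clipped quantity $\bar C^j(x) = \min\{C^j(x),C_j(x),1\}$ appearing in \eqref{eq:barCdef}, and the Gaussian-type bound \eqref{eq:newbd} that eventually controls the second derivative only needs one uniformly positive coefficient on $(x-x_c)^2$. Should strict positivity of $\gamma_2$ be required at the level of the statement, it can be recovered by replacing $\gamma_2$ with any strictly positive quantity smaller than $\varepsilon_0/(4L)$, or alternatively by a finer case analysis in Lemma \ref{welldefquadgeneric} observing that a purely linear bound can be saturated only at a root $x_c$ of $f^j$ lying on the boundary of $\Omega^j$. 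The hardest part of the overall argument therefore lies in Lemma \ref{welldefquadgeneric} itself; the present lemma is essentially a substitution plus the boundedness of $I_j$.
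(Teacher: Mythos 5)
Your proposal is correct and follows essentially the same route as the paper's proof: identify $f^j$ with the generic polynomial $P_2$ of Lemma \ref{welldefquadgeneric} via $a=A^j$, $b=B^j$, invoke that lemma, and absorb the linear term into the quadratic one using the boundedness of $I_j$. The only cosmetic difference is that the paper keeps $\alpha_i/|I_j|^{2-i}$ with $i=\arg\max_{j\in\{1,2\}}\alpha_j$, whereas you add the two contributions to get $\gamma_1=\alpha_2+\alpha_1/(2L)$; both are fine. Your remark about $\gamma_2=\alpha_0$ possibly being $0$ is accurate, but the same gap is present in the paper's own proof (it also sets $\gamma_2=\alpha_0$), so this is a defect in the statement's strictness rather than in your argument; as you note, it is harmless downstream since the sequel only uses the combination $\bar C^j(x)$ and only needs one of $\gamma_1,\gamma_2$ to be bounded away from zero.
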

\begin{proof}[Proof of Lemma \ref{welldefquad}] Note that $I_j\setminus\Omega^j$ is the set in which $f^j$ is strictly positive. Hence, since $|I_j \setminus \Omega^j|$ is lower bounded by a positive constant independent of $A^j, B^j$, we can apply Lemma \ref{welldefquadgeneric} to get
$$
f^j(x) \geq \alpha_{2} (x-x_c)^2 + \alpha_{1} |x-x_c| + \alpha_{0},
$$
where $\alpha_{0}, \alpha_{1}, \alpha_{2} \geq 0$ and at least one of them is lower bounded by a strictly positive constant independent of $(A^j, B^j)$. Thus, since each term of the RHS above is non-negative, we get
$$
f^j(x) \geq \alpha_i|x-x_c|^i + \alpha_0,
$$
where $i=\arg\max_{j\in \{1, 2\}}\alpha_j$. Furthermore, as $|x-x_c| \leq |I_j|$, we have
$$
f^j(x) \geq \frac{\alpha_i}{|I_j|^{2-i}}|x-x_c|^2 + \alpha_0.
$$
Now, either $\alpha_i$ or $\alpha_0$ as well as $1/|I_j|$ are lower bounded by strictly positive constants independent of $(A^j, B^j)$. Thus, taking $\gamma_1=\alpha_i/|I_j|^{2-i}$ and $\gamma_2 = \alpha_0$ concludes the proof.
\end{proof}

Let us point out that, although $\varepsilon$ does not depend on the values of $(A^j,B^j) \in \mathcal{U}$, the position of a critical point $x_c$ \emph{depends} on $(A^j,B^j)$.

In a similar fashion, we define $\bar{\mathcal{U}}$ to be the set of 
admissible $(A_j,B_j)$ as in (\ref{goodab}), and given $x\in I_j \setminus \Omega_j$, we let $\bar{x}_c$ be the critical point associated to $x$ and $\Omega_j$. Then,
a result analogous to Lemma \ref{welldefquad} holds for $f_j(x)$: 
\begin{equation}\label{eq:Cundj}
f_j(x) \geq C_j(x):= \gamma_3 (x-\bar{x}_c)^2 + \gamma_4,    
\end{equation}
where $\gamma_3, \gamma_4 > 0$ and either $\gamma_3 > \varepsilon$ or $\gamma_4 > \varepsilon$ for some $\varepsilon > 0$  that is independent of the choice of $(A_j, B_j) \in \bar{\mathcal{U}}$.

The last ingredient for the proof of the vanishing curvature phenomenon is the control of the decay of the partition function $Z_m(\beta,\lambda)$ as $\beta\rightarrow0$.

\begin{lemma}[Lower bound on partition function independent of $m$]\label{finiteZ} Assume that condition \textbf{A1} holds. Then, 
$$
Z_m(\beta,\lambda) \geq \frac{C}{\sqrt{\beta^3\lambda^{3/2}}},
$$
for some $C>0$ that is independent of $(m,\beta,\lambda)$.
\end{lemma}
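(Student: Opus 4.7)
The plan is to lower bound $Z_m(\beta,\lambda)$ by restricting the defining integral to a wedge-shaped region of parameter space on which the data-fitting contribution to the Gibbs potential vanishes identically, reducing the problem to a pure Gaussian integral.

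More precisely, set $L := \max_i |x_i|$ and consider
\[
R := \bigl\{(a,w,b) \in \mathbb{R}^3 : |a| < m,\ |w| < m,\ b \leq -L|w|\bigr\}.
\]
On $R$, both scalar truncations are inactive, so $a^m = a$ and $w^m = w$; moreover, for every $i \in [M]$, $w^m x_i + b = w x_i + b \leq L|w| + b \leq 0 < m^2$, which gives $(w^m x_i + b)_+^m = 0$. Hence the data-fitting term in the exponent of \eqref{eq:rhodensitytm2} vanishes identically on $R$, and
\[
Z_m(\beta,\lambda) \geq \int_R \exp\Bigl\{-\frac{\beta\lambda}{2}(a^2+w^2+b^2)\Bigr\}\, da\, dw\, db.
\]

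I would then factor and estimate this Gaussian integral. The $a$-marginal $\int_{|a|<m} e^{-\beta\lambda a^2/2}\, da$ is bounded below by a universal constant multiple of $(\beta\lambda)^{-1/2}$, since condition \textbf{A1} forces $m$ to be enormously larger than $1/\sqrt{\beta\lambda}$, making the truncation effects exponentially small. Passing to polar coordinates in the $(w,b)$-plane, the wedge $\{b \leq -L|w|\}$ corresponds to an angular sector of aperture $2\arctan(1/L) > 0$, a positive constant independent of $(m,\beta,\lambda)$; the radial integral $\int_0^m r\, e^{-\beta\lambda r^2/2}\, dr$ is bounded below by a constant multiple of $(\beta\lambda)^{-1}$ (again using that $\beta\lambda m^2$ is huge by \textbf{A1}). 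Multiplying the two contributions yields
\[
Z_m(\beta,\lambda) \geq \frac{C_0}{(\beta\lambda)^{3/2}} = \frac{C_0}{\beta^{3/2}\lambda^{3/2}}
\]
for some universal $C_0 > 0$. Finally, since $\lambda \leq C_3$ by \textbf{A1}, we have $\lambda^{3/2} = \lambda^{3/4}\cdot \lambda^{3/4} \leq C_3^{3/4}\lambda^{3/4}$, hence
\[
Z_m(\beta,\lambda) \geq \frac{C_0\, C_3^{-3/4}}{\beta^{3/2}\lambda^{3/4}} = \frac{C}{\sqrt{\beta^3\lambda^{3/2}}},
\]
which is the claimed bound.

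The only technical point to verify is the inactivity of both layers of truncation on $R$, which amounts to the three inequalities built into the definition of $R$. I do not anticipate any serious obstacle: once the reduction to a Gaussian integral over a wedge is in place, the remainder is a one-line polar-coordinate calculation, and the argument does not interact with the more delicate parts of the paper's analysis (such as the behavior of the minimizers $\rho^{*}_{\tau,m}$, the critical-point analysis, or Lemmas \ref{unifboundZm} and \ref{worstcasebound}).
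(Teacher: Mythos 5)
Your proof is correct, and it takes a genuinely different route from the paper. The paper lower-bounds the data-fitting term by invoking Lemma~\ref{lambdarisk} (which requires $|R_i^m(\rho^*_m)| \leq K_1\sqrt{\lambda}$, itself a nontrivial consequence of the minimizer's structure), then uses $(w^m x_i + b)_+ \leq |x_i w| + |b|$ and $2|uv|\leq u^2+v^2$ to fold the cross terms into a Gaussian over all of $\mathbb{R}^3$ with variances of order $(\beta\sqrt{\lambda})^{-1}$, yielding exactly $C\beta^{-3/2}\lambda^{-3/4}$. You instead choose a region where the data-fitting term is identically zero, so the partition function reduces to a pure $\exp\{-\beta\lambda\|\btheta\|_2^2/2\}$ integral over a wedge; this sidesteps Lemma~\ref{lambdarisk} entirely, is considerably more self-contained, and in fact gives the stronger rate $(\beta\lambda)^{-3/2} = \beta^{-3/2}\lambda^{-3/2}$, which you then deliberately weaken to match the stated bound. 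Both proofs are valid; yours is simpler and does not couple the partition-function lower bound to the risk estimate.

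One small imprecision worth fixing: you write that \textbf{A1} ``forces $m$ to be enormously larger than $1/\sqrt{\beta\lambda}$.'' In fact \textbf{A1} only guarantees $m > C_2$ and $\beta\lambda \geq 1$, so all that one can conclude is $m\sqrt{\beta\lambda/2} > C_2/\sqrt{2}$, a fixed positive constant. That is enough: $\int_{|u|<C_2/\sqrt{2}}e^{-u^2}\,du$ and $1 - e^{-C_2^2/2}$ are positive universal constants, so the truncation at $|a|,|w|<m$ costs only a multiplicative constant, not an ``exponentially small'' correction. The conclusion stands, but the phrasing overstates what \textbf{A1} provides.
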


The proof of Lemma \ref{finiteZ} is deferred to Appendix \ref{appendix:risk}. At this point, we are ready to provide an upper bound on the magnitude of (\ref{quant2}).

\begin{lemma}[Integral upper bound]\label{UBintegral} 
Assume that condition \textbf{A1} holds. Furthermore, assume that $m>e^{\beta K_2}$, where $K_2$ is given in (\ref{eq:ubmu}). Fix $j\in \{0, \ldots, M\}$. Then, for any $x \in I_j \setminus \left(\Omega^j \cup \Omega_{j} \right)$, 
$$
\left|\int a^m(w^m)^2 {\rho}^{*}_{m}(a,w,-w^mx)\mathrm{d}a\mathrm{d}w\right| \leq \frac{K}{\beta\lambda^{7/4}(\bar{C}^j(x))^2},
$$
where $K>0$ is independent of $(m,\beta,\lambda)$, $\bar{C}^j(x) := \min\left\{C_{j}(x), C^j(x), 1\right\}$, and $C^j(x), C_j(x)$ are given by (\ref{eq:Csupj}) and  (\ref{eq:Cundj}), respectively.
\end{lemma}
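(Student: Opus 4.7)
My overall approach is to split the integration domain into two halves according to the sign of $w$, and on each half further into a \emph{main} region where both truncations are inactive (i.e.\ $|a|\le m$ and $|w|\le m$) and a \emph{boundary} region where at least one of them saturates. On the main region the exponent of $\rho_m^*(a,w,-w^m x)$ simplifies to a positive-definite quadratic form with determinant exactly $f^j(x)$ on $\{w>0\}$ or $f_j(x)$ on $\{w<0\}$, so the integral is controlled by a Gaussian moment computation. On the boundary region a completion of squares reveals an effective floor of order $\lambda m^2 f^j(x)$ in the exponent, making the contribution exponentially small in $m$ under the hypothesis $m>e^{K_1\beta}$.

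Concretely, after substituting $b=-w^m x$ and observing that $|w^m(x_i-x)|\le 2mL<m^2$ under \textbf{A1}, the truncated ReLU $(w^m x_i+b)^m_+$ equals $w^m(x_i-x)$ when $w$ and $(x_i-x)$ share sign and vanishes otherwise. This reduces the cross term in the potential $\Psi$ to $a^m w^m\lambda(B^j-A^jx)$ on $\{w>0\}$ (only indices $i>j$ contribute) and to $a^m w^m\lambda(B_j-A_jx)$ on $\{w<0\}$ (only $i\le j$). On the main region within $\{w>0\}$ the exponent becomes exactly $\tfrac{\beta\lambda}{2}\bv^{\sT}\Sigma^{-1}\bv$ with $\Sigma^{-1}=\bigl(\begin{smallmatrix}1 & B^j-A^jx\\ B^j-A^jx & 1+x^2\end{smallmatrix}\bigr)$, which is positive definite of determinant $f^j(x)>0$ since $x\notin\Omega^j$. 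Bounding $|a^m|(w^m)^2\le|a|w^2$, extending the integration to $\mathbb R^2$, and evaluating the Gaussian moment via Cauchy--Schwarz, $\mathbb E[|A|W^2]\le\sqrt{\mathbb E[A^2]\,\mathbb E[W^4]}$ for $(A,W)\sim\mathcal N(\b0,\Sigma/(\beta\lambda))$ with $\Sigma_{11}=(1+x^2)/f^j(x)$ and $\Sigma_{22}=1/f^j(x)$, yields a main contribution of order $C\sqrt{1+L^2}/((\beta\lambda)^{5/2}f^j(x)^2)$. The analogous computation with $(B_j,A_j)$ handles the main region in $\{w<0\}$.

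For the boundary I complete squares in the non-truncated variable; on $\{a>m,|w|\le m\}\subset\{w>0\}$ for instance, setting $c=m(B^j-A^jx)/(1+x^2)$,
\[
\Psi(a,w;x)=\tfrac{\lambda}{2}a^2+\tfrac{\lambda(1+x^2)}{2}(w+c)^2-\tfrac{\lambda m^2(B^j-A^jx)^2}{2(1+x^2)}.
\]
Combining the constant factor $e^{+\beta\lambda m^2(B^j-A^jx)^2/(2(1+x^2))}$ from this identity with the Gaussian tail $\int_m^\infty e^{-\beta\lambda a^2/2}\,da\le\tfrac{1}{\beta\lambda m}e^{-\beta\lambda m^2/2}$ produces the decisive factor $e^{-\beta\lambda m^2 f^j(x)/(2(1+x^2))}$. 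Analogous completions dispatch the three remaining sub-regions, and together with the lower bound $f^j(x)\ge\bar C^j(x)$ from Lemma \ref{welldefquad} they give a boundary contribution bounded by a polynomial in $m$ times $\exp(-\beta\lambda m^2\bar C^j(x)/(2(1+L^2)))$. Maximizing this bound over $r=\bar C^j(x)\in(0,1]$ via the elementary inequality $\max_r r^4 e^{-Br}=(4/B)^4 e^{-4}$ then shows that the assumption $m>e^{K_1\beta}$ suffices to absorb the weight $1/\bar C^j(x)^2$ on the right-hand side of the target inequality.

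Finally, combining main and boundary contributions over both half-planes, multiplying by $Z_m^{-1}\le C\beta^{3/2}\lambda^{3/4}$ from Lemma \ref{finiteZ}, and using $f^j(x),f_j(x)\ge\bar C^j(x)$ produces the claimed bound $K/(\beta\lambda^{7/4}(\bar C^j(x))^2)$. The main obstacle is the boundary region: the exponential factor $e^{-\beta\lambda m^2\bar C^j(x)}$ weakens as $\bar C^j(x)\to 0$, and one must balance it against polynomial prefactors in $m$. This is precisely where the hypothesis $m>e^{K_1\beta}$ enters; it makes the exponent large enough to dominate those prefactors uniformly in $\bar C^j(x)$, while still leaving room to absorb the $1/\bar C^j(x)^2$ weight appearing in the right-hand side.
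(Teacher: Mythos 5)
Your proposal is correct in its main technical lines, but it takes a genuinely different route from the paper, so a comparison is worthwhile.

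The paper never partitions the $(a,w)$-plane into ``main'' and ``boundary'' regions. Instead it exploits a single algebraic trick: in the potential $\Psi^j(a,w) = \lambda a^m w^m (B^j-A^jx) + \tfrac{\lambda}{2}\{a^2+w^2+(w^m)^2x^2\}$, the cross term involves only \emph{truncated} variables $a^m w^m$ (bounded by $|a|\,w^m$), while the regularization contributes an \emph{un-truncated} $w^2$. The paper completes the square in $a$ alone, writing the exponent as $(a-|B^j-A^jx|w^m)^2 + w^2 - (w^m)^2(1-\bar C^j(x))$, and then invokes $(w^m)^2\le w^2$ to extract a strictly positive $w^2\bar C^j(x)$ term. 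The resulting one-dimensional Gaussian moment (handled by Jensen's inequality) and the remaining $w$-integral give the $\mathcal O((\beta\lambda)^{-5/2}/\bar C^j(x)^2)$ scaling directly. There is no tail analysis and no balancing argument; the hypothesis $m>e^{\beta K_2}$ is used only to guarantee $|I_j\setminus\Omega^j|\ge|I_j|/2$ so that Lemma~\ref{welldefquad} applies. Your approach, by contrast, performs a two-dimensional Gaussian moment on the ``main'' region (where the covariance determinant is exactly $f^j(x)$), which is an appealing structural observation, and then controls the ``boundary'' contribution by completing squares in the free variable and invoking the Gaussian tail plus the hypothesis $m>e^{K_1\beta}$ to absorb the polynomial prefactors. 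Both paths reach the same bound. The paper's method buys brevity and avoids bookkeeping the six boundary sub-regions, whereas yours makes the dependence on $\det\Sigma^{-1}=f^j(x)$ transparent. Two small remarks: (i) the balancing step should use $\max_r r^2 e^{-Br}=(2/B)^2 e^{-2}$ rather than the fourth power you wrote, since only one factor of $\bar C^j(x)^2$ needs to be compensated in the target; (ii) you rely on $f^j(x)\ge \bar C^j(x)$ from Lemma~\ref{welldefquad}, which itself requires the cluster set to be small; this is guaranteed by the same hypothesis $m>e^{\beta K_2}$ via Lemma~\ref{worstcasebound}, and you should note this explicitly rather than attribute the hypothesis entirely to the boundary analysis.
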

\begin{proof}[Proof of Lemma \ref{UBintegral}] 
Note that the following upper bound holds
$$
\left|\int a^m(w^m)^2 {\rho}^{*}_{m}(a,w,-w^mx)\mathrm{d}a\mathrm{d}w\right| \leq I(x):=\int |a^m|(w^m)^2 \rho^{*}_{m}(a,w,-w^mx)\mathrm{d}a\mathrm{d}w.
$$
Let us now decompose the integral $I(x)$ depending on the sign of $w$, i.e., $$Z_m(\beta,\lambda) \cdot I(x) = I^j(x) + I_j(x),$$ where
\begin{align*}
   &I^j(x) := \int_{\{a\in\mathbb{R}\}\times\{w\geq0\}} |a^m|(w^m)^2 \exp\left\{-\beta\Psi^j(a,w,\rho^{*}_m)\right\}\mathrm{d}a\mathrm{d}w,\\
    &I_j(x) := \int_{\{a\in\mathbb{R}\}\times\{w<0\}} |a^m|(w^m)^2 \exp\left\{-\beta\Psi_j(a,w,\rho^{*}_m)\right\}\mathrm{d}a\mathrm{d}w,
\end{align*}
and, recalling the form of $\rho^{*}_{m}(a,w,-w^mx)$ from \eqref{eq:rhodensitytm2}, the corresponding potentials are given by
\begin{align*}
    &\Psi^j(a,w,\rho) = \sum\limits_{i=j+1}^M R^m_i(\rho) \cdot a^mw^m (x_i - x) + \frac{\lambda}{2}\left\{a^2 + w^2 + (w^m)^2x^2\right\}, \\
    &\Psi_j(a,w,\rho) = \sum\limits_{i=1}^j R^m_i(\rho) \cdot a^mw^m (x_i - x) + \frac{\lambda}{2}\left\{a^2 + w^2 + (w^m)^2x^2\right\}.
\end{align*}
By recalling from (\ref{eq:defAB}) the definitions of $A^j,A_j,B^j$ and $B_j$, we obtain the following upper bounds.
\begin{align}
  &I^j(x) \le 2\int_{\{a\geq 0\}\times\{w\geq0\}} aw^2 \exp\left\{-\frac{\beta\lambda}{2}\left[-2aw^m|B^j-A^jx| + a^2 + w^2 + (w^m)^2x^2\right]\right\}\mathrm{d}a\mathrm{d}w,\label{1.9.1}\\
    &I_j(x) \le 2\int_{\{a\geq 0\}\times\{w<0\}} aw^2 \exp\left\{-\frac{\beta\lambda}{2}\left[2aw^m|B_j-A_jx| + a^2 + w^2 + (w^m)^2x^2\right]\right\}\mathrm{d}a\mathrm{d}w.\label{1.9.3}
\end{align}

\noindent Let us analyze the RHS of (\ref{1.9.1}). This term can be rewritten as
\begin{align}\label{eq:int1}
    \begin{split}
    2\int_{\{a\geq 0\}\times\{w\geq0\}} aw^2 &\exp\left\{-\frac{\beta\lambda}{2}\left[-2aw^m|A^jx-B^j| + a^2 + (w^m)^2(A^jx-B^j)^2  \right]\right\}  \\
    \cdot & \exp\left\{-\frac{\beta\lambda}{2}\left[w^2 + (w^m)^2x^2 - (w^m)^2(A^jx-B^j)^2 \right]\right\}\mathrm{d}a\mathrm{d}w.
    \end{split}
\end{align}
Note that
$$
|\Omega^j|\le \frac{K_1\, e^{\beta K_2}}{m^2}\le \frac{K_1}{e^{\beta K_2}}, 
$$
where the first inequality follows from Lemma \ref{worstcasebound}, and the second inequality uses that $m>e^{\beta K_2}$. Therefore, for sufficiently large $\beta$, 
$|\Omega^j|$ is smaller than $|I_j|/2$, and therefore $|I_j\setminus\Omega^j|$ is lower bounded by $|I_j|/2$. At this point, 
we can apply Lemma \ref{welldefquad}
which gives that $1+x^2 -  (A^jx-B^j)^2 \geq C^j(x)\ge \bar{C}^j(x):=\min\left\{ C^j(x), C_j(x), 1\right\}$. Thus, (\ref{eq:int1}) is upper bounded by
\begin{equation}\label{eq:ineqlm1}
\begin{split}
    &2\int_{\{a\geq 0\}\times\{w\geq0\}} aw^2 \exp\left\{-\frac{\beta\lambda}{2}\left(a - |B^j-A^jx|w^m\right)^2\right\} \\
&\hspace{5.5em}\cdot \exp\left\{-\frac{\beta\lambda}{2}\left[w^2 - (w^m)^2(1-\bar{C}^j(x)) \right]\right\}\mathrm{d}a\mathrm{d}w\\
&=2\int_{\{w\geq 0\}} w^2  \exp\left\{-\frac{\beta\lambda}{2}\left[w^2 - (w^m)^2(1-\bar{C}^j(x)) \right]\right\}\sqrt{\frac{2\pi}{\beta\lambda}}\mathbb E\left[(A)_+\right]\mathrm{d}w,
\end{split}
\end{equation}
where $A\sim \mathcal{N}(|B^j-A^jx|w^m,(\beta\lambda)^{-1})$. Furthermore, the following chain of inequalities hold:
\begin{equation}\label{eq:ineqlm2}
\mathbb{E} \left[(A)_+\right] \le \mathbb{E} \left[|A|\right]\le \sqrt{\mathbb{E} \left[A^2\right]} = \sqrt{|B^j-A^jx|^2(w^m)^2 + \frac{1}{\beta\lambda}},
\end{equation}
where the second passage follows from Jensen's inequality. By using (\ref{eq:ineqlm2}), the RHS of (\ref{eq:ineqlm1}) is upper bounded by
\begin{align*} \frac{2\sqrt{2\pi}}{\sqrt{\beta\lambda}}\int_{\{w\geq0\} } &\sqrt{(B^j-A^jx)^2(w^m)^2 + \frac{1}{\beta\lambda}} \\
\cdot\ w^2&\exp\left\{-\frac{\beta\lambda}{2}\left[w^2 - (w^m)^2(1-\bar{C}^j(x)) \right]\right\}\mathrm{d}w.
\end{align*}
Applying Lemma \ref{welldefquad} again to obtain $(A^jx-B^j)^2\le 1+x^2-\bar{C}^j(x)\le 1+x^2$ and noting by definition that $(w^m)^2 \leq w^2$, we now upper bound this last term by
\begin{equation}
\begin{split}
& 2\sqrt{2\pi}\int_{\{w\geq 0\}}\sqrt{\frac{w^2(1+x^2)}{\beta\lambda} + \frac{1}{\beta^2\lambda^2}}\
\cdot\ w^2\exp\left\{-\frac{\beta\lambda}{2}\left[w^2 - (w^m)^2(1-\bar{C}^j(x)) \right]\right\}\mathrm{d}w\\
&\le 2\sqrt{2\pi}\int_{\{w\in\mathbb{R}\}}\sqrt{\frac{w^2(1+x^2)}{\beta\lambda} + \frac{1}{\beta^2\lambda^2}}\
\cdot w^2\exp\left\{-\frac{\beta\lambda}{2}\left[\bar{C}^j(x) \cdot w^2 \right]\right\}\mathrm{d}w\\
&\le 2\sqrt{2\pi}\int_{\{w\in\mathbb{R}\}}\left(\sqrt{\frac{w^2(1+x^2)}{\beta\lambda}} + \sqrt{\frac{1}{\beta^2\lambda^2}}\right)\
\cdot w^2\exp\left\{-\frac{\beta\lambda}{2}\left[\bar{C}^j(x) \cdot w^2 \right]\right\}\mathrm{d}w,
\end{split}
\end{equation}
where in the second line we use that $1-\bar{C}^j(x)\ge 0$ and again that $(w^m)^2 \leq w^2$, and in the third line we use that $\sqrt{u+v} \leq \sqrt{u} + \sqrt{v}$.

Finally, computing explicitly the last integral gives the following upper bound on the RHS of (\ref{1.9.1}) and consequently on $I^j(x)$:
$$
I^j(x) \leq 4\pi\sqrt{\frac{1}{\beta^2\lambda^2}}\
\cdot\ \sqrt{\frac{1}{(\bar{C}^j(x))^3\beta^3\lambda^3}} + 2\sqrt{2\pi} \sqrt{\frac{1+x^2}{\beta\lambda}} \sqrt{\frac{1}{(\bar{C}^j(x))^4\beta^4\lambda^4}}.
$$
By following the similar passages, we obtain the same upper bound for $I_j(x)$. By using the lower bound on the partition function shown in Lemma \ref{finiteZ}, we conclude that
$$
I(x) =\frac{I^j(x)+I_j(x)}{Z_m(\beta, \lambda)} \leq \frac{K}{\beta\lambda^{7/4}(\bar{C}^j(x))^2},
$$
where $K>0$ is independent of $(m,\beta,\lambda)$, and the proof is complete.
\end{proof}

The proof of Theorem \ref{mainT0} is an immediate consequence of the results presented so far.

\begin{proof}[Proof of Theorem \ref{mainT0}]
The proof of \eqref{eq:decay} follows from Lemmas \ref{convdelta} and \ref{UBintegral}, and the proof of \eqref{eq:controlnorm} follows from Lemma \ref{worstcasebound}.
\end{proof}

\subsection{Proof of Theorem \ref{mainT1}}\label{subsec:pfth2}

To summarize, at this point we have shown that as $\beta\to\infty$ the second derivative of the predictor vanishes outside the cluster set, and that the size of the cluster set shrinks to concentrate on at most 3 points per prediction interval. With these results in mind, we are ready to provide the proof for Theorem \ref{mainT1}.

\begin{proof}[Proof of Theorem \ref{mainT1}]
The predictor evaluated at the Gibbs distribution is given by
$$
y_n(x) = \int a^{\tau,m}(w^{\tau,m}x + b)^{m}_{\tau}\rho^{*}_{\tau,m}(\btheta)\mathrm{d}\btheta, 
$$
where $n=(\tau,m,\beta,\lambda)$ denotes the aggregated index and we suppress the dependence on $(\beta,\lambda)$ in $\rho^{*}_{\tau,m}$ for convenience. By Lemma \ref{unifboundM_rhomtau}, there exists $\tau(m,\beta,\lambda)$ such that, for any $\tau>\tau(m, \beta, \lambda)$,
\begin{equation}\label{eq:secmombd}
 M(\rho^{*}_{\tau,m})\le C,   
\end{equation}
for some $C>0$ independent of $(\tau,m,\beta,\lambda)$. We start by showing that the family of predictors $\{y_n\}$ is equi-Lipschitz for $\infty > \tau > \tau(m,\beta,\lambda)$. First, note that
\begin{equation}\label{eq:push1}
\frac{\partial }{\partial x} y_n(x) = \int \frac{\partial }{\partial x} \Big[ a^{\tau,m}(w^{\tau,m}x + b)^{m}_{\tau}\Big]\rho^{*}_{\tau,m}(\btheta)\mathrm{d}\btheta,
\end{equation}
since the derivative can be pushed inside by the same line of arguments as given in the proof of Lemma \ref{convdelta}. Next, we have that, by construction of the activation, the following holds
$$
\int \frac{\partial }{\partial x} \Big[ a^{\tau,m}(w^{\tau,m}x + b)^{m}_{\tau}\Big]\rho^{*}_{\tau,m}(\btheta)\mathrm{d}\btheta \leq C_1 \int |a^{\tau,m} w^{\tau,m}| \rho^{*}_{\tau,m}(\btheta)\mathrm{d}\btheta, 
$$
where, from here on, $C_1 > 0$ denotes a generic constant which might change from line to line, but is independent of $(\tau,m,\beta,\lambda)$. By construction, for any $u\in\mathbb{R}$, it holds that $|u^{\tau,m}| \leq |u|$. Thus, we have that
$$
\int \frac{\partial }{\partial x} \Big[ a^{\tau,m}(w^{\tau,m}x + b)^{m}_{\tau}\Big]\rho^{*}_{\tau,m}(\btheta)\mathrm{d}\btheta \leq C_1 \int |a w| \rho^{*}_{\tau,m}(\btheta)\mathrm{d}\btheta.
$$
Using the Cauchy-Schwartz inequality and \eqref{eq:secmombd}, we obtain that
\begin{equation}\label{eq:push2}
    \int \frac{\partial }{\partial x} \Big[ a^{\tau,m}(w^{\tau,m}x + b)^{m}_{\tau}\Big]\rho^{*}_{\tau,m}(\btheta)\mathrm{d}\btheta \leq C_1 M(\rho^{*}_{\tau,m}) \leq C_1.
\end{equation}
By combining \eqref{eq:push1} and \eqref{eq:push2}, we have shown that the family $\{y_n\}$ for $\tau > \tau(m,\beta,\lambda)$ is equi-Lipschitz, as the derivatives are uniformly bounded. By using a similar argument, we can show that the same result holds for the predictor itself, i.e., for all $x\in \bigcup_{j=0}^M I_j$, $y_n(x)$ is uniformly bounded.

Note that Theorem \ref{mainT0} considers the curvature of points outside the cluster set, and it gives an upper bound which diverges when $\bar{C}^j(x)$ approaches $0$ for some $j\in [M]$. Thus, our next step is to develop the analytical machinery to make this scenario impossible.
Let us recall Definitions \eqref{eq:totallbpoly} and \eqref{eq:barCdef}. Then, by Lemma \ref{welldefquad}, we have that 
$$
\bar{C}^j(x) \geq \min\{\gamma_1(x-x_c)^2 + \gamma_2,\ \gamma_3(x-\bar{x}_c)^2 + \gamma_4\},
$$
where $\gamma_1,\gamma_2,\gamma_3,\gamma_4 > 0$ and $\min\{\max\{\gamma_1, \gamma_2\}, \max\{\gamma_3, \gamma_4\}\}>\varepsilon$, for some $\varepsilon>0$ that is independent of $(m, \beta, \lambda)$. Let us focus on the term $\gamma_1(x-x_c)^2 + \gamma_2$. If $\gamma_2=0$ or it approaches $0$ (as $m, \beta\to\infty$), then we extend $\Omega^j(m,\beta,\lambda)$ as
$$
\mathrm{ext}_{\delta}(\Omega^j(m,\beta,\lambda)) := \left\{x \in I_j: \min_{x'\in \Omega^j(m,\beta,\lambda)\cup \{x_c\}}|x-x'| < \delta\right\}.
$$
Note that adding the singleton $\{x_c\}$ to the argument of the $\min$ allows us to also cover the case in which $\Omega^j(m,\beta,\lambda)$ is empty. Otherwise, i.e., if $\gamma_2>\varepsilon$ for some $\varepsilon>0$ that is independent of $(m, \beta, \lambda)$, the upper bound on the curvature does not diverge and we set $\mathrm{ext}_{\delta}(\Omega^j(m,\beta,\lambda)):=\Omega^j(m,\beta,\lambda)$. In a similar fashion, we define the extension  of $\Omega_j(m,\beta,\lambda)$ by $\mathrm{ext}_{\delta}(\Omega_j(m,\beta,\lambda))$.

Let $\bar{\Omega}^j_{\rm ext}$ be the union of $\mathrm{ext}_{\delta}(\Omega^j(m,\beta,\lambda))$ and $\mathrm{ext}_{\delta}(\Omega_j(m,\beta,\lambda))$, where we drop the explicit dependence of $\bar{\Omega}^j_{\rm ext}$ on $(\delta,m,\beta,\lambda)$ for convenience. Then, since $f^j$ and $f_j$ are polynomials of degree two, the extended set $\bar{\Omega}^j_{\rm ext}$ (just like $\bar{\Omega}^j$) is the union of at most three disjoint open intervals, i.e., 
$$
\bar{\Omega}^j_{\rm ext} = A_1^j \cup A_2^j \cup A_3^j,
$$
where $\{A_i^j\}_{i=1}^3$ denote such (possibly empty) open intervals. Furthermore, $I_j \setminus \bar{\Omega}^j_{\rm ext}$ is the union of 
at most three disjoint closed intervals, i.e.,  
$$
I_j \setminus \bar{\Omega}^j_{\rm ext} = B_1^j \cup B_2^j \cup B_3^j,
$$
where $\{B_i^j\}_{i=1}^3$ denote such (possibly empty) closed intervals.

At this point, we are ready to show that, for all closed intervals $\{B_i^j\}_{i=1}^3$, the predictor $y_n$ can be approximated arbitrarily well by a linear function (which may be different in different closed intervals). Note that $y_n$ is twice continuously differentiable for $\tau < \infty$, and fix $\tilde{x} \in B_i^j$. Then, by combining Taylor's theorem with the result of Theorem \ref{mainT0}, we obtain that, for any $x \in B_i^j$,
\begin{equation}\label{eq:Tay1}
\lim_{\tau\to\infty}|y_n(x) - y_n(\tilde{x}) - y'_n(\tilde{x}) (x - \tilde{x})| \leq \mathcal{O}\left(\frac{1}{m\lambda} + \frac{1}{\delta^4 \cdot \beta\lambda^{7/4}}\right),
\end{equation}
where we use that $|x - x_c| \geq \delta$ by construction of the extended set $\bar{\Omega}^j_{\rm ext}$. Let us define
$$
f_n^i(x) = y_n(\tilde{x}) - y'_n(\tilde{x}) (x - \tilde{x}).
$$
Then, by picking a sufficiently small $\delta$, \eqref{eq:Tay1} implies that, as $m\lambda \rightarrow \infty$ and $ \beta\lambda^{7/4} \rightarrow \infty$, for all $x\in B_i^j$,
\begin{equation}\label{eq:fniy}
|y_n(x) - f_n^i(x)| \rightarrow 0.
\end{equation}
We remark that, as shown previously, the coefficients $y_n(\tilde{x})$ and $y'_n(\tilde{x})$ are uniformly bounded in absolute value.

Let us now consider the open intervals $\{A_i^j\}_{i=1}^3$. For any $x \in A_i^j$, let
$$
x' = \argmin_{y \not \in A_i^j} |x-y|,
$$
and note that, by definition, $x' \in B_{\tilde{\imath}}^j$ for some $\tilde{\imath} \in \{1, 2, 3\}$. By picking the linear approximation $f_n^i$ that corresponds to $B_i^j$ and by using the triangle inequality, we obtain that
\begin{align}\label{tm1:eq1}
    |y_n(x) - f^i_n(x)| &\leq |y_n(x) - y_n(x')| + |y_n(x') - f_n^i(x')| + |f_n^i(x') - f_n^i(x)|
    \nonumber\\&\leq \mathcal{O}\left(|x-x'| + |y_n(x') - f_n^i(x')| \right),
\end{align}
where the second inequality is due to the fact that the families $\{y_n\}$ and $\{f_n^i\}$ are equi-Lipschitz. From \eqref{eq:fniy} the second term in the RHS in (\ref{tm1:eq1}) vanishes. As for the first term, by construction of the extension, together with the result of Lemma \ref{worstcasebound}, we have that
$$
|x-x'| \leq \mathcal{O}\left(\frac{e^{\beta K_2}}{m^2} + \delta\right),
$$
for some $K_2>0$ independent of $(m, \beta, \lambda)$. Thus, by picking a sufficiently small $\delta$ and $m > e^{\beta K_2}$, we conclude that the first term in the RHS in (\ref{tm1:eq1}) also vanishes. 

So far, we have showed that, both inside and outside of the extension of the cluster set, the predictor $y_n$ is well approximated by linear functions. It remains to prove that the linear pieces connect, i.e., there exists $\hat{x}\in \bar{\Omega}^j_{\rm ext}$ such that, for two neighboring linearities $f_n^{i}$ and $f_n^{i+1}$ (possibly belonging to different intervals), the following holds
$$
f_n^i(\hat{x}) - f_n^{i+1}(\hat{x}) = 0.
$$
This claim follows from Lipschitz arguments similar to those presented above, and the proof is complete.
\end{proof}

\subsection{Proof of Corollary \ref{mainT2}}\label{subsec:pfcor}

At this point, we have proved a result about the structure of the predictor coming from the minimizer of the free energy \eqref{eq:free_energy}. By using the mean-field analysis in \cite{mei2018mean}, we finally show that this structural result holds for the predictor obtained from a wide two-layer ReLU network. 

\begin{proof}[Proof of Corollary \ref{mainT2}] 
First, we show that, as $t\to\infty$, the second derivative of the predictor evaluated on the solution $\rho_t$ of the flow  (\ref{eq:PDE}) converges to the same quantity evaluated on the Gibbs minimizer $\rho^*_{\tau, m}$. To do so, we decompose the integral involving $\rho_t$ as in Lemma \ref{convdelta} (cf. \eqref{decompose1}):
\begin{align}\label{cor:decompose1}
    \int a^{\tau,m}&\left[\frac{\partial^2}{(\partial x)^2}(w^{\tau,m}x+b)_{\tau}^m\right] \rho_t(\btheta)\mathrm{d}\btheta  \nonumber\\ =& \int_{w^{\tau,m}x+b\leq x_m} a^{\tau,m}\left[\frac{\partial^2}{(\partial x)^2}(w^{\tau,m}x+b)_{\tau}\right] \rho_{t}(\btheta)\mathrm{d}\btheta\  \nonumber\\
    &\hspace{2em}+ \int_{w^{\tau,m}x+b > x_m} a^{\tau,m}(w^{\tau,m})^2\left[\frac{\partial^2}{(\partial u)^2}\phi_{\tau,m}(u)\Bigg|_{u=w^{\tau,m}x+b}\right] \rho_{t}(\btheta)\mathrm{d}\btheta.
\end{align}

Next, we show that a technical condition bounding the free energy at initialization appearing in the statement of Theorem 4 in \cite{mei2018mean} is satisfied under the assumption $M(\rho_0) < \infty$ and $H(\rho_0) > -\infty$. Recalling the sandwich bound for the truncated soft-plus activation \eqref{eq:sandwich} and the fact that that $\tau \geq 1$ by condition \textbf{A1}, an application of Cauchy-Schwarz inequality gives
$$
R^{\tau,m}(\rho_0) < C M(\rho_0) + C' < \infty,
$$
where $C,C' > 0$ are some numerical constants independent of $(\tau,m)$. 
This readily implies that
$$
\mathcal{F}^{\tau,m}(\rho_0) < \infty,
$$
since $\lambda$ and $\beta^{-1}$ are upper-bounded by assumption \textbf{A1}. 

Now we can apply Theorem 4 in \cite{mei2018mean} to conclude that, as $t\rightarrow\infty$, 
$$
\rho_t \rightharpoonup \rho^{*}_{\tau,m}.
$$

Thus, as the terms inside the integrals in \eqref{cor:decompose1} are all bounded for fixed $(\tau, m, \beta, \lambda)$, by definition of weak convergence, we get that, as $t\rightarrow\infty$,
$$
\int a^{\tau,m}\left[\frac{\partial^2}{(\partial x)^2}(w^{\tau,m}x+b)_{\tau}^m\right] \rho_t(\btheta)\mathrm{d}\btheta \rightarrow \int a^{\tau,m}\left[\frac{\partial^2}{(\partial x)^2}(w^{\tau,m}x+b)_{\tau}^m\right] \rho^{*}_{\tau,m}(\btheta)\mathrm{d}\btheta.
$$
Consequently, since the derivative operator can be pushed inside by the same arguments as in Lemma \ref{convdelta}, we have that, as $t\rightarrow\infty$, the following pointwise convergence holds
\begin{equation}\label{eq:conv2der}
\frac{\partial^2}{(\partial x)^2}\int a^{\tau,m}(w^{\tau,m}x+b)_{\tau}^m \rho_t(\btheta)\mathrm{d}\btheta \rightarrow \frac{\partial^2}{(\partial x)^2}\int a^{\tau,m}(w^{\tau,m}x+b)_{\tau}^m \rho^{*}_{\tau,m}(\btheta)\mathrm{d}\btheta.
\end{equation}

Next, we show that the second derivative of the predictor obtained from the two-layer ReLU network also converges to the same limit. Recall that $\sigma^{*}(x,\btheta) = a^{\tau,m}(w^{\tau,m}x+b)^{m}_{\tau}$. Then, by Theorem 3 in \cite{mei2018mean}, we have that, almost surely, as $N\rightarrow\infty, \,\,\varepsilon_N\rightarrow0$
\begin{equation}\label{eq:conv2der2}
    \frac{\partial^2}{(\partial x)^2}\left[\frac{1}{N}\sum_{i=1}^N \sigma^{*}\left(x,\btheta_i^{\floor*{t/\varepsilon}}\right)\right]\rightarrow\frac{\partial^2}{(\partial x)^2}\int a^{\tau,m}(w^{\tau,m}x+b)_{\tau}^m\rho_t(\btheta)\mathrm{d}\btheta
\end{equation}
along any sequence $\{\varepsilon_N\}$ such that $\varepsilon_N\log(N/\varepsilon_N)\rightarrow 0$ and $N/\log(N/\varepsilon_N)\rightarrow\infty$. By combining \eqref{eq:conv2der} and \eqref{eq:conv2der2}, we obtain that the desired convergence result holds for the LHS of \eqref{eq:conv2der}.

Another application of Theorem 3 of \cite{mei2018mean}, together with the fact that the second moment of the flow solution $\rho_t$ is uniformly bounded along the sequence $t\rightarrow\infty$ (cf. Lemma 10.2 in \cite{mei2018mean}, following Proposition 4.1 in \cite{jordan1998variational}), gives that the gradients
$$
\frac{\partial}{\partial x}\left[\frac{1}{N}\sum_{i=1}^N \sigma^{*}\left(x,\btheta_i^{\floor*{t/\varepsilon}}\right)\right]
$$
are almost surely uniformly bounded. This fact, in turn, implies that the corresponding predictor is almost surely equi-Lipschitz.
In a similar fashion, we also have that the predictor itself is almost surely uniformly bounded in absolute value. 

At this point, the desired result follows from the same line of arguments as in the proof of Theorem \ref{mainT1}.
\end{proof}

\section{Knots Inside the Interval}\label{section:knotsarethere}

In this section, we provide an explicit example of a 2-point dataset such that the SGD solution exhibits a change of tangent (or ``knot'') \emph{inside} the training interval. To do so, we will show that neural networks implementing a linear function without knots on the prediction interval \emph{cannot} minimize the free energy \eqref{eq:free_energy}. To simplify the analysis, throughout the section we omit the limits in $(\tau,m)$, i.e., we consider directly ReLU activations (this corresponds to taking $\tau = m = \infty$). Similar arguments apply to the case of sufficiently large parameters $\tau$ and $m$.

\subsection{Noiseless Regime}

\begin{figure}[t!]
    \centering
    \subfloat[\label{subfig:plots}]{\includegraphics[width=.399\columnwidth]{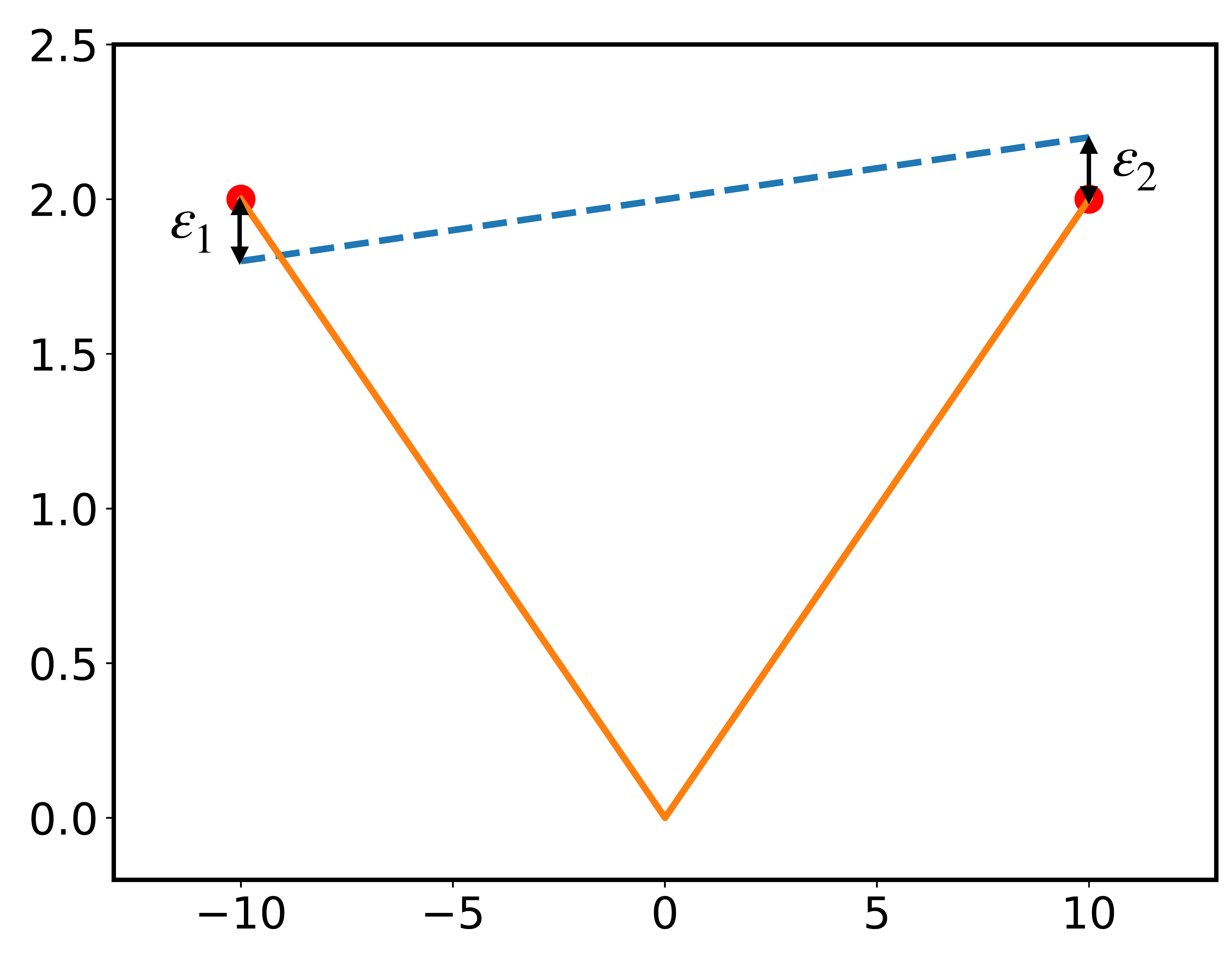}}
    \hspace{3em}
    \subfloat[\label{subfig:plots2}]{\includegraphics[width=.4\columnwidth]{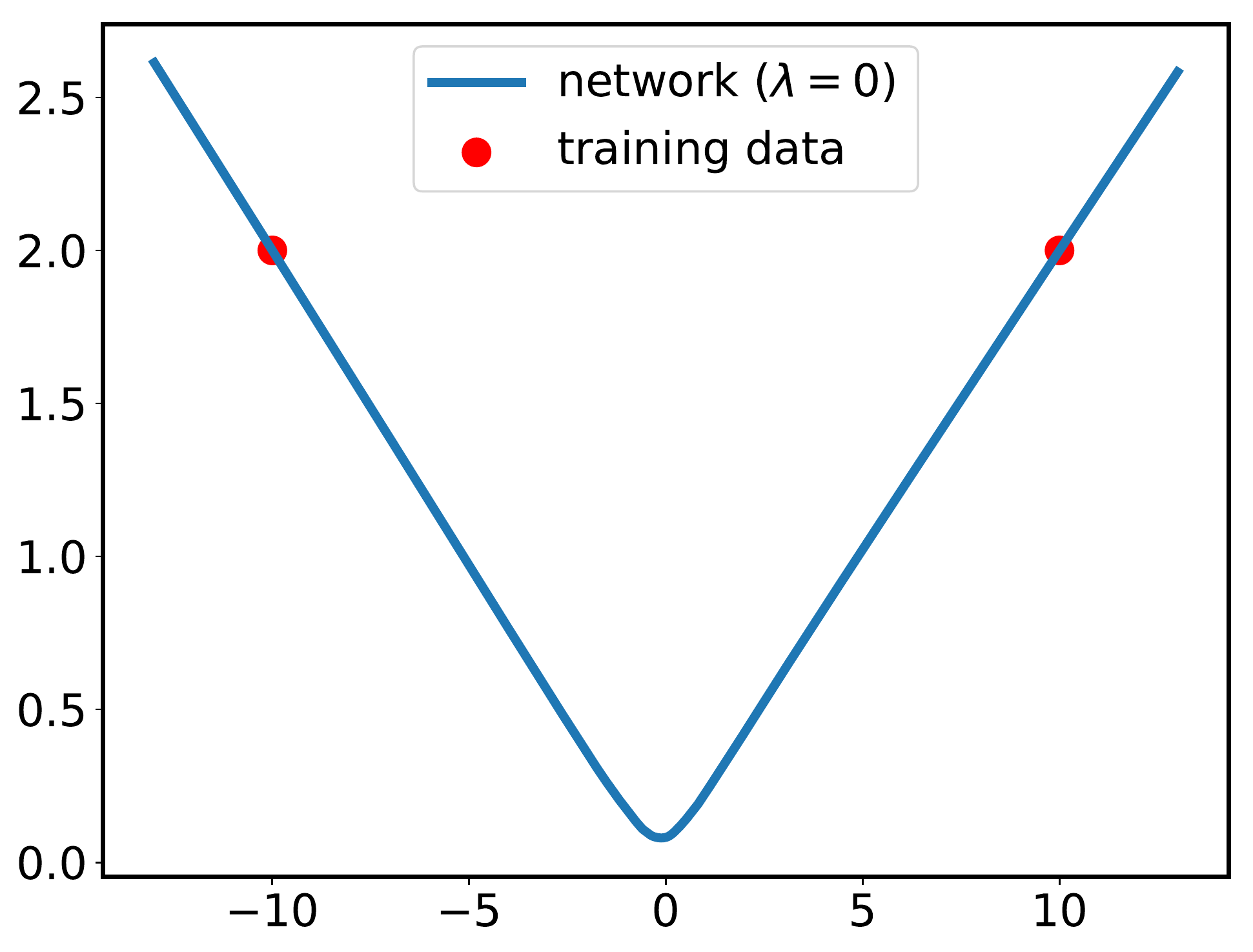}}
\caption{(a) The orange curve represents the function $f^*(x)$ which interpolates the training data (red dots) and exhibits a knot at the point $(0, 0)$; the blue dashed curve is linear in the interval between the training points ($\varepsilon_1 = -0.2$, $\varepsilon_2 = 0.2$). (b) We run noiseless SGD ($\beta=\infty$) with no regularization ($\lambda = 0$) for a two-layer ReLU network with $N=500$ neurons, trained on the dataset \eqref{eq:dataset}. The resulting estimator (in blue) approaches the piecewise linear function $f^*(x)$ with a knot between the two training data points. }\label{fig:counter}
\end{figure}

We start with the case of noiseless SGD training, i.e., $\beta=+\infty$. Here, the free energy has no entropy penalty and it can be expressed as 
\begin{equation}\label{eq:fennop}
    \mathcal{F}_\infty(\rho) = \frac{1}{2}R(\rho) + \frac{\lambda}{2}M(\rho).
\end{equation}
We consider the following dataset which consists of two points:
\begin{equation}\label{eq:dataset}
\mathcal{D} = \{(-\bar{x}, \bar{y}), (\bar{x}, \bar{y})\} = \{(-10,2),(10,2)\}.
\end{equation}
Let $f^{*}(x)$ be the piecewise linear function that interpolates the training data $\{(-\bar{x}, \bar{y}), (\bar{x}, \bar{y})\}$ and passes through the point $(0, 0)$, where it exhibits a knot (see the orange curve in Figure \ref{subfig:plots}).
Note that 
$$
    f^{*}(x)=\int a(wx+b)_{+} \rho^*(a,w,b)\mathrm{d}a\mathrm{d}w\mathrm{d}b,
$$
where
\begin{align}\label{eq:rhostardef}
    &\rho^{*}(a,b,w) =  \frac{1}{2}\left[\delta_{\left(\sqrt{2\frac{\bar{y}}{\bar{x}}}, -\sqrt{2\frac{\bar{y}}{\bar{x}}}, 0\right)}(a,w,b) + \delta_{\left(\sqrt{2\frac{\bar{y}}{\bar{x}}}, \sqrt{2\frac{\bar{y}}{\bar{x}}}, 0\right)}(a,w,b) \right],
\end{align}
and $\delta_{(a_0,w_0,b_0)}$ denotes the Dirac delta function centered at $(a_0,w_0,b_0)$.
Note that $R(\rho^{*}) = 0$ and $M(\rho^{*}) = \frac{2}{5}$. Thus, the free energy is given by
\begin{equation}\label{eq:freepwl}
    \mathcal{F}_\infty(\rho^{*}) = \frac{1}{2}R(\rho^{*}) + \frac{\lambda}{2}M(\rho^{*}) =  \frac{\lambda}{5}.
\end{equation}

Let $f(x)$ be a linear function on the interval $[-\bar{x},\bar{x}]$ such that $f(-\bar{x}) = \bar{y} + \varepsilon_1$ and $f(\bar{x}) = \bar{y}+\varepsilon_2$ (see the blue dashed line in Figure \ref{subfig:plots}), and let $\rho$ be the corresponding distribution of the parameters, i.e., 
\begin{equation}\label{eq:deffrho}
f(x) = \int a(wx+b)_{+} \rho(a,w,b)\mathrm{d}a\mathrm{d}w\mathrm{d}b. 
\end{equation}
In the rest of this section, we will show that, for all $\lambda\le 1$, 
\begin{equation}\label{eq:fenmin}
    \min_{\varepsilon_1, \varepsilon_2} \mathcal{F}_\infty(\rho)> \mathcal{F}_\infty(\rho^{*}).
\end{equation}
In words, the minimizer of the free energy cannot be a linear function on the interval $[-\bar{x},\bar{x}]$. As $f$ is linear, we have that
$$
f(x) = \frac{\varepsilon_2-\varepsilon_1}{2\bar{x}}(x-\bar{x}) + \bar{y} + \varepsilon_2,
$$
which implies that 
\begin{align}\label{zeroeval}
    f(0) = \bar{y} + \frac{\varepsilon_1+\varepsilon_2}{2} = \int a(b)_{+}\rho(a,b)\mathrm{d}a\mathrm{d}b.
\end{align}

First, we consider the case $f(0) = 0$. From \eqref{zeroeval}, we have that $\varepsilon_1+\varepsilon_2=-2\bar{y}$. Hence,
\begin{equation}\label{eq:fencan}
\mathcal{F}_\infty(\rho) \geq \frac{1}{2}R(\rho) = \frac{1}{4}(\varepsilon_1^2 + \varepsilon_2^2) \geq \frac{1}{8}(\varepsilon_1+\varepsilon_2)^2=\frac{\bar{y}^2}{2} = 2.
\end{equation}
By combining \eqref{eq:fencan} and \eqref{eq:freepwl}, we conclude that \eqref{eq:fenmin} holds for all $\lambda \leq 1$ (under the additional restriction $f(0) = 0$).

Next, we consider the case $f(0) \neq 0$. By using (\ref{zeroeval}) and applying Cauchy-Schwarz inequality, we have that 
$$
    |f(0)| = \left|\int a(b)_{+}\rho(a,b)\mathrm{d}a\mathrm{d}b\right| = \left|\mathbb{E} [a(b)_{+}]\right| \leq \sqrt{\mathbb{E} [a^2]\, \mathbb{E} [(b)^2_{+}]} \quad\Longrightarrow \quad\mathbb{E} [a^2] \geq \frac{(f(0))^2}{ \mathbb{E} [(b)^2_{+}]}.
$$
With this in mind, we can lower bound the regularization term as
$$
M(\rho) \geq \mathbb{E} [a^2] + \mathbb{E} [b^2] \geq \frac{(f(0))^2}{ \mathbb{E} [(b)^2_{+}]} + \mathbb{E} [b^2] \geq \frac{(f(0))^2}{ \mathbb{E} [(b)^2_{+}]} + \mathbb{E} [(b)_{+}^2] \geq 2 |f(0)| = 2 \left|\bar{y} + \frac{\varepsilon_1+\varepsilon_2}{2}\right|,
$$
where the last inequality follows from the fact that $g(t)=(f(0))^2/t+t$ is minimized over $t\ge 0$ by taking $t=|f(0)|$. Therefore, we have that
$$
\mathcal{F}_\infty(\rho) \geq \frac{1}{4}(\varepsilon_1^2 + \varepsilon_2^2) + \lambda \left|\bar{y} + \frac{\varepsilon_1+\varepsilon_2}{2}\right|.
$$
Note that, for a fixed value of the sum $\varepsilon_1+\varepsilon_2$, the quantity $\varepsilon_1^2 + \varepsilon_2^2$ is minimized when $\varepsilon_1 = \varepsilon_2$. Thus, by recalling that $\bar{y}=2$, we have
\begin{equation}\label{eq:tmp1min}
\mathcal{F}_\infty(\rho) \geq \min_{\varepsilon}\left\{ \frac{1}{2}\varepsilon^2 + \lambda |2 + \varepsilon|\right\}.
\end{equation}
One can readily verify that, for any $\lambda \leq 2$, the minimizer is given by $\varepsilon^* = -\lambda$. Thus, 
\begin{equation}\label{eq:argprev}
\mathcal{F}_\infty(\rho) \geq 2\lambda-\frac{\lambda^2}{2}\ge \frac{3\lambda}{2}> \frac{\lambda}{5}=\mathcal{F}_\infty(\rho^*),
\end{equation}
where the first inequality uses \eqref{eq:tmp1min} and that the minimizer is $\varepsilon^* = -\lambda$, and the next two inequalities use that $\lambda\ge 1$.
Merging two cases regarding $f(0)$, we conclude that \eqref{eq:fenmin} holds, as desired.

\subsection{Low Temperature Regime}

We now focus on the case of noisy SGD with temperature $\beta^{-1}$. Here, the free energy can be expressed as
\begin{equation}\label{eq:fennoisy}
    \mathcal{F}_{\beta}(\rho) = \frac{1}{2}R(\rho) + \frac{\lambda}{2} M(\rho) - \beta^{-1}H(\rho).
\end{equation}
We consider the two-point dataset \eqref{eq:dataset} and we recall that $f^{*}(x)$ has a knot inside the training interval. In this section we will show that the following two results hold for all $\lambda\le 1$: 
\begin{itemize}
    \item[\emph{(i)}] There exists a sequence of distributions $\{\rho^*_\beta\}_\beta$ such that, for any $x\in [-\bar{x}, \bar{x}]$, 
    \begin{equation}\label{eq:dist1}
\lim_{\beta\to\infty}        \int a(wx+b)_{+}\rho^{*}_{\beta}(a,w,b)\mathrm{d}a\mathrm{d}w\mathrm{d}b = f^*(x),
    \end{equation}
    and
    \begin{equation}\label{eq:dist2}
     \limsup_{\beta\to\infty}\mathcal{F}_\beta(\rho^*_\beta)\le \frac{\lambda}{5}.   
    \end{equation}
    
    \item[\emph{(ii)}] Let $\rho$ be a distribution such that the function $f(x)$ given by \eqref{eq:deffrho} is linear in the interval $[-\bar{x},\bar{x}]$.
    Pick a sequence of distributions $\{\rho_\beta\}_\beta$ such that $\rho_\beta\rightharpoonup \rho$ and for any $x\in [-\bar{x}, \bar{x}]$, 
    \begin{equation}\label{eq:dist3}
\lim_{\beta\to\infty}        \int a(wx+b)_{+}\rho_{\beta}(a,w,b)\mathrm{d}a\mathrm{d}w\mathrm{d}b = f(x).
    \end{equation}
Then, we have that
    \begin{equation}\label{eq:dist4}
        \liminf_{\beta\to\infty}\mathcal{F}_\beta(\rho_\beta)>\frac{\lambda}{5}.
    \end{equation}
\end{itemize}
Combining these two results gives that, for sufficiently large $\beta$, the minimizer of the free energy \eqref{eq:fennoisy} cannot yield a linear estimator on the interval between the two data points. In Figure \ref{subfig:plots2}, we represent the function obtained by training via SGD a two-layer ReLU network with 500 neurons on the dataset \eqref{eq:dataset}. Clearly, the blue curve approaches the piecewise linear function $f^*(x)$, which contains a knot inside the interval $[-10, 10]$. The plot represented in the Figure corresponds to the case with no regularization ($\lambda=0$), but similar results are obtained for small (but non-zero) regularization.

\paragraph{Proof of \emph{(i)}.} Let $\rho_{\beta}^{*}$ be defined as
$$
\rho_\beta^{*} = \frac{1}{2}\left[\mathcal{N}\left(\left[\sqrt{2\frac{\bar{y}}{\bar{x}}}, -\sqrt{2\frac{\bar{y}}{\bar{x}}}, 0\right], \beta^{-1} I_{3\times3}\right) + \mathcal{N}\left(\left[\sqrt{2\frac{\bar{y}}{\bar{x}}}, \sqrt{2\frac{\bar{y}}{\bar{x}}}, 0\right], \beta^{-1} I_{3\times3}\right) \right],
$$
where $\mathcal{N}(\mu,\Sigma)$ denotes the multivariate Gaussian distribution with mean $\mu$ and covariance $\Sigma$. As $\beta \rightarrow \infty$, we have that $\rho^{*}_{\beta} \rightharpoonup \rho^{*}$, where $\rho^*$ is given by \eqref{eq:rhostardef}. However, weak convergence does not suffice for pointwise convergence of the corresponding estimators, since 
the function $\sigma^*(x)=a(wx+b)_{+}$ is unbounded (in $x$). To solve this issue, we observe that the fourth moment of $\rho^{*}_{\beta}$ is uniformly bounded as $\beta \rightarrow \infty$. Thus, by the de la Vallée Poussin criterion (see e.g. \cite{hu2011note}), we have that the sequence of random variables $\{\|X_\beta\|_2^2\}_\beta$ is uniformly integrable, with $X_\beta\sim \rho^*_\beta$.
Consider a ball $B_r=\{\mathbf{v}\in\mathbb{R}^3: \|\mathbf{v}\|_2 \leq r\}$, for $r > \sqrt{4\bar{y}/\bar{x}}$. Then, we have
\begin{align}\label{RHSpwcounter}
    \begin{split}
    \Bigg|\int_{\mathbb R^3} a(wx+b)_{+}&(\rho^{*}_{\beta}(a,w,b)-\rho^{*}(a,w,b))\mathrm{d}a\mathrm{d}w\mathrm{d}b\Bigg| \\ \ \ \leq  &\Bigg|\int_{B_r} a(wx+b)_{+}(\rho^{*}_{\beta}(a,w,b)-\rho^{*}(a,w,b))\mathrm{d}a\mathrm{d}w\mathrm{d}b\Bigg| \\  &\ \hspace{3.64em}+ \left|\int_{\mathbb{R}^3\setminus B_r} a(wx+b)_{+}\rho^{*}_{\beta}(a,w,b)\mathrm{d}a\mathrm{d}w\mathrm{d}b\right|,
    \end{split}
\end{align}
where we have used that the support of $\rho^*$ lies inside the ball $B_r$. The first term in the RHS of (\ref{RHSpwcounter}) vanishes as $\beta\rightarrow \infty$ by weak convergence, since the function $a(wx+b)_{+}$ is bounded inside $B_r$. For the second term, we have that, for any $x\in [-\bar{x}, \bar{x}]$,
\begin{align*}
  \left|\int_{\mathbb{R}^3\setminus B_r} a(wx+b)_{+}\rho^{*}_{\beta}(a,w,b)\mathrm{d}a\mathrm{d}w\mathrm{d}b\right| &\leq \int_{\mathbb{R}^3\setminus B_r} (|aw| \cdot |x| +|ab|)\rho^{*}_{\beta}(a,w,b)\mathrm{d}a\mathrm{d}w\mathrm{d}b \\
  &\leq C \int_{\mathbb{R}^3\setminus B_r} (a^2+b^2+w^2)\rho^{*}_{\beta}(a,w,b)\mathrm{d}a\mathrm{d}w\mathrm{d}b,  
\end{align*}
where $C>0$ is a constant independent of $(\beta,r)$. Since the sequence $\{\|X_\beta\|_2^2\}_\beta$ is uniformly integrable, we can make the RHS arbitrary small by picking a sufficiently large $r$ (uniformly for all $\beta$). As a result, \eqref{eq:dist1} readily follows.
\begin{figure}[t!]
    \centering
    \subfloat[$\beta^{-1}=0.005$]{\includegraphics[width=.33\columnwidth]{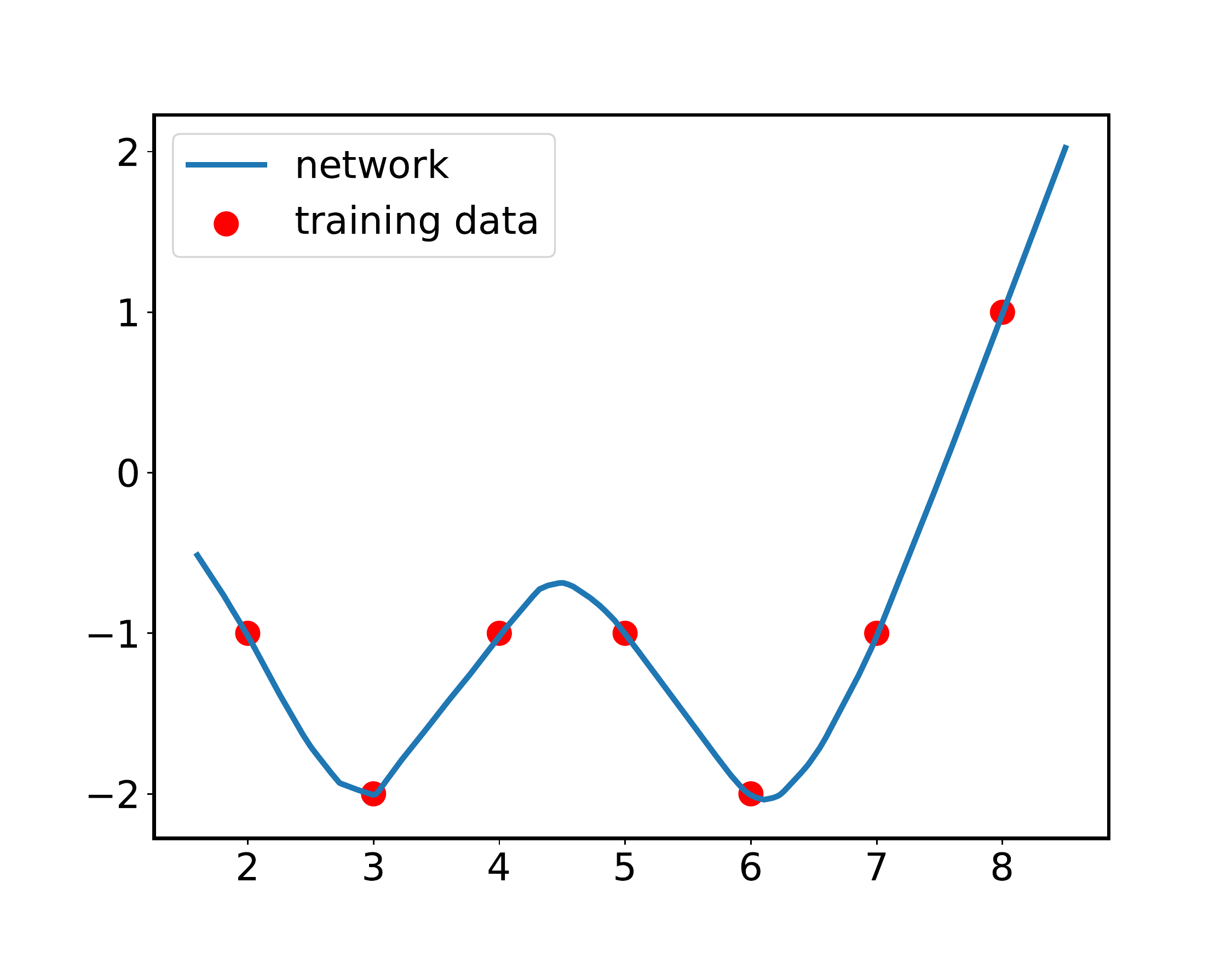}}
    \subfloat[$\beta^{-1}=0.0001$]{\includegraphics[width=.33\columnwidth]{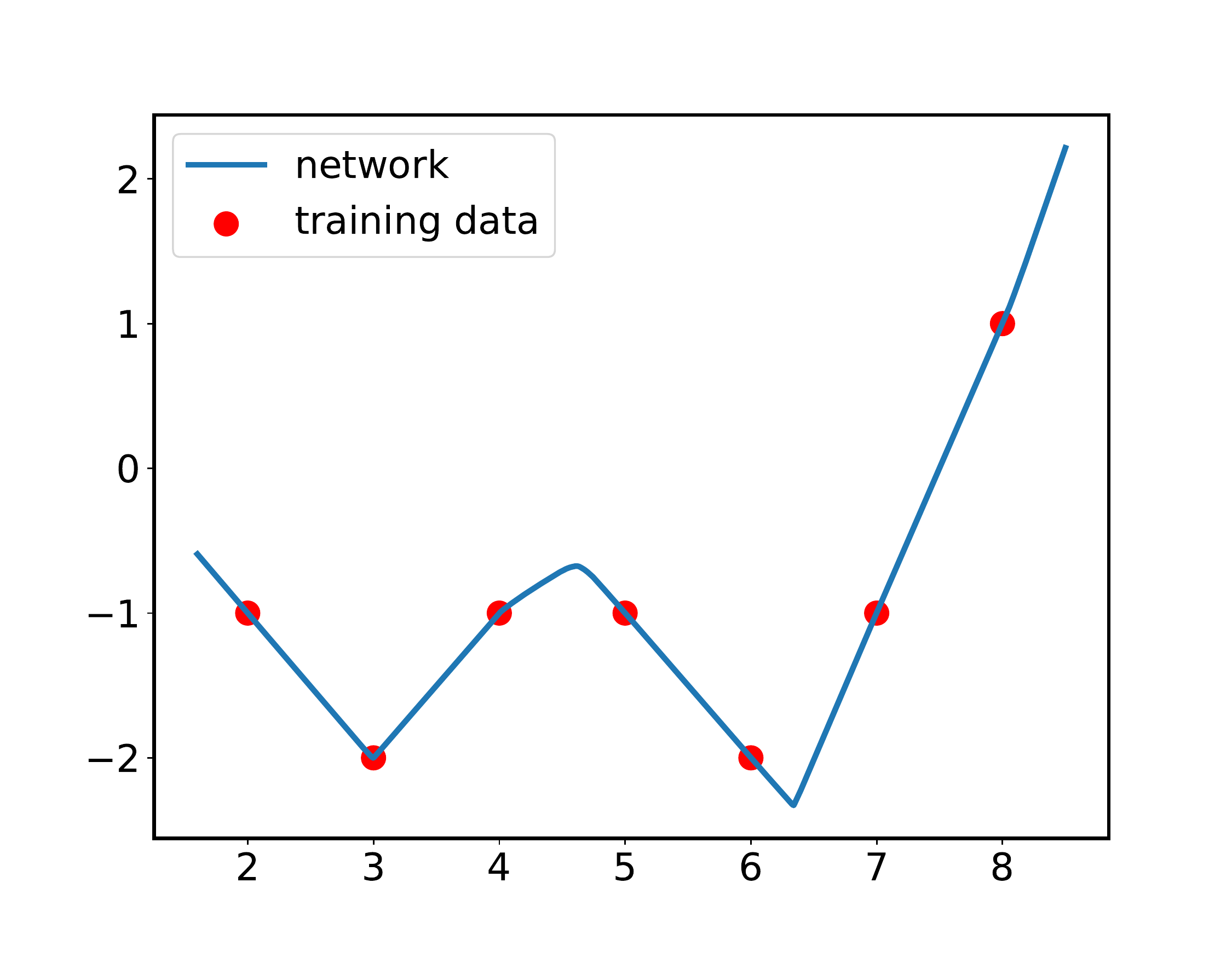}}
    \subfloat[$\beta^{-1}=0$]{\includegraphics[width=.33\columnwidth]{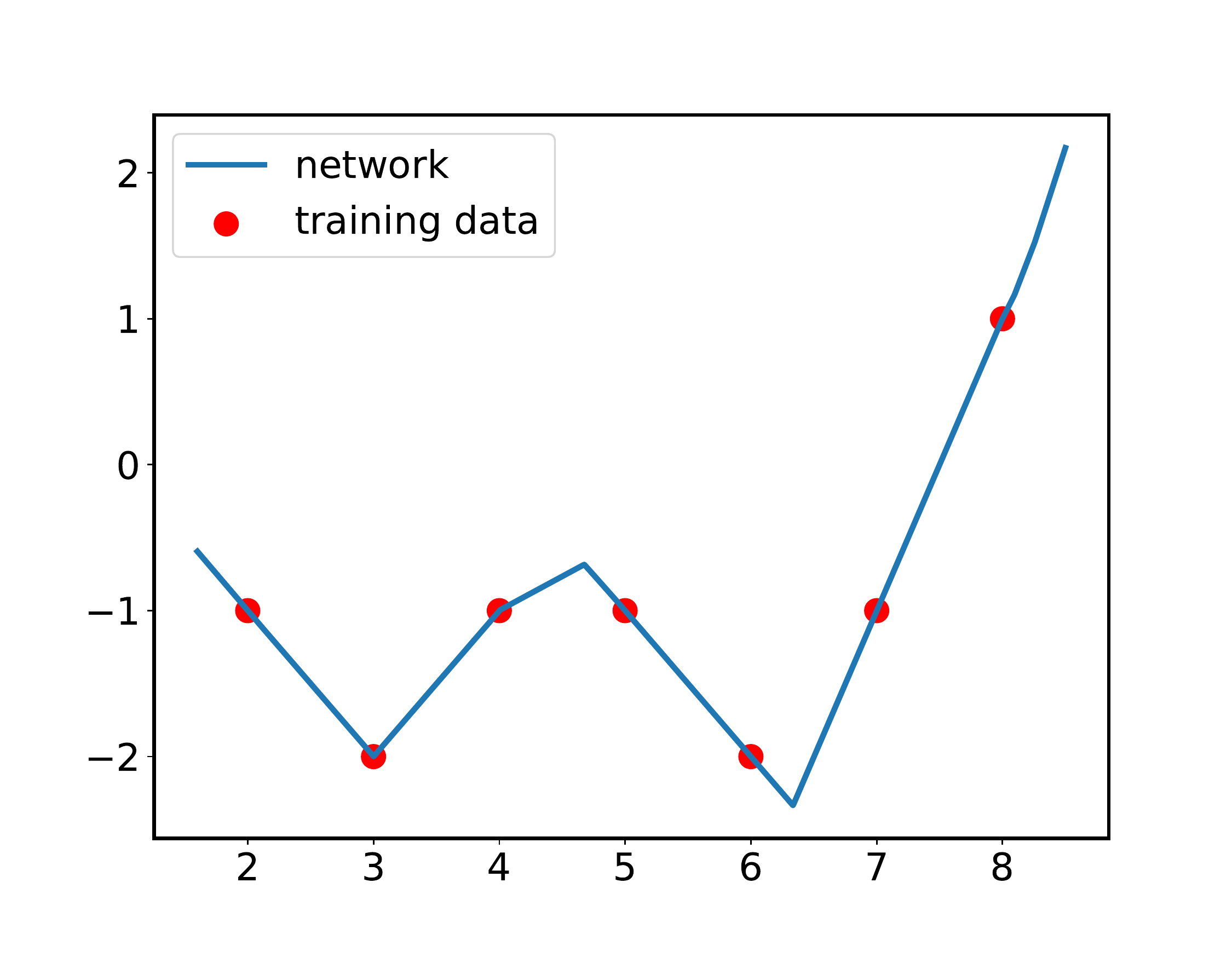}}
\caption{Functions learnt by a two-layer ReLU network with $N=500$ neurons, for different values of the temperature parameter $\beta^{-1}$. The regularization coefficient $\lambda$ is set to zero.}\label{fig:6}
\end{figure}
Note that \eqref{eq:dist1} immediately implies that, as $\beta \rightarrow \infty$, $R(\rho^{*}_{\beta}) \rightarrow R(\rho^{*}) = 0$.
Furthermore, with similar arguments we obtain that, as $\beta\to\infty$, $M(\rho^{*}_{\beta})\to M(\rho^{*})$. By convexity of the differential entropy, we have that $H(\frac{1}{2} \rho_1 + \frac{1}{2} \rho_2) \geq \frac{1}{2}H(\rho_1) + \frac{1}{2}H(\rho_2)$. Hence, $
H(\rho_\beta^{*}) \geq  C \log (2\pi e/\beta)
$, where $C>0$ is independent of $\beta$. By combining these bounds on $R(\rho^{*}_{\beta})$, $M(\rho^{*}_{\beta})$ and $H(\rho^{*}_{\beta})$, we conclude that 
$$
\limsup_{\beta\to\infty}\mathcal{F}_{\beta}(\rho^{*}_{\beta}) \leq \mathcal{F}_{\infty}(\rho^{*}), 
$$
which, combined with \eqref{eq:freepwl}, completes the proof of \eqref{eq:dist2}. 

\paragraph{Proof of \emph{(ii)}.} From \eqref{eq:dist3}, we obtain that $\lim_{\beta\to\infty} R(\rho_{\beta}) = R(\rho)$. As the second moment is lower-semicontinuous and bounded from below, we have that  $\liminf_{\beta\to\infty} M(\rho_\beta)\ge M(\rho)$. Furthermore, 
Lemma 10.2 in \cite{mei2018mean} implies that
$$
\mathcal{F}_{\beta}(\rho_{\beta}) \geq \frac{1}{2}R(\rho_{\beta}) + \frac{\lambda}{4}M(\rho_{\beta}) - \beta^{-1} (1 + 3\log8\pi) + \beta^{-1}\log(\beta\lambda). 
$$
By combining these bounds, we have that 
\begin{equation}\label{eq:combobds}
    \liminf_{\beta\to\infty}\mathcal{F}_{\beta}(\rho_{\beta})\ge \frac{1}{2}R(\rho) + \frac{\lambda}{4}M(\rho).
\end{equation}
By replicating the argument leading to \eqref{eq:argprev} (but now with regularization coefficient $\lambda/2$ instead of $\lambda$), we obtain that the RHS of \eqref{eq:combobds} can be lower bounded as  
\begin{equation}\label{eq:combobds2}
\frac{1}{2}R(\rho) + \frac{\lambda}{4}M(\rho) \geq \lambda-\frac{\lambda^2}{8}\ge \frac{7\lambda}{8}> \frac{\lambda}{5},
\end{equation}
for all $\lambda\le 1$. Then, the desired result follows from \eqref{eq:combobds} and \eqref{eq:combobds2}.

\section{Numerical Simulations}\label{section:numsim}

We consider training the two-layer neural network \eqref{eq:NN} with $N$ neurons and ReLU activation functions, i.e., $\sigma^*(x, \btheta)=a(wx+b)_+$, with $\btheta=(a, w, b)$. We run the SGD iteration \eqref{eq:SGD} (no momentum or weight decay, batch size equal to $1$), and we plot the resulting predictor once the algorithm has converged. The results for two different unidimensional datasets are reported in Figures \ref{fig:6} and \ref{fig:7}. In these experiments, we set $N=500$ and we remark that the plots for wider networks ($N\in\{1000,2000,5000\}$) look identical.
We also point out that the shape of the predictor does not change for different runs of the SGD algorithm (with different initializations, and order of the training samples). \rev{This is in agreement with the mean-field predictions when $\beta<\infty$, $\lambda>0$ and the variance of the initialization does not depend on $N$.}
The same setup is employed to obtain the numerical results of Figure \ref{fig:intro} and \ref{subfig:plots2}, discussed in Section \ref{sec:intro} and \ref{section:knotsarethere}, respectively.

In Figure \ref{fig:6}, we plot the shape of the function learnt by the network for different values of the temperature parameter $\beta^{-1}$. The learning rate is $s_k = 1$, the total number of training epochs required for SGD to converge is roughly $5 \times 10^4$, and no $\ell_2$ regularization is enforced ($\lambda=0$).
As predicted by our theoretical findings, the predictor approaches a piecewise linear function whose number of tangent changes (or knots) is proportional to the number of training samples (and not to the width of the network): if $\beta^{-1} = 0.005$, the predictor is still rather smooth; if $\beta^{-1}=10^{-4}$, the predictor sharpens, except for a smoother tangent change in the interval $[4,5]$; and finally if $\beta=0$, the predictor is piecewise linear. Let us highlight that the knots sometimes do not coincide with the training data points, as suggested by the results of Section \ref{section:mainres} and demonstrated in the example of Section \ref{section:knotsarethere}.

\begin{figure}[t!]
    \centering
    \subfloat[$\beta^{-1}=0.01,\ \lambda=0.003$]{\includegraphics[width=.5\columnwidth]{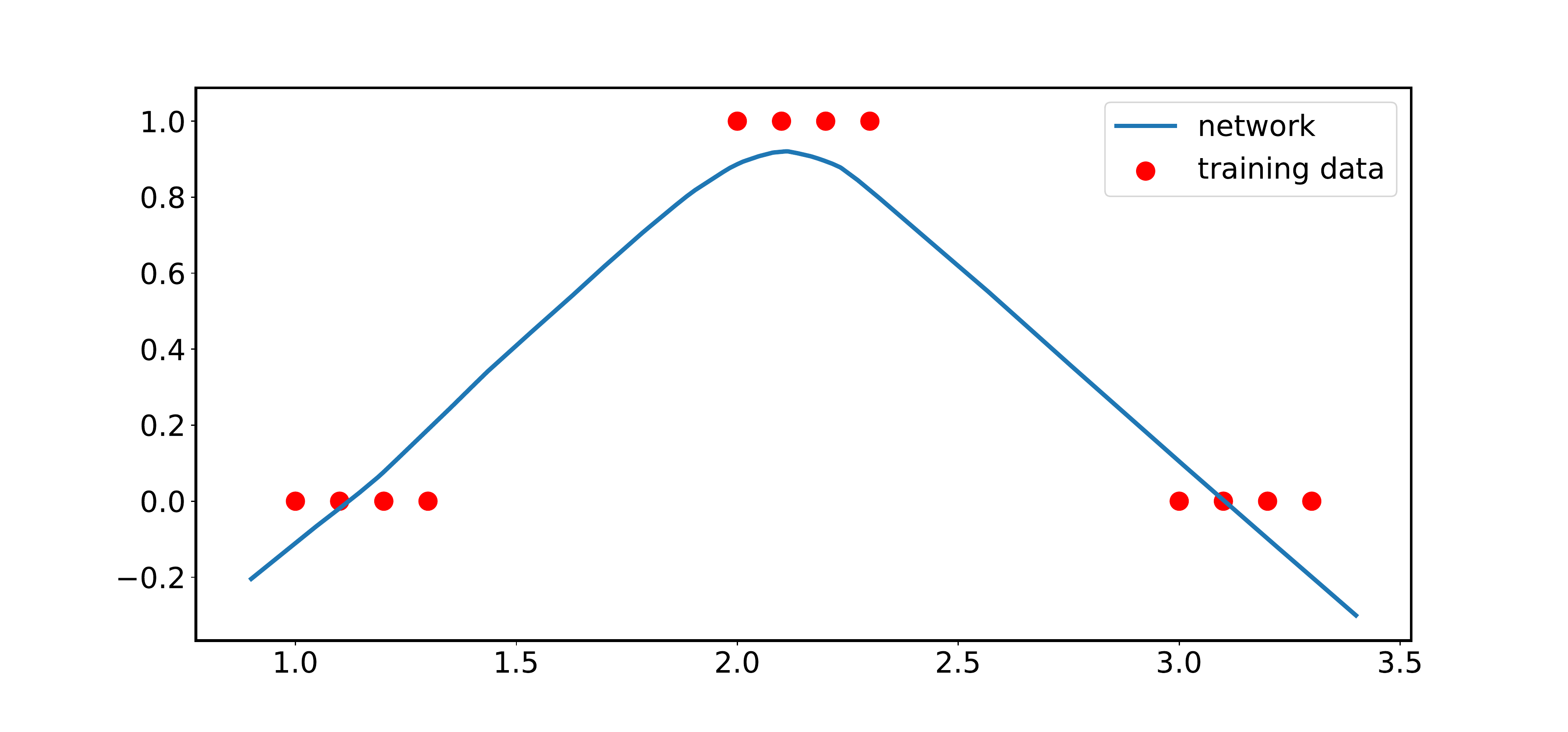}}
    \subfloat[$\beta^{-1}=0.001,\ \lambda=0$]{\includegraphics[width=.5\columnwidth]{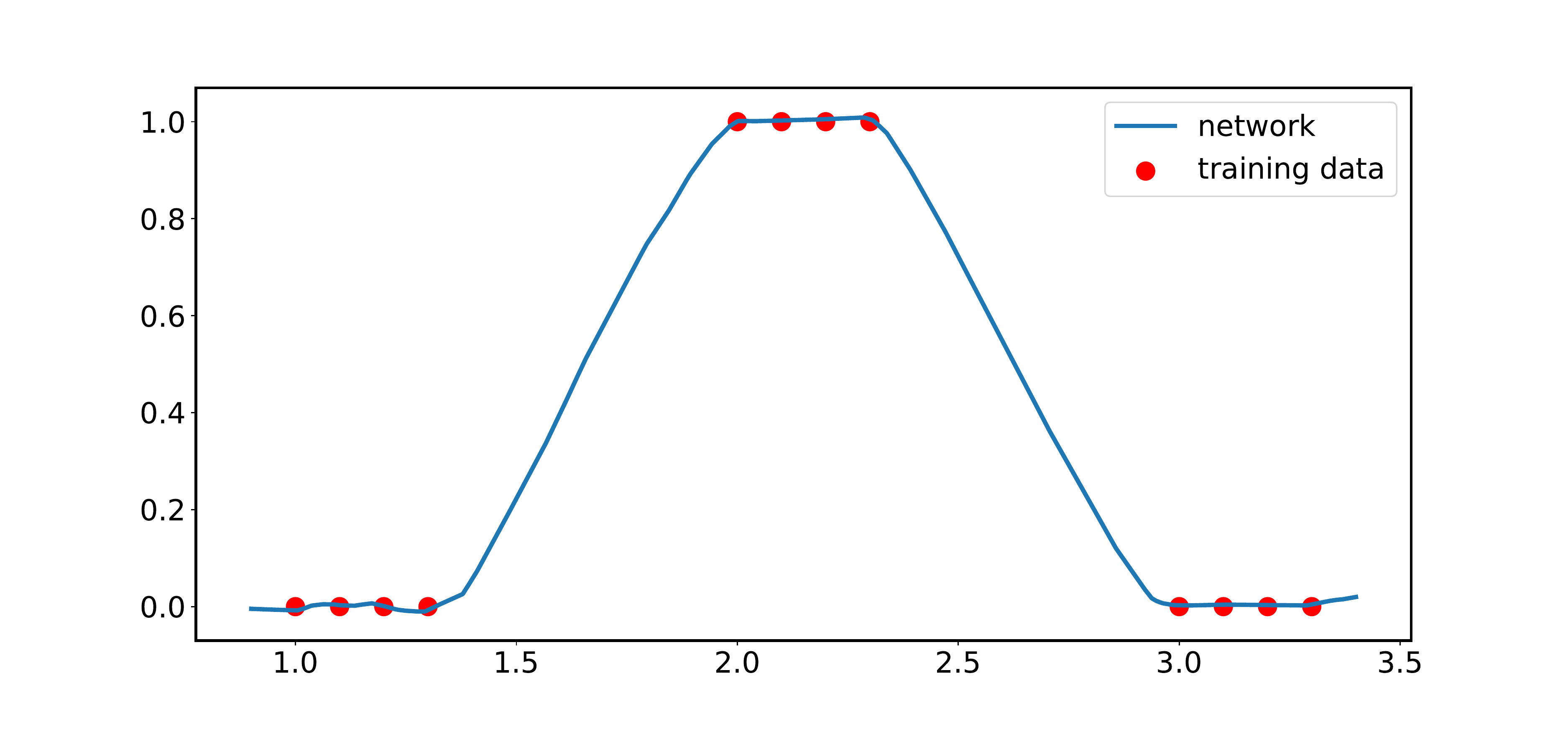}}
    
    \bigskip
    \subfloat[$\beta^{-1}=0,\ \lambda=0.003$]{\includegraphics[width=.5\columnwidth]{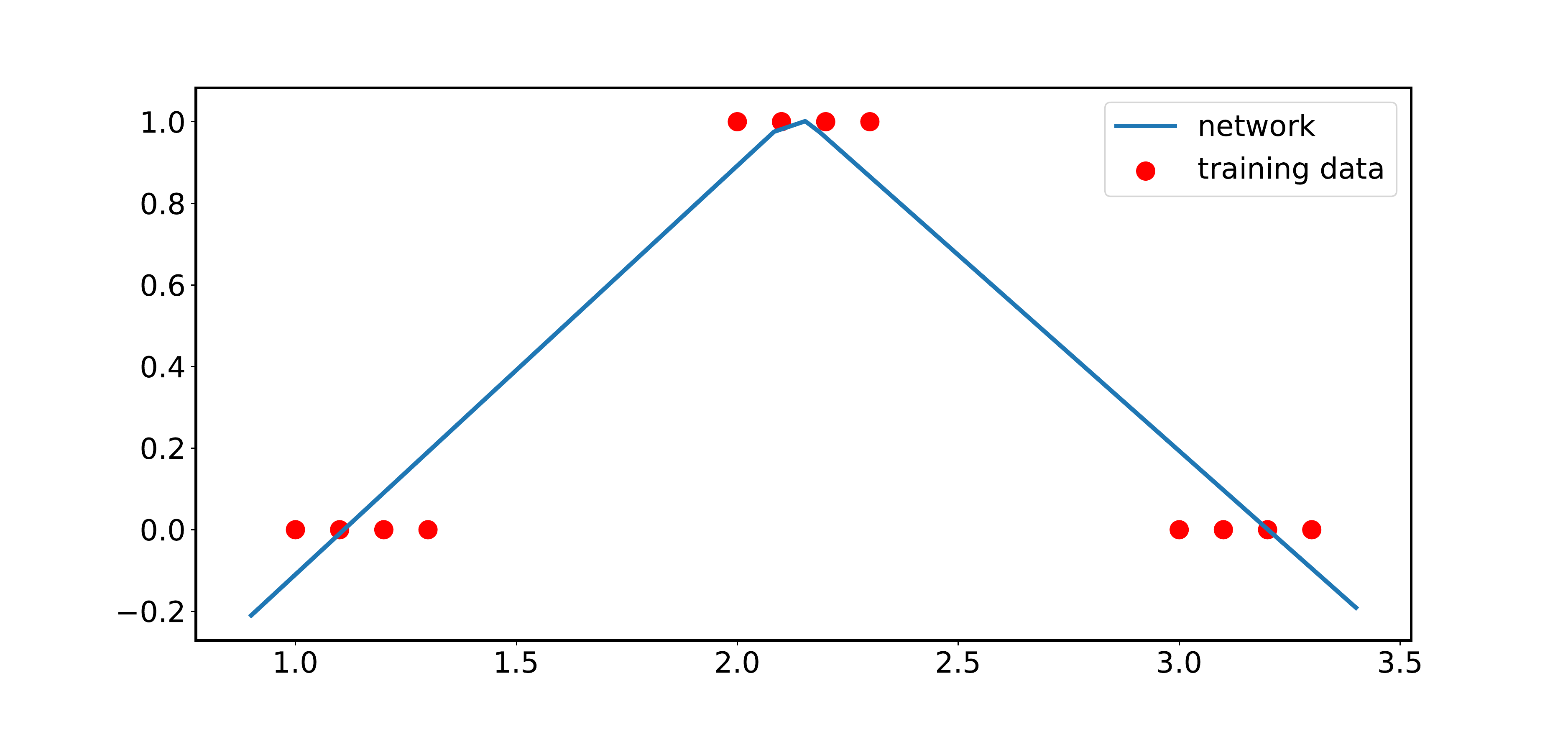}}
    \subfloat[$\beta^{-1}=0,\ \lambda=0$]{\includegraphics[width=.5\columnwidth]{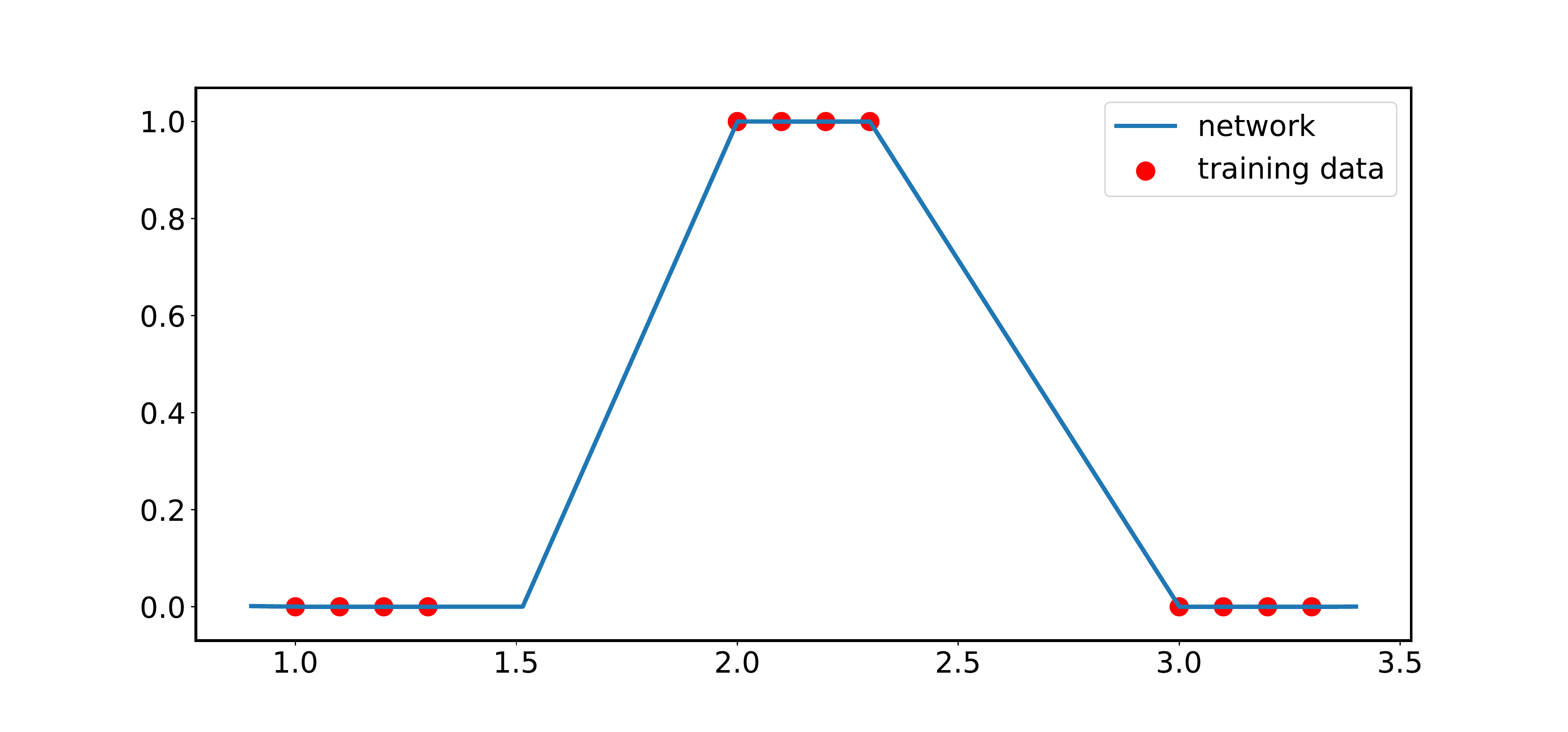}}
\caption{Functions learnt by a two-layer ReLU network with $N=500$ neurons, for different values of the temperature parameter $\beta^{-1}$ and the regularization coefficient $\lambda$.}\label{fig:7}
\vspace{-2mm}
\end{figure}

In Figure \ref{fig:7}, we consider another dataset and plot the neural network predictor for four different pairs of $(\beta^{-1}, \lambda)$. By comparing (a) with (b) and with the bottom plots (c)-(d), it is clear that the solution becomes increasingly piecewise linear as the noise decreases. Furthermore, the effect of regularization can be noticed by comparing plots (a)-(c) on the left with plots (b)-(d) on the right: adding an $\ell_2$ penalty implies that the network does not fit the data and therefore the location of the knots changes.

\section{Comparison with Related Work}\label{sec:discussion}

The line of works \cite{savarese2019infinite,ergen2020convex,ongie2019function,parhi2020banach} studies the properties of the minimizers of certain optimization objectives, and therefore these results are not directly connected to the dynamics of gradient descent algorithms. On the contrary, the goal of this paper is to understand the implicit bias due to gradient descent, namely, to characterize the structure of the neural network predictor once the algorithm has converged. Another important difference lies in the fact that our $\ell_2$ regularization involves all the parameters, including the bias $b$, while existing work does not regularize the biases of the network. This fact may lead to the qualitatively different behavior unveiled by our study.
Going into detail, \cite{ergen2020convex} show that the network that minimizes a regularized objective implements a linear spline. In contrast, our analysis suggests that the knots (i.e., abrupt changes in the tangent of the predictor) can occur at points different from the training samples. Let us also mention that \cite{savarese2019infinite} and \cite{ongie2019function} give an explicit form of the functional regularizer of the neural network solution, but it is not clear how to characterize the function class to which the solution belongs, e.g., whether the function implemented by the neural network is a cubic or linear spline. 
Furthermore, the upper bound on the number of knot points appearing in \cite{parhi2020banach} depends on the null space of a certain operator, and computing  the dimension of this null space explicitly appears to be difficult.

The work by \cite{williams2019gradient} considers a noiseless setting with no regularization, and it studies the properties of gradient flow on the space of reduced parameters. In particular, 
the initial ReLU neurons depending on three parameters ($a$, $b$ and $w$, in our notation) are mapped to a two-dimensional space, where each neuron 
is defined by its magnitude and angle. Then, it is proven that the Wasserstein gradient flow on this reduced space drives the activation points of the ReLU neurons to the training data. As a consequence,  the solution found by SGD is piecewise linear and the knot points are located at a subset of the training samples. \cite{blanc2020implicit} consider SGD with label noise and no regularization, and show that, once the squared loss is close to zero, the algorithm minimizes an auxiliary quantity, i.e., the sum of the squared norms of the gradients evaluated at each training point. By instantiating this result in the case of a two-layer ReLU network with a skip connection, the authors show that the solution found by SGD is piecewise linear with the minimum amount of knots required to fit the data.

While our result shares some similarities with \cite{williams2019gradient} and \cite{blanc2020implicit}, let us highlight some crucial differences. First, we note that
\cite{blanc2020implicit} consider a two-layer network with a skip connection which fits the training data perfectly. In contrast, our 
two-layer model is standard (no skip connections) and the analysis does not require a perfect fit of the data, as we allow for non-vanishing $\ell_2$ regularization. Furthermore, even when the regularization term is vanishing, our characterization does not lead to the minimum number of knots required to fit the data (as in \cite{blanc2020implicit}), and the knots are not necessarily located at the training points (as in \cite{williams2019gradient}). In fact, our theoretical results suggest the presence of additional knot points, a feature that is confirmed in numerical simulations. The novel behavior that we unveil appears to be due to the differences in the setting and to the addition of (a possibly vanishing) $\ell_2$ regularization term in the optimization. Concerning the proof techniques, the work by \cite{blanc2020implicit} exploits an Ornstein-Uhlenbeck like analysis, while this work tackles the increasingly popular mean-field regime. Our key technical contribution is to analyze the Gibbs minimizer of a certain free energy, while \cite{williams2019gradient} consider the gradient flow on reduced parameters and connect it to the flow on the full parameters via a specific type of initialization. Our analysis directly establishes a result for the full parameters, and it requires mild technical assumptions on the initialization. Finally, let us point out that it is an open problem to extend the approach of \cite{williams2019gradient} to a regularized objective, because of the non-injectivity of the mapping to the canonical parameters.

\section{Concluding Remarks}\label{conclusion}

We develop a new technique to characterize the implicit bias of gradient descent methods used to train overparameterized neural networks. In particular, we consider training a wide two-layer ReLU network via SGD for a univariate regression task and, by taking a mean field view, we show that the predictor obtained at convergence has a simple piecewise linear form. Our results hold in the regime of vanishingly small noise added to the SGD gradients, and handle both constant and vanishing $\ell_2$ regularization. The analysis leads to an exact characterization of the number and location of the tangent changes (or knots) in the predictor: 
on each interval between consecutive training inputs, the number of knots is at most three. To obtain the desired result, we relate the distribution of the weights of the network once SGD has converged to the minimizer of a certain free energy. Then, we prove that the curvature of the predictor resulting from this minimizer vanishes everywhere except in a \emph{cluster set}, which concentrates on at most three points per prediction interval. This novel strategy opens the way to several interesting directions. We discuss them below.

We focus on ReLU networks. However, only the following two properties of the activation appear to be crucial for the analysis: \emph{(i)} its second derivative behaves like a Dirac delta, and \emph{(ii)} its growth is at most linear. In fact, the first property reduces the computation of the curvature to an integral over a lower-dimensional subspace; and the second property leads to a uniform bound on the second moment of the network parameters. Hence, our approach may be extendable to a more general class of piecewise linear activations, although this would come at the cost of a more intricate structure for the cluster set containing the location of the tangent changes.

We focus on univariate regression. The natural ordering on one-dimensional features allows for a convenient characterization of the activation regions that correspond to each input conditioned on the sign of $w$. For larger input dimension, such a characterization appears to be cumbersome, as the structure of these regions is induced by the intersection of hyperplanes. Furthermore, in the setting considered in this work, the cluster set is the union of intervals where certain second-degree polynomials are non-positive. For multivariate regression, we expect the cluster set to be connected to the non-positive set of quadratic forms. Hence, the structure of the cluster set may be highly non-linear, and its concentration can occur on subspaces which are hard to define explicitly. 

\rev{We provide an upper bound on the number of tangent changes of the predictor. The numerical simulations of Section \ref{section:knotsarethere} suggest that one and two knots between consecutive training inputs can occur. Showing whether our theoretical bound of three knots is tight by providing an explicit example, or by proving a tighter bound of two, is an open question  for possible future work. We also remark that, given the errors $R_i$ of the neural network estimator at the data points, one can deduce the location of the knot points. Such implicit characterization 
is similar in spirit to the attractive/repulsive condition on the training points of \cite{williams2019gradient}.}

\rev{In conclusion, in this work we demonstrate how to exploit the Gibbs form of the minimizer in order to accurately characterize a functional property of the predictor learnt by the neural network using limiting arguments of the training process. The general spirit of this technique could potentially be informative in additional ways. For instance, utilizing the properties of the Gibbs distribution reached at convergence may be of additional interest for future study. We conjecture that this could yield insight into the stability of the predictor with respect to perturbations in the training data at \emph{finite} temperature $\beta$.}

\section*{Acknowledgements}

We would like to thank Mert Pilanci for several exploratory discussions in the early stage of the project, Jan Maas for clarifications about \cite{jordan1998variational}, and Max Zimmer for suggestive numerical experiments. A. Shevchenko and M. Mondelli are partially supported by the 2019 Lopez-Loreta Prize. V. Kungurtsev acknowledges support to the OP VVV project
CZ.02.1.01/0.0/0.0/16\_019/0000765 Research Center for Informatics.

\bibliographystyle{amsalpha}
\bibliography{refs}

\appendix
\section{Technical Results}\label{techlemmas}

In this appendix, we prove a few technical results which are used in the arguments of Section \ref{selected_proofs}.
More specifically, in Section \ref{appendix:convmin} we show that, as $\tau\to\infty$, the minimizer $\rho_{\tau, m}^*(\btheta)$ of the free energy $\mathcal F^{\tau, m}$ converges pointwise in $\btheta$ to the minimizer $\rho_{m}^*(\btheta)$ of the free energy $\mathcal F^{m}$. This pointwise convergence is needed to establish the result of Lemma \ref{convdelta}. In Section \ref{appendix:risk}, we derive upper bounds on the risk of the minimizer (used in Lemma \ref{worstcasebound}) and on its second moment (which implies that the sequence of predictors is equi-Lipschitz), and we also prove the lower bound on the partition function in Lemma \ref{finiteZ}. Finally, in Section \ref{appendix:lbpoly} we give the proof of Lemma \ref{welldefquadgeneric}, which lower bounds the growth of the polynomials $f^j$ and $f_j$.

\subsection{Convergence of Minimizers}\label{appendix:convmin}

\begin{lemma}[Convergence of densities]\label{DP} Let $\{\rho_n\}_n$ be a sequence of densities in $\mathcal{K}$ with uniformly bounded truncated entropy, that is
$$
\int \max\left\{\rho_n(\btheta)\log \rho_n(\btheta), 0\right\} \mathrm{d}\btheta \leq C, \quad \forall n,
$$
for some $C>0$ that is independent of $n$, and uniformly bounded second moment, i.e., $M(\rho_n) \leq C$ for all $n$. Then, there exists a subsequence $\{\rho_{n'}\}_{n'}$ of $\{\rho_n\}_n$ and $\rho \in \mathcal{K}$ such that
$
\rho_{n'} \rightharpoonup \rho 
$
and
$$
C \ge  \liminf_{n'\rightarrow\infty} M(\rho_{n'}) \geq M(\rho) \geq 0.
$$
\end{lemma}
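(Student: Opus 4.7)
The plan is to combine tightness with uniform integrability in order to extract a weak $L^1$ convergent subsequence via the Dunford--Pettis theorem, and then to obtain the inequality for the second moment from its lower semicontinuity under weak convergence.

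First, I would verify the three hypotheses of Dunford--Pettis on $L^1(\mathbb{R}^D)$. Boundedness is trivial since $\|\rho_n\|_{L^1} = 1$ for each $n$. Tightness (no escape of mass to infinity) follows from Markov's inequality applied to the uniform bound on the second moment: for every $R > 0$,
$$\int_{\|\btheta\|_2 > R} \rho_n(\btheta)\,\mathrm{d}\btheta \leq \frac{M(\rho_n)}{R^2} \leq \frac{C}{R^2},$$
uniformly in $n$. Uniform integrability follows from the de la Vallée Poussin criterion applied to the function $\Phi(t) = \max\{t\log t, 0\}$, which is non-negative, convex on $[0,\infty)$, and satisfies $\Phi(t)/t \to \infty$ as $t\to \infty$; the truncated-entropy hypothesis $\int \Phi(\rho_n)\,\mathrm{d}\btheta \leq C$ is exactly what is required.

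Once these conditions are in place, the Dunford--Pettis theorem provides a subsequence $\rho_{n'} \rightharpoonup \rho$ in $L^1(\mathbb{R}^D)$. Testing against the constant function $1 \in L^\infty$ preserves total mass, so $\int \rho\,\mathrm{d}\btheta = 1$. Non-negativity is preserved under weak $L^1$ limits by testing against $\mathbf{1}_{\{\rho < 0\}} \in L^\infty$ and deriving a contradiction, so $\rho \geq 0$ almost everywhere.

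Finally, for the inequality $M(\rho) \leq \liminf_{n'} M(\rho_{n'})$, I would exploit the lower semicontinuity of the second moment. For each $K > 0$, the function $\btheta \mapsto \min\{\|\btheta\|_2^2, K\}$ lies in $L^\infty$, hence weak $L^1$ convergence yields
$$\int \min\{\|\btheta\|_2^2, K\}\, \rho(\btheta)\,\mathrm{d}\btheta = \lim_{n'\to\infty}\int \min\{\|\btheta\|_2^2, K\}\, \rho_{n'}(\btheta)\,\mathrm{d}\btheta \leq \liminf_{n'\to\infty} M(\rho_{n'}) \leq C.$$
Letting $K \to \infty$ on the left-hand side and invoking the monotone convergence theorem gives $M(\rho) \leq \liminf_{n'} M(\rho_{n'}) \leq C$, and in particular $\rho \in \mathcal{K}$. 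The main technical point throughout is the verification of the Dunford--Pettis hypotheses on the infinite-measure space $\mathbb{R}^D$: this is where both assumptions play an essential role, the second moment controlling escape of mass to infinity, and the truncated entropy controlling local concentration through de la Vallée Poussin.
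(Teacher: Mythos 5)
Your proposal is correct and follows the same route as the paper's own proof: de la Vallée Poussin applied to $\Phi(t)=\max\{t\log t,0\}$ gives uniform integrability from the truncated-entropy bound, Dunford--Pettis then furnishes a weakly $L^1$-convergent subsequence, and the second-moment inequality follows from lower semicontinuity (which you establish by the standard truncation argument with $\min\{\|\btheta\|_2^2,K\}$ and monotone convergence). The one point where you are more explicit than the paper is in verifying the tightness hypothesis of Dunford--Pettis on the infinite-measure space $\mathbb{R}^D$ via Markov's inequality applied to $M(\rho_n)\le C$; the paper leaves this step implicit, relying on the $\sigma$-finite formulation of Dunford--Pettis from \cite{laurenccot2015weak}, where that condition is typically folded into the definition of uniform integrability.
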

\begin{proof}[Proof of Lemma \ref{DP}] Since $z \mapsto \max\{z\log z,0\}$, $z\in[0,+\infty)$, has super-linear growth, this result in conjunction with the de la Vallée Poussin criterion (see for instance \cite{hu2011note})
guarantees that the sequence of densities $\{\rho_n\}_{n}$ is uniformly integrable. By Dunford-Pettis Theorem (for $\sigma$-finite measure spaces, see for instance \cite{laurenccot2015weak}), relative weak compactness in $L_1$ is equivalent to uniform integrability. Hence, there exists a density $\rho$ and a subsequence $\{\rho_{n'}\}_{n'}$ of $\{\rho_n\}_n$ such that $\rho_{n'} \rightharpoonup \rho$. 

As $M(\cdot)$ is lower-semicontinuous with respect to the topology of weak convergence in $L_1$ and bounded from below, we have that
$
\liminf_{n'\rightarrow\infty} M(\rho_{n'}) \geq M(\rho).
$
Furthermore, as $M(\rho_{n}) \le C$, we get that $M(\rho) \le C$ and, thus, $\rho\in\mathcal{K}$.
\end{proof}

\begin{lemma}[Uniformly bounded $M(\rho^{*}_{\tau,m})$ and limit of $\rho^{*}_{\tau,m}$]\label{limitrho_mtau} Assume that condition \textbf{A1} holds. Consider the sequence of minimizing Gibbs distributions $\{\rho^{*}_{\tau,m}\}_{\tau}$. The following results hold:
\begin{enumerate}
    \item $M(\rho^{*}_{\tau,m})$ is uniformly bounded in $(\tau,m)$. Moreover, if $\beta\lambda > 1$,
    $$
    M(\rho^{*}_m),\ M(\rho^{*}_{\tau,m}) \leq \frac{C_3}{\lambda},\quad \forall \tau \in (0,+\infty),
    $$
    where $C_3>0$ is independent of $(\tau,m,\beta,\lambda)$.
    \item Given any $m$ consistent with \textbf{A1}, there exists $\rho_m \in \mathcal{K}$ and a subsequence $\{\rho^{*}_{\tau',m}\}_{\tau'}$ (which with an abuse of notation we identify with  $\{\rho^{*}_{\tau,m}\}_{\tau}$) such that $\rho^{*}_{\tau,m} \rightharpoonup \rho_m$ as $\tau\to\infty$.
      \item Given any $m$ consistent with \textbf{A1}, $\lim_{\tau\to\infty}R^{\tau,m}_i(\rho^{*}_{\tau,m}) = R^m_i(\rho_m)$ for all $i\in [M]$, and $\liminf_{\tau\rightarrow \infty} \mathcal{F}^{\tau,m}(\rho^{*}_{\tau,m}) \geq \mathcal{F}^{m}(\rho_{m})$.
\end{enumerate}

\end{lemma}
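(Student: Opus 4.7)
The plan is to prove the three parts in sequence, with Part~1's uniform moment bound feeding into the tightness step for Part~2, and both together powering the lower semicontinuity argument of Part~3. For Part~1, I would upper bound $\mathcal{F}^{\tau,m}(\rho^{*}_{\tau,m})$ by evaluating the free energy at a fixed reference density, for instance the standard Gaussian $\rho_G(\btheta)=(2\pi)^{-D/2}\exp(-\|\btheta\|_2^2/2)$: the sandwich bound \eqref{eq:sandwich} on the truncated soft-plus together with bounded labels makes $R^{\tau,m}(\rho_G)$ uniformly bounded, while $M(\rho_G)$ and $H(\rho_G)$ are explicit constants, so $\mathcal{F}^{\tau,m}(\rho^{*}_{\tau,m})\le \mathcal{F}^{\tau,m}(\rho_G)\le C$ uniformly in $(\tau,m)$. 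Combined with the standard lower bound from Lemma~10.2 of \cite{mei2018mean},
\[
\mathcal{F}^{\tau,m}(\rho)\ge \tfrac{\lambda}{4}M(\rho)-\beta^{-1}\bigl[1+3\log(8\pi/(\beta\lambda))\bigr],
\]
this yields $\tfrac{\lambda}{4}M(\rho^{*}_{\tau,m})\le C+\beta^{-1}[1+3\log(8\pi/(\beta\lambda))]$. Under condition \textbf{A1} with $\beta\lambda>1$, the second term is controlled by a numerical constant, giving $M(\rho^{*}_{\tau,m})\le C_3/\lambda$. The identical argument applied to $\mathcal{F}^m$ gives the matching bound on $M(\rho^{*}_m)$.

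For Part~2, I would apply Lemma~\ref{DP}, which requires both the uniform second moment (just proved) and a uniform bound on the truncated entropy. The strategy is to derive a uniform Gaussian-type pointwise envelope
\[
\rho^{*}_{\tau,m}(\btheta)\le C'\exp\bigl(-\tfrac{\beta\lambda}{2}\|\btheta\|_2^2\bigr),
\]
from which uniform integrability, and hence the required truncated-entropy bound, follows: the positive part of $\rho\log\rho$ lives on the compact region $\{\rho\ge 1\}$, which the envelope forces into a ball of fixed radius where the integrand is uniformly bounded. Starting from the Gibbs form \eqref{eq:rhodensitytm}, the cross term $\sum_i R^{\tau,m}_i(\rho^{*}_{\tau,m})\cdot a^{\tau,m}(w^{\tau,m}x_i+b)^m_{\tau}$ is bounded in absolute value by a constant independent of $(\tau,\btheta)$: the risks $|R^{\tau,m}_i|$ are uniformly bounded via an argument paralleling Lemma~\ref{lambdarisk} (expanding the output by Cauchy--Schwarz against the second moment bound), and the truncated activation is bounded by $2m^3$. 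A matching uniform lower bound on $Z_{\tau,m}$ is then obtained by restricting the defining integral to a fixed $O(1)$-radius ball around the origin, where \eqref{eq:sandwich} makes the integrand comparable to a Gaussian kernel with covariance $(\beta\lambda)^{-1}I$ independently of $\tau$, producing $Z_{\tau,m}\ge C/\sqrt{\beta^3\lambda^{3/2}}$ in parallel with Lemma~\ref{finiteZ}. Lemma~\ref{DP} then extracts a subsequence with $\rho^{*}_{\tau,m}\rightharpoonup\rho_m\in\mathcal{K}$. The main obstacle here is ensuring the partition-function lower bound is genuinely uniform in $\tau$: a direct transfer of Lemma~\ref{finiteZ} does not work because the two activations differ for every finite $\tau$; the fix is precisely to work on a compact domain where \eqref{eq:sandwich} makes the comparison explicit.

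For Part~3, the convergence $R^{\tau,m}_i(\rho^{*}_{\tau,m})\to R^m_i(\rho_m)$ reduces to passing to the limit in $\int a^{\tau,m}(w^{\tau,m}x_i+b)^m_{\tau}\rho^{*}_{\tau,m}(\btheta)\,\mathrm d\btheta$. I would split the integration over a ball $B_R$ and its complement: on $B_R$, the truncated activations converge uniformly to $a^m(w^mx_i+b)^m_+$, so adding and subtracting this limit reduces matters to weak $L^1$ convergence against the bounded continuous limit, plus a uniform-closeness term that is absorbed by the total mass; on $B_R^c$, Markov's inequality combined with the uniform second-moment bound makes the tail contribution arbitrarily small uniformly in $\tau$, and then $R\to\infty$ completes the argument. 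For the free-energy inequality, I would combine three facts: (i) $R^{\tau,m}(\rho^{*}_{\tau,m})\to R^m(\rho_m)$ from the previous step; (ii) $\liminf_\tau M(\rho^{*}_{\tau,m})\ge M(\rho_m)$ directly from Lemma~\ref{DP}; and (iii) $\liminf_\tau(-H(\rho^{*}_{\tau,m}))\ge -H(\rho_m)$, which follows from the convexity of $z\mapsto z\log z$ (equivalently, Fatou's lemma applied to the shifted non-negative integrand $\rho\log\rho+1/e$ combined with the weak $L^1$ convergence of $\rho^{*}_{\tau,m}$). Since the $R$-term converges rather than merely having a $\liminf$ bound, adding the three contributions legitimately yields $\liminf\mathcal{F}^{\tau,m}(\rho^{*}_{\tau,m})\ge \mathcal{F}^m(\rho_m)$, completing the lemma.
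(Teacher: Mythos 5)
Your plan follows the paper's route closely in all three parts: evaluating the free energy at a Gaussian reference density and invoking Lemma~10.2 of \cite{mei2018mean} for the second-moment bound; extracting a Gaussian pointwise envelope for $\rho^*_{\tau,m}$, using it to control the truncated entropy, and applying Lemma~\ref{DP}; and then passing to the limit in $R_i^{\tau,m}$ and using lower semicontinuity of $M$ and $-H$ to close Part~3. The ingredients and their order of deployment match the paper's.

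One concrete slip in Part~3: you cannot get $\liminf_{\tau}\bigl(-H(\rho^*_{\tau,m})\bigr)\ge -H(\rho_m)$ by applying Fatou's lemma to the shifted integrand $\rho\log\rho+1/e$ ``combined with weak $L^1$ convergence.'' Fatou requires pointwise a.e.\ (or in-measure) convergence of the integrands, and weak $L^1$ convergence supplies neither; so as written that step would fail. The correct mechanism is the weak lower semicontinuity of $\rho\mapsto\int\rho\log\rho$ as a convex integral functional, which one can obtain from the dual representation $\int\rho\log\rho=\sup_{\phi}\bigl\{\int\phi\,\rho-\int e^{\phi-1}\bigr\}$ (a supremum of weakly continuous affine functionals), or simply cite as the paper does, together with the lower bound on $-H$ from Lemma~10.1 of \cite{mei2018mean} along the uniformly-moment-bounded subsequence. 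A minor secondary remark: for the $A_1$-type term in Part~3 the paper uses a single dominated-convergence pass against the Gaussian envelope rather than your compact/tail split plus uniform convergence of the truncated activations on $B_R$; the DCT route avoids having to verify that uniform convergence, which is not entirely immediate from the pointwise definition of $v^{\tau,m}$.
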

\begin{proof}[Proof of Lemma \ref{limitrho_mtau}] We provide the proof of the first result for $\rho^{*}_{\tau,m}$. The arguments for $\rho^{*}_m$ are the same after changing the notation from $\rho^{*}_{\tau,m}$ to $\rho^{*}_m$. Let $\rho = \mathcal{N}(0,\mathbb{I}_{3\times3})$. Then, we have that
\begin{equation}\label{eq:RM}
    R^{\tau,m}(\rho) = \frac{1}{M} \sum_{i=1}^M y_i^2,\ M(\rho) = 3,\ H(\rho) = \frac{3}{2}\ln(2\pi e).
    \end{equation}
Note that for this $\rho$,  $R^{\tau,m}(\rho)$, in fact, does not depend on $(\tau,m,\beta,\lambda)$.

From Lemma 10.2 in \cite{mei2018mean}, since $\rho^{*}_{\tau,m}$ is the unique minimizer of the free energy $\mathcal{F}^{\tau,m}$, we have that the following inequalities hold
\begin{equation}\label{1.4.1}
    \mathcal{F}^{\tau,m}(\rho) \geq \mathcal{F}^{\tau,m}(\rho^{*}_{\tau,m}) \geq  R^{\tau,m}(\rho^{*}_{\tau,m})+\lambda / 4 \cdot M(\rho^{*}_{\tau,m})-1 / \beta \cdot[1+3 \cdot \log (8 \pi /(\beta \lambda))].
\end{equation}
Furthermore, by using (\ref{eq:RM}) and the fact that $\beta > C_1$ and $\lambda < C_2$, we obtain
\begin{equation}\label{eq:ubfree}
    \mathcal{F}^{\tau,m}(\rho) \leq K_1 + K_1 \lambda - \beta^{-1} K_1 \leq K_2,
\end{equation}
for some $K_1, K_2>0$ that are independent of $(\tau,m,\beta,\lambda)$. By combining (\ref{eq:ubfree}) and (\ref{1.4.1}) and using that $R^{\tau,m}(\rho^{*}_{m})\ge 0$, we conclude that
$$
\lambda \cdot  M(\rho^{*}_{\tau,m}) \leq K_3 + 1 / \beta \cdot[1+3 \cdot \log (8 \pi /(\beta \lambda))],
$$
where $K_3 > 0$ is independent of $(\tau,m,\beta,\lambda)$. As $\beta\lambda > 1$, the first claim immediately follows.

Since the activation and the labels are uniformly bounded in $\tau$ and $\{i\}_{i\in [M]}$ is finite, $|R_i^{\tau,m}(\rho^{*}_{\tau,m})|$ is uniformly bounded in $(\tau,i)$. Hence, the following lower bound on the partition function 
$Z_{\tau,m}(\beta,\lambda)$ holds
\begin{align}\label{Zmt_bound}
    Z_{\tau,m}(\beta,\lambda) & = \int\exp\left\{-\beta\left[\sum\limits_{i=1}^M R^{\tau,m}_i(\rho_{\tau,m}^{*}) \cdot a^{\tau,m} (w^{\tau,m}x_i+b)_{\tau}^m + \frac{\lambda}{2}\|\bm\theta\|_2^2\right]\right\}\mathrm{d}\btheta\nonumber\\&\geq\nonumber  
    \int\exp\left\{-\beta\left[\sum\limits_{i=1}^M |R^{\tau,m}_i(\rho_{\tau,m}^{*})| \cdot 2m^3 +\frac{\lambda}{2}\|\bm\theta\|_2^2\right]\right\}\mathrm{d}\btheta \\&\geq
    K_4 \int \exp\left\{- \frac{\beta\lambda}{2}\|\bm\theta\|_2^2\right\} \mathrm{d}\btheta =  \frac{K_5}{\sqrt{\beta^3\lambda^3}} \geq K_6,
\end{align}
for some $K_4,K_5,K_6>0$ independent of $\tau$ (but dependent on $(m, \beta, \lambda)$). In the same way, one can upper bound $\rho^{*}_{\tau,m}\cdot Z_{\tau,m}(\beta,\lambda)$ as
\begin{equation}\label{eq:bd1}
\exp\left\{-\beta\left[\sum\limits_{i=1}^M R^{\tau,m}_i(\rho_{\tau,m}^{*}) \cdot a^{\tau,m} (w^{\tau,m}x_i+b)_{\tau}^m + \frac{\lambda}{2}\|\bm\theta\|_2^2\right]\right\} \leq K_{7} \exp\left\{- \frac{\beta\lambda}{2}\|\bm\theta\|_2^2\right\},
\end{equation}
where $K_{7} > 0$ is independent of $\tau$ (but dependent on $(m, \beta, \lambda)$). Notice that we can increase $K_{7}$ to be arbitrarily large and still satisfy~\eqref{eq:bd1}, and in particular, increase it to satisfy $K_{7}/K_6 > 1$. Thus, by combining (\ref{Zmt_bound}) and (\ref{eq:bd1}), we get
 \begin{align*}
      &\int \max\{\rho^{*}_{\tau,m}(\btheta)\ln \rho^{*}_{\tau,m}(\btheta), 0\} \mathrm{d}\btheta \\ \leq &\int \max\left\{\frac{K_{7}}{K_6} \exp\left\{- \frac{\beta\lambda}{2}\|\bm\theta\|_2^2\right\} \cdot \left(\ln \frac{K_{7}}{K_6} - \frac{\beta\lambda}{2}\|\bm\theta\|_2^2\right), 0\right\} \mathrm{d}\btheta \\
      = &\int_{\Omega} \frac{K_{7}}{K_6} \exp\left\{- \frac{\beta\lambda}{2}\|\bm\theta\|_2^2\right\} \cdot \left(\ln \frac{K_{7}}{K_6} - \frac{\beta\lambda}{2}\|\bm\theta\|_2^2\right) \mathrm{d}\btheta \leq \int_{\Omega} \frac{K_{7}}{K_6} \ln \frac{K_{7}}{K_6}\mathrm{d}\btheta,
\end{align*}
where $$\Omega = \left\{\btheta\in\mathbb{R}^3: \|\btheta\|_2^2 \leq \ln \left(\frac{K_{7}}{K_6}\right) \frac{2}{\beta\lambda}\right\}.$$ Since $\mathrm{vol}(\Omega) < K_{8}$ for some $K_{8} \geq 0$ independent of $\tau$, we get that
$$
\int \max\{\rho^{*}_{\tau,m}(\btheta)\ln \rho^{*}_{\tau,m}(\btheta), 0\} \mathrm{d}\btheta \leq K_{8} \cdot \frac{K_{7}}{K_6} \cdot \ln\frac{K_{7}}{K_6},
$$
where the RHS is independent of $\tau$. As $M(\rho^{*}_{\tau,m})$ is uniformly bounded in $\tau$, we can invoke Lemma \ref{DP} to finish the proof of the second statement.

We now prove the third statement. By the triangle inequality, we have that, for all $i \in [M]$,
\begin{align*}
    &\lim_{\tau\rightarrow\infty}\left|\int a^{\tau,m} (w^{\tau,m}x_i + b)^{m}_{\tau} \rho^{*}_{\tau,m}(\btheta)\mathrm{d}\btheta - \int a^m (w^mx_i + b)^{m}_{+} \rho_{m}(\mathrm{d}\btheta)\right| \nonumber\\\leq 
    &{\lim_{\tau\rightarrow\infty}\left|\int a^{\tau,m} (w^{\tau,m}x_i + b)^{m}_{\tau} \rho^{*}_{\tau,m}(\btheta)\mathrm{d}\btheta - \int a^m (w^mx_i + b)^{m}_{+} \rho^{*}_{\tau,m}(\btheta)\mathrm{d}\btheta\right|} \\
    +&\lim_{\tau\rightarrow\infty}\left|\int a^m (w^mx_i + b)^{m}_{+} \rho^{*}_{\tau,m}(\btheta)\mathrm{d}\btheta - \int a^m (w^mx_i + b)^{m}_{+} \rho_{m}(\mathrm{d}\btheta)\right|:= A_1 + A_2.
\end{align*}
By upper bounding $\rho^{*}_{\tau,m}$ as in (\ref{Zmt_bound})-(\ref{eq:bd1}), we have 
$$
A_1 \leq K_{9} \lim_{\tau\rightarrow\infty} \int |a^{\tau,m} (w^{\tau,m}x_i + b)^{m}_{\tau} - a^m (w^mx_i + b)^{m}_{+}|
\exp\left\{- \frac{\beta\lambda}{2}\|\bm\theta\|_2^2\right\}\mathrm{d}\btheta,
$$
where $K_{9} > 0$ is independent of $\tau$. Thus, an application of the Dominated Convergence theorem gives that the term $A_1$ vanishes.
Furthermore, the term $A_2$ vanishes by weak convergence of $\rho^{*}_{\tau,m}$ to $\rho_m$. This proves that, as $\tau\to\infty$,  $y^{\sigma^{*}}_{\rho^{*}_{\tau,m}}(x_i)\rightarrow y^{\sigma^{*}}_{\rho_{m}}(x_i)$ and so $R^{\tau,m}_i(\rho^{*}_{\tau,m}) \rightarrow R^m_i(\rho_m)$.

Note that $-H(\cdot)$ and $M(\cdot)$ are lower-semicontinuous in ${\mathcal{K}}$. Furthermore, $M(\cdot)$ is lower bounded and $-H(\cdot)$ is lower bounded by Lemma 10.1 in \cite{mei2018mean} on the subsequence $\{\rho^{*}_{\tau,m}\}_{\tau}$, as $M(\rho^{*}_{\tau,m})$ is uniformly bounded in $\tau$. Hence, as $\rho^{*}_{\tau,m}$ converges weakly to $\rho_m\in\mathcal{K}$, we conclude that
$$
\liminf_{\tau\rightarrow\infty} -H(\rho^{*}_{\tau,m}) \geq -H(\rho_m),\quad \liminf_{\tau\rightarrow\infty} M(\rho^{*}_{\tau,m}) \geq M(\rho_m),
$$
which, combined with $R^{\tau,m}_i(\rho^{*}_{\tau,m}) \rightarrow R^m_i(\rho_m)$, implies the desired result.
\end{proof}

\begin{lemma}[Pointwise convergence of free-energies]\label{convenergy} Fix some distribution $\rho \in \mathcal{K}$, then we have the following pointwise convergence:
$$
\lim_{\tau\rightarrow\infty} \mathcal{F}^{\tau,m}(\rho) = \mathcal{F}^{m}(\rho).
$$
\end{lemma}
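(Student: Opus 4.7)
The second moment $M(\rho)$ and the differential entropy $H(\rho)$ are independent of $\tau$, so the only term in $\mathcal{F}^{\tau,m}(\rho)$ that depends on $\tau$ is the risk $R^{\tau,m}(\rho)$. Since $R^{\tau,m}(\rho) = M\sum_{i=1}^M (R^{\tau,m}_i(\rho))^2$ is a continuous function of the finitely many quantities $\int a^{\tau,m}(w^{\tau,m}x_i+b)^m_{\tau}\,\rho(\btheta)\mathrm{d}\btheta$, $i\in[M]$, it suffices to show that for each training point $x_i$,
$$
\lim_{\tau\to\infty}\int a^{\tau,m}(w^{\tau,m}x_i+b)^m_{\tau}\,\rho(\btheta)\mathrm{d}\btheta \;=\; \int a^{m}(w^{m}x_i+b)^m_{+}\,\rho(\btheta)\mathrm{d}\btheta.
$$

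The plan is to apply the Dominated Convergence theorem. Pointwise convergence of the integrand is immediate from the construction of the truncations in Section \ref{section:truncationact}: as $\tau\to\infty$, $a^{\tau,m}\to a^m$ and $w^{\tau,m}\to w^m$ for every $(a,w)\in\mathbb{R}^2$, and $(u)^m_{\tau}\to (u)^m_{+}$ for every $u\in\mathbb{R}$, and the outer continuous functions compose these convergences. For the dominating function, we use $|a^{\tau,m}|\leq m$, $|w^{\tau,m}|\leq |w|$, together with the sandwich bound \eqref{eq:sandwich} which yields $|(u)^m_{\tau}|\leq |u| + 1/\tau \leq |u|+1$ for $\tau\geq 1$. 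This gives
$$
\bigl|a^{\tau,m}(w^{\tau,m}x_i+b)^m_{\tau}\bigr| \;\leq\; m\bigl(|w|\,|x_i| + |b| + 1\bigr),
$$
which is $\rho$-integrable since $\rho\in\mathcal{K}$ has finite second moment and hence finite first moment.

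Putting these together, the Dominated Convergence theorem applies and yields the desired convergence of each integral. Since the number of training points $M$ is finite and squaring is continuous, $R^{\tau,m}(\rho)\to R^{m}(\rho)$, from which $\mathcal{F}^{\tau,m}(\rho)\to \mathcal{F}^{m}(\rho)$ follows. There is no substantial obstacle here; the only care needed is in identifying a dominating function that is independent of $\tau$, which is provided by the uniform truncation bounds and the sandwich inequality \eqref{eq:sandwich}.
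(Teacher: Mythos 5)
Your proof is correct and takes essentially the same route as the paper: isolate the $\tau$-dependence in $R^{\tau,m}(\rho)$, observe pointwise convergence of the integrand, and close with the Dominated Convergence theorem. The one difference is the choice of dominating function: the paper uses the uniform bound $|a^{\tau,m}|(w^{\tau,m}x_i+b)^m_\tau \le 2m^3$ (which follows from $|a^{\tau,m}|\le m$ and the truncation $\phi_{\tau,m}\le 2m^2$), making the dominating function a constant independent of $\btheta$, whereas you invoke the sandwich bound \eqref{eq:sandwich} to get $m(|w||x_i|+|b|+1)$ and then appeal to finiteness of the first moment of $\rho$ (guaranteed by $M(\rho)<\infty$ and Cauchy--Schwarz). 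Both dominations are legitimate; the paper's is slightly more economical since it sidesteps any moment assumption.
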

\begin{proof}[Proof of Lemma \ref{convenergy}] 
By construction, we have that $(x)_{\tau}^m$ converges to $(x)_{+}^m$, for all $x\in\mathbb{R}$. It is clear that 
$$|a^{\tau,m}|(w^{\tau,m}x+b)_{\tau}^m\rho(\btheta) \leq 2m^3\rho(\btheta),$$
and the RHS is integrable.
Thus, an application of the Dominated Convergence theorem gives that
$$
\lim_{\tau\rightarrow\infty} R^{\tau,m}(\rho) = R^{m}(\rho).
$$This concludes the proof since $M(\rho)$ and $H(\rho)$ are independent of  $\tau$.
\end{proof}

\begin{lemma}[Pointwise convergence of minimizers]\label{convmin} Assume that condition \textbf{A1} holds and consider any satisfactory $m$. Then, as $\tau\to \infty$, the minimizer $\rho^{*}_{\tau,m}$ of the free energy $\mathcal F^{\tau, m}$ converges pointwise in $\btheta$ to the minimizer $\rho^{*}_{m}$ of the free energy $\mathcal F^{m}$, i.e.,
$$
\lim_{\tau\rightarrow\infty}\rho^{*}_{\tau,m} (\btheta) = \rho^{*}_{m} (\btheta), \qquad \forall\, \btheta\in\mathbb R^3. 
$$
\end{lemma}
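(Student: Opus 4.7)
The plan is to first identify the weak subsequential limit $\rho_m$ from Lemma \ref{limitrho_mtau} with the unique free-energy minimizer $\rho^{*}_m$, and then upgrade weak convergence to pointwise convergence by exploiting the Gibbs form of both densities.

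\textbf{Step 1: Identifying the weak limit with $\rho^{*}_m$.} By Lemma \ref{limitrho_mtau}, there is a subsequence (not relabelled) along which $\rho^{*}_{\tau,m} \rightharpoonup \rho_m$ with $\rho_m\in\mathcal{K}$, and moreover $\liminf_{\tau\to\infty}\mathcal{F}^{\tau,m}(\rho^{*}_{\tau,m}) \ge \mathcal{F}^m(\rho_m)$. On the other hand, since $\rho^{*}_m\in\mathcal{K}$ (it has a Gibbs form and finite second moment by Lemma \ref{limitrho_mtau}), Lemma \ref{convenergy} gives $\mathcal{F}^{\tau,m}(\rho^{*}_m)\to\mathcal{F}^m(\rho^{*}_m)$. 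Combining with the minimizing property $\mathcal{F}^{\tau,m}(\rho^{*}_{\tau,m})\le \mathcal{F}^{\tau,m}(\rho^{*}_m)$, taking the limsup, and chaining with the liminf bound, we obtain
\begin{equation*}
\mathcal{F}^m(\rho_m) \le \liminf_{\tau\to\infty}\mathcal{F}^{\tau,m}(\rho^{*}_{\tau,m}) \le \limsup_{\tau\to\infty}\mathcal{F}^{\tau,m}(\rho^{*}_{\tau,m}) \le \mathcal{F}^m(\rho^{*}_m).
\end{equation*}
Since $\rho^{*}_m$ is the \emph{unique} minimizer of $\mathcal{F}^m$ over $\mathcal{K}$, this forces $\rho_m=\rho^{*}_m$. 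Since every subsequential weak limit is the same, the full sequence satisfies $\rho^{*}_{\tau,m}\rightharpoonup\rho^{*}_m$ as $\tau\to\infty$, and consequently $R^{\tau,m}_i(\rho^{*}_{\tau,m})\to R^m_i(\rho^{*}_m)$ for every $i\in[M]$ by the third item of Lemma \ref{limitrho_mtau}.

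\textbf{Step 2: Pointwise convergence of the exponents.} Write
\begin{equation*}
\rho^{*}_{\tau,m}(\btheta)= Z^{-1}_{\tau,m}(\beta,\lambda)\,e^{-\beta\Phi_{\tau,m}(\btheta)},\qquad \rho^{*}_m(\btheta)= Z^{-1}_m(\beta,\lambda)\,e^{-\beta\Phi_m(\btheta)},
\end{equation*}
with $\Phi_{\tau,m}(\btheta):=\sum_i R^{\tau,m}_i(\rho^{*}_{\tau,m})\,a^{\tau,m}(w^{\tau,m}x_i+b)^m_\tau+\tfrac{\lambda}{2}\|\btheta\|_2^2$ and the analogous $\Phi_m$. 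For each fixed $\btheta$, the truncation constructions give $a^{\tau,m}\to a^m$, $w^{\tau,m}\to w^m$, and $(u)^m_\tau\to (u)^m_+$ pointwise in $u$; combined with Step 1 this yields $\Phi_{\tau,m}(\btheta)\to\Phi_m(\btheta)$ for all $\btheta\in\mathbb{R}^3$.

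\textbf{Step 3: Convergence of the partition functions.} To conclude pointwise convergence of the densities it suffices to show $Z_{\tau,m}(\beta,\lambda)\to Z_m(\beta,\lambda)$. The integrands $e^{-\beta\Phi_{\tau,m}(\btheta)}$ are pointwise convergent by Step 2. Since $|R^{\tau,m}_i(\rho^{*}_{\tau,m})|$ is uniformly bounded in $\tau$ (the labels and activation $a^{\tau,m}(w^{\tau,m}x_i+b)^m_\tau$ are bounded by $m^3$), there exists $K>0$ independent of $\tau$ such that
\begin{equation*}
e^{-\beta\Phi_{\tau,m}(\btheta)} \le K\,e^{-\beta\lambda\|\btheta\|_2^2/2},
\end{equation*}
which is integrable. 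Dominated convergence then yields $Z_{\tau,m}(\beta,\lambda)\to Z_m(\beta,\lambda)$, and hence
\begin{equation*}
\rho^{*}_{\tau,m}(\btheta)=\frac{e^{-\beta\Phi_{\tau,m}(\btheta)}}{Z_{\tau,m}(\beta,\lambda)}\;\longrightarrow\;\frac{e^{-\beta\Phi_m(\btheta)}}{Z_m(\beta,\lambda)}=\rho^{*}_m(\btheta)
\end{equation*}
for every $\btheta\in\mathbb{R}^3$, as desired.

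The main difficulty lies in Step 1: one must upgrade the subsequential weak limit supplied by Lemma \ref{limitrho_mtau} to the true minimizer of $\mathcal{F}^m$. This is the Gamma-convergence-style sandwich above; once uniqueness of the $\mathcal{F}^m$-minimizer is invoked, the remaining steps are routine dominated-convergence arguments enabled by the a priori bounds already established on $R^{\tau,m}_i$ and $M(\rho^{*}_{\tau,m})$.
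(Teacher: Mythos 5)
Your proof is correct and follows essentially the same route as the paper's: identify the subsequential weak limit from Lemma \ref{limitrho_mtau} with $\rho^{*}_m$ via the $\Gamma$-convergence-style sandwich $\mathcal{F}^m(\rho_m) \le \liminf_{\tau}\mathcal{F}^{\tau,m}(\rho^{*}_{\tau,m}) \le \lim_{\tau}\mathcal{F}^{\tau,m}(\rho^{*}_m) = \mathcal{F}^m(\rho^{*}_m)$ plus uniqueness, deduce $R^{\tau,m}_i(\rho^{*}_{\tau,m})\to R^m_i(\rho^{*}_m)$, get pointwise convergence of the Gibbs exponents, and finish with dominated convergence for the partition functions. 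The only cosmetic difference is that you explicitly remark on passing from the subsequence to the full sequence (justified since every subsequential limit equals the unique minimizer), which the paper leaves implicit.
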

\begin{proof}[Proof of Lemma \ref{convmin}] From Lemma \ref{limitrho_mtau}, we have that there exists a subsequence $\{\rho^{*}_{\tau,m} \in {\mathcal{K}}\}$ and $\rho_m \in \mathcal{K}$ such that the following holds
\begin{equation}\label{convengseq}
    \liminf_{\tau\rightarrow\infty} \mathcal{F}^{\tau,m}(\rho^{*}_{\tau,m}) \geq \mathcal{F}^{m}(\rho_m).
\end{equation}
Since $\rho^{*}_{\tau,m} \in {\mathcal{K}}$ minimizes $\mathcal{F}^{\tau,m}$, we have
$$
\mathcal{F}^{\tau,m}(\rho^{*}_{\tau,m}) \le \mathcal{F}^{\tau,m}(\rho^{*}_{m}).
$$
By taking the liminf on both sides, using Lemma \ref{convenergy} and (\ref{convengseq}), we have
$$
\mathcal{F}^{m}(\rho_m) \leq \liminf_{\tau\rightarrow\infty} \mathcal{F}^{\tau,m}(\rho^{*}_{\tau,m})  \leq \liminf_{\tau\rightarrow\infty}\mathcal{F}^{\tau,m}(\rho^{*}_{m}) =  \mathcal{F}^{m}(\rho^{*}_{m}).
$$
Since $\rho^{*}_{m}$ is the unique minimizer of $\mathcal{F}^m$ (see Lemma 10.2 of \cite{mei2018mean}), $\rho^{*}_{m}$ and $\rho_{m}$ coincide almost everywhere, which implies that
$$
R^m_i(\rho_m) = R^m_i(\rho^{*}_m).
$$
Hence, by Lemma \ref{limitrho_mtau}, we have that
$$
\lim_{\tau\rightarrow\infty} R^{\tau,m}_i(\rho^{*}_{\tau,m}) = R^m_i(\rho^{*}_m).
$$
Recall that, by construction, for any parameter $v\in\mathbb R$, the $\tau$-smooth $m$-truncation $v^{\tau, m}$ converges to $v^m$ as $\tau\to\infty$. Furthermore, as $\tau\to\infty$, the smooth $m$-truncation $(\cdot)_\tau^m$ of the softplus activation converges pointwise to the smooth $m$-truncation $(\cdot)_+^m$ of the ReLU activation. Thus,
$$
\lim_{\tau\rightarrow\infty} \Psi_\tau(\btheta) = \lim_{\tau\rightarrow\infty}\sum\limits_{i=1}^M R^{\tau,m}_i(\rho_{\tau,m}^{*}) \cdot a^{\tau,m} (w^{\tau,m}x_i+b)_{\tau}^m = \sum\limits_{i=1}^M R^{m}_i(\rho_{m}^{*}) \cdot a^m (w^mx_i+b)_{+}^m = \Psi(\btheta),
$$
where the convergence is intended to be pointwise in $\btheta$. Note that $\Psi_{\tau}(\btheta)$ is uniformly bounded in $\tau$, hence
$$
\lim_{\tau\rightarrow\infty} \exp\left\{-\beta\Psi_{\tau}(\btheta) - \frac{\beta\lambda}{2}\|\btheta\|^2_2\right\} = \exp\left\{-\beta\Psi(\btheta) - \frac{\beta\lambda}{2}\|\btheta\|^2_2\right\},
$$
which implies that $Z_{\tau, m}\rho_{\tau,m}^{*}(\btheta)$ converges pointwise to  $Z_m\rho_{m}^{*}(\btheta)$. Furthermore, as $\tau\to\infty$, $Z_{\tau, m}$ converges to $Z_{m}$ by Dominated Convergence, which concludes the proof. 
\end{proof}

\subsection{Bounds on Risk of Minimizer, Second Moment and Partition Function}\label{appendix:risk}

\begin{lemma}[Bound on risk of the minimizer]\label{lambdarisk} Assume that condition \textbf{A1} holds. Then, 
$$
R^m(\rho^{*}_m) \leq C \lambda,
$$
where $C > 0$ is a constant independent of $(m,\beta,\lambda)$. In addition, for any $\varepsilon > 0$, there exists $\bar{\tau}(\varepsilon,m,\beta,\lambda)$ such that for any $\tau > \bar{\tau}(\varepsilon,m,\beta,\lambda)$ we have
$$
R^{\tau,m}(\rho^{*}_{\tau,m}) \leq C \lambda + \varepsilon.
$$
\end{lemma}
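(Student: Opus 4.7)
The strategy is to sandwich $\mathcal{F}^m(\rho^*_m)$ between a lower bound from Lemma 10.2 of \cite{mei2018mean} and an upper bound obtained by evaluating $\mathcal{F}^m$ at a data-adapted test distribution. The key ingredient is the construction of a test distribution that nearly interpolates the data while having controlled second moment and entropy, so that choosing its scale appropriately in $\lambda$ yields the advertised $O(\lambda)$ bound on the risk.

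First I would construct such a test distribution. Since the inputs $x_i$ are pairwise distinct, the piecewise linear interpolant of $\{(x_i,y_i)\}_{i=1}^M$ admits a representation $g(x)=\sum_{k=1}^{K} a_k(w_k x+b_k)_+$ with $K\le M+1$ and coefficients depending only on the data. Setting $\rho^{\mathrm{int}}=\frac{1}{K}\sum_k \delta_{(Ka_k,w_k,b_k)}$, one obtains $y^{\sigma^*}_{\rho^{\mathrm{int}}}(x_i)=y_i$ for every $i$, provided $m$ is large enough that neither the $v^m$ truncations on $(a,w)$ nor the $(\cdot)_+^m$ truncation activate on the support of $\rho^{\mathrm{int}}$ with inputs in $[-L,L]$; this is ensured by $m>C_2$ in Condition \textbf{A1}. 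To remove the singularity, I smooth by convolution with a small Gaussian, defining $\rho_\epsilon := \rho^{\mathrm{int}} * \mathcal{N}(0,\epsilon^2 I_3)$ for $\epsilon\in(0,1]$ to be chosen.

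Next I would estimate the three components of $\mathcal{F}^m(\rho_\epsilon)$. On the relevant bounded domain the activation reduces to $a(wx+b)_+$, which is Lipschitz in $\btheta$ with a constant depending only on the data; a direct calculation then gives $|y^{\sigma^*}_{\rho_\epsilon}(x_i)-y^{\sigma^*}_{\rho^{\mathrm{int}}}(x_i)|\le C_L\epsilon$, hence $R^m(\rho_\epsilon)\le C_L^2\epsilon^2$. The second moment satisfies $M(\rho_\epsilon)=M(\rho^{\mathrm{int}})+3\epsilon^2\le C$, and by concavity of the differential entropy, $H(\rho_\epsilon)\ge H(\mathcal{N}(0,\epsilon^2 I_3))=3\log\epsilon+\tfrac{3}{2}\log(2\pi e)$. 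Choosing $\epsilon=\sqrt{\lambda}$ yields $R^m(\rho_\epsilon)=O(\lambda)$, $\tfrac{\lambda}{2}M(\rho_\epsilon)=O(\lambda)$, and $-\beta^{-1}H(\rho_\epsilon)\le \beta^{-1}(\tfrac{3}{2}\log(1/\lambda)+O(1))$. Invoking Condition \textbf{A1}, which forces $\beta^{-1}\le \lambda/\log(1/\lambda)$, the entropy contribution is also $O(\lambda)$, so $\mathcal{F}^m(\rho_\epsilon)\le C\lambda$.

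Combining this with Lemma 10.2 of \cite{mei2018mean}, which under \textbf{A1} (where $\beta\lambda\ge 1$) implies $\mathcal{F}^m(\rho^*_m)\ge \tfrac{1}{2}R^m(\rho^*_m)-O(\lambda)$, the optimality $\mathcal{F}^m(\rho^*_m)\le \mathcal{F}^m(\rho_\epsilon)$ gives $R^m(\rho^*_m)\le C'\lambda$. The statement for $R^{\tau,m}(\rho^*_{\tau,m})$ then follows from Lemma \ref{limitrho_mtau}(3) together with Lemma \ref{convmin}: these jointly imply the pointwise convergence $R^{\tau,m}_i(\rho^*_{\tau,m})\to R^m_i(\rho^*_m)$ as $\tau\to\infty$, so $R^{\tau,m}(\rho^*_{\tau,m})\to R^m(\rho^*_m)\le C\lambda$, from which the additive $\varepsilon$ error holds for all $\tau$ sufficiently large in a way that depends on $(\varepsilon,m,\beta,\lambda)$. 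The main obstacle is the delicate tension between making the risk of the test distribution small (which demands small $\epsilon$) and keeping its entropy bounded (which forbids $\epsilon$ from being too small): the choice $\epsilon\sim\sqrt{\lambda}$ is forced, and the resulting logarithmic blow-up in the entropy contribution is precisely absorbed by the $\beta$-scaling imposed in \textbf{A1}.
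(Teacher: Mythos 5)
Your argument is correct in its overall structure and follows essentially the same strategy as the paper's proof: build an interpolating atomic weight measure, smooth it to obtain finite entropy, take the smoothing scale $\sim\sqrt{\lambda}$, bound the three terms of the free energy, and sandwich $\mathcal{F}^m(\rho^*_m)$ between optimality and the lower bound from Lemma~10.2 of \cite{mei2018mean}; the $\tau$ part via Lemma~\ref{convmin} is also the same.

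One step, though, is stated more casually than it can be defended. You convolve with $\mathcal{N}(0,\epsilon^2 I_3)$ in all three coordinates and then invoke Lipschitzness of $a(wx+b)_+$ ``in $\btheta$'' on a ``relevant bounded domain.'' But the Gaussian in $a$ has unbounded support, and $\btheta\mapsto a^m(w^mx+b)_+^m$ is not globally Lipschitz in $\btheta$ because of the bilinear products $aw$ and $ab$, so a Kantorovich--Rubinstein-style bound with a joint Lipschitz constant does not go through as written. The paper avoids exactly this by smoothing the $a$-coordinate with a \emph{compactly supported uniform} distribution (and only $(w,b)$ with Gaussians): the bounded support lets it replace $a^m$ by $a$ exactly, and the map $(w,b)\mapsto(w^mx+b)_+^m$ is then globally Lipschitz with constant $\max(|x|,1)$, which is what the $W_1$ estimate actually uses. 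Your Gaussian-in-$a$ variant can be repaired by arguing directly: by independence and centeredness of the $a$-perturbation, the $a$-smoothing contributes \emph{zero} error wherever $a^m=a$, and the tail event $|a|\ge m$ contributes only an exponentially small correction in $m$; the residual error then comes solely from the $(w,b)$-smoothing, where the genuine Lipschitz bound applies. As written, though, the Lipschitz appeal is a gap that needs to be filled along these lines.
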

\begin{proof}[Proof of Lemma \ref{lambdarisk}]
Consider a ``saw-tooth'' function centered at $x_i$ with height $y_i$ and width $\varepsilon > 0$, namely,
$$
\textrm{ST}_{x_i,y_i}(x) := \begin{cases}
0,& x < x_i - \varepsilon \text{ or } x > x_i + \varepsilon, \\
\frac{y_i}{\varepsilon} (x - x_i + \varepsilon), &  x_i - \varepsilon \leq x \leq x_i, \\
\frac{y_i}{\varepsilon} (x_i - x + \varepsilon), & x_i < x \leq x_i + \varepsilon,
\end{cases}
$$
Notice that this function can be implemented by the following $\hat{\rho}_i$:
$$
\hat{\rho}_i = \frac{1}{3} \left(\delta_{\left(\frac{3y_i}{\varepsilon},1,\varepsilon-x_i\right)} + \delta_{\left(-\frac{6y_i}{\varepsilon},1,-x_i\right)} + \delta_{\left(\frac{3y_i}{\varepsilon},1,-\varepsilon-x_i\right)}\right),
$$
in the sense that
$$
\textrm{ST}_{x_i,y_i}(x) =\int a(wx+b)_{+}\hat{\rho}_i(\mathrm{d}\btheta),
$$
where $\delta_\btheta$ stands for a delta distribution centered at the point $\btheta=(a, w, b)\in\mathbb{R}^3$. Let us pick $\varepsilon$ such that $\varepsilon < \min_{i\in[M-1]} \left\{|x_i-x_{i+1}|/{2}\right\}$. This condition on $\varepsilon$ guarantees that
$$
\left\{x\in\mathbb{R}: \int a(wx+b)_{+}\hat{\rho}_i(\mathrm{d}\btheta) \neq 0 \right\} \cap \left\{x\in\mathbb{R}: \int a(wx+b)_{+}\hat{\rho}_j(\mathrm{d}\btheta) \neq 0 \right\} = \emptyset, \ \forall i \neq j,
$$
which ensures that the ``saw-tooth'' functions are not intersecting. Define
\begin{align*}
\hat{\rho}=    \frac{1}{3M}\sum_{i=1}^M \left[\delta_{\left(\frac{3My_i}{\varepsilon},1,\varepsilon-x_i\right)} + \delta_{\left(-\frac{6My_i}{\varepsilon},1,-x_i\right)} + \delta_{\left(\frac{3My_i}{\varepsilon},1,-\varepsilon-x_i\right)}\right].
\end{align*}
Then, one immediately has that, for all $i\in [M]$,
$$
\int a(wx_i+b)_{+}\hat{\rho}(\mathrm{d}\btheta)=y_i.
$$
Furthermore, by taking a sufficiently large $m$, in particular, taking $m>\max_i\{6M|y_i|/\varepsilon\}+ 3|x_M|+ 3|x_1|+ 2$ suffices, we get that, for all $x\in [x_1, x_M]$,
$$
\int a^m(w^mx+b)^m_{+} \hat{\rho}(\mathrm{d}\btheta) = \int a(wx+b)_{+} \hat{\rho}(\mathrm{d}\btheta),
$$
which implies that $R^m(\hat{\rho})=0$.

Let $\mathcal{N}(\mu,\sigma^2)$ denote a Gaussian distribution with mean $\mu \in \mathbb{R}$ and variance $\sigma^2\in\mathbb{R}$, and let $U(\mu,\sigma^2)$ denote the uniform distribution with mean $\mu$ and variance $\sigma^2/12$. Given $(\mu_1, \mu_2, \mu_3)\in\mathbb{R}^3$ and $\sigma^2\in\mathbb{R}$, let $\rho_{((\mu_1, \mu_2,\mu_3),\sigma^2)}$ denote the following product distribution
\begin{align*}
    \rho_{((\mu_1, \mu_2, \mu_3),\sigma^2)} := U(\mu_1,\sigma^2)\times\mathcal{N}(\mu_2,\sigma^2)\times\mathcal{N}(\mu_3,\sigma^2),
\end{align*}
and define
\begin{align}\label{eq:rhotildedef}
    \tilde{\rho} &= \frac{1}{3M}\sum_{i=1}^M \left[\rho_{\left(\left(\frac{3My_i}{\varepsilon},1,\varepsilon-x_i\right),\sigma^2 \right)} + \rho_{\left(\left(-\frac{6My_i}{\varepsilon},1,-x_i\right),\sigma^2 \right)} + \rho_{\left(\left(\frac{3My_i}{\varepsilon},1,-\varepsilon-x_i\right),\sigma^2 \right)}\right].
\end{align}
Note that, for $\sigma^2 < 1$ and $m$ chosen sufficiently large as mentioned previously, 
$$
\int a^m(w^mx+b)^{m}_{+}\tilde{\rho}(\mathrm{d}\btheta)=\int a(w^mx+b)^{m}_{+}\tilde{\rho}(\mathrm{d}\btheta).
$$
Thus, by computing the integral w.r.t. $a$, we have that
\begin{align}
    &\int a^m(w^mx+b)^{m}_{+} \hat{\rho}(\mathrm{d}\btheta) - \int a^m(w^mx+b)^{m}_{+}\tilde{\rho}(\mathrm{d}\btheta) \nonumber\\ &= \sum_{i=1}^M\left[ \frac{y_i}{\varepsilon} \left(\int (w^mx+b)^m_{+} \delta_{\left(1,\varepsilon-x_i\right)}(\mathrm{d}w\,\mathrm{d}b) - \int (w^mx+b)^m_{+}\rho_{\left((1,\varepsilon-x_i), \sigma^2\right)}(\mathrm{d}w\,\mathrm{d}b)\right)\right] \nonumber\\
    &-\sum_{i=1}^M\left[ \frac{2y_i}{\varepsilon} \left(\int (w^mx+b)^m_{+} \delta_{\left(1,-x_i\right)}(\mathrm{d}w\,\mathrm{d}b) - \int (w^mx+b)^m_{+}\rho_{\left((1,-x_i), \sigma^2\right)}(\mathrm{d}w\,\mathrm{d}b)\right)
    \right]\nonumber\\
    &+\sum_{i=1}^M\left[ \frac{y_i}{\varepsilon} \left(\int (w^mx+b)^m_{+} \delta_{\left(1,-\varepsilon-x_i\right)}(\mathrm{d}w\,\mathrm{d}b) - \int (w^mx+b)^m_{+}\rho_{\left((1,-\varepsilon-x_i), \sigma^2\right)}(\mathrm{d}w\,\mathrm{d}b)\right)
    \right],\label{1.8.1}
\end{align}
where, with an abuse of notation, we denote by $\rho_{((\mu_2, \mu_3), \sigma^2)}$ the marginal of $\rho_{((\mu_1, \mu_2, \mu_3), \sigma^2)}$ with respect to the last two components. 
By applying to Kantorovich-Rubinstein theorem (see, for instance, \cite{villani2009optimal}), we have that
\begin{equation}
    K\cdot W_1(p,q) = \sup_{\|f\|_{\textrm{Lip}} \leq K} |\mathbb{E}_{x\sim p}f(x) - \mathbb{E}_{y\sim q}f(y)|,\label{KR}
\end{equation}
for two densities $p$ and $q$, where $W_1$ is the 1-Wasserstein distance and $\|f\|_{\textrm{Lip}} $ denotes the Lipschitz constant of $f$. Notice that $(w^mx+b)^m_{+}$ is Lipschitz in $(w,b)$ with Lipschitz constant upper bounded by $\max(|x|,1)$. Hence, combining (\ref{1.8.1}) and (\ref{KR}), we have that
\begin{equation}
\begin{split}
   & \left(\int a^m(w^mx+b)^{m}_{+} \hat{\rho}(\mathrm{d}\btheta) - \int a^m(w^mx+b)^{m}_{+}\tilde{\rho}(\mathrm{d}\btheta)\right)^2 \\
    &\hspace{4em} \leq K_1 \bigg(\sum_{i=1}^M  W_1(\delta_{\left(1,\varepsilon-x_i\right)},\rho_{\left((1,\varepsilon-x_i), \sigma^2\right)})
    + W_1(\delta_{\left(1,-x_i\right)},\rho_{\left((1,-x_i), \sigma^2\right)})\\
    &\hspace{21em}+W_1(\delta_{\left(1,-\varepsilon-x_i\right)},\rho_{\left((1,-\varepsilon-x_i), \sigma^2\right)})\bigg)^2,\label{1.8.2}
\end{split}
\end{equation}
where $K_1>0$ is a constant independent of $m$.
Recalling the form of the 2-Wasserstein distance between a delta and a Gaussian
 distribution, we have that
\begin{equation}
    \label{eq:W2bd}
W^2_2(\delta_{(w, b)},\rho_{((w, b), \sigma^2)}) \leq K_2 \sigma^2,
\end{equation}
for some constant $K_2>0$. As the $W_1$ distance is upper bounded by the $W_2$ distance (via H\"older's inequality), by combining (\ref{1.8.2}) and (\ref{eq:W2bd}), we conclude that
$$
\left(\int a^m(w^mx+b)^{m}_{+} \hat{\rho}(\mathrm{d}\btheta) - \int a^m(w^mx+b)^{m}_{+}\tilde{\rho}(\mathrm{d}\btheta)\right)^2 \leq K_3 \sigma^2,
$$
where $K_3>0$ is a constant independent of $m$.
Hence, by taking $\sigma^2 = \min(\lambda, 1/2)$, we have
\begin{equation*}
R^{m}(\tilde{\rho}) \leq K_4 \lambda,
\end{equation*}
where $K_4>0$ is a constant independent of $m$. 

Now recall that the differential entropy is a concave function of the distribution. Hence, by using the fact that $\rho_{((\mu_1, \mu_2,\mu_3),\sigma^2)}$ is a product distribution and by explicitly computing the entropy of a Gaussian and a uniform random variable, we conclude that
\begin{equation*}
H(\tilde{\rho}) \geq  K_5 (-1 + \log \lambda),
\end{equation*}
where $K_5>0$ is a constant independent of $m$. As $M(\tilde{\rho})$ is upper bounded by a constant independent of $m$, we conclude that
\begin{equation}\label{eq:Hcandidate}
\mathcal F^m(\tilde{\rho})\le K_6 \lambda +\frac{K_5}{\beta} (1 - \log \lambda),
\end{equation}
with $K_6>0$ independent of $m$.
Hence, since $\rho^{*}_m$ is the minimizer of the free energy, by using the bound from Lemma 10.2 in \cite{mei2018mean}, we get that
\begin{equation}\label{eq:lastbd}
\frac{1}{2}R^{m}(\rho^{*}_m) \leq K_6 \lambda + \frac{K_5}{\beta} \left(1 - \log \lambda\right) + \frac{1}{\beta}\left[1 + 3\log \frac{8\pi}{\beta\lambda}\right].
\end{equation}
Since $\beta > -\frac{1}{\lambda} \log \lambda$ and $\beta\lambda > 1$, (\ref{eq:lastbd}) implies that  
\begin{equation}\label{eq:bdR}
    R^{m}(\rho^{*}_m) \leq K_7\lambda,
\end{equation}
for $K_7>0$ independent of $(m,\beta,\lambda)$. This finishes the proof of the first part of the statement. The second part of the statement follows by combining (\ref{eq:bdR}) with Lemma \ref{convmin}.
\end{proof}

\begin{lemma}[Second moment is uniformly bounded]\label{unifboundM_rhomtau} Assume that condition \textbf{A1} holds. It holds that there exists $\tau(m,\beta,\lambda)$ such that for any $\tau > \tau(m,\beta,\lambda)$ the following upper bound holds:$$M(\rho^{*}_{\tau,m}) \leq C,$$
for some $C>0$ that is independent of $(\tau,m,\beta,\lambda)$.
\end{lemma}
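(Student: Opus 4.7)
The plan is to mimic the construction used in the proof of Lemma~\ref{lambdarisk}, which exhibits an explicit candidate distribution $\tilde{\rho}$ (see \eqref{eq:rhotildedef}) whose free energy is small \emph{uniformly} in $m$. Concretely, $\tilde{\rho}$ is a mixture of product distributions centered at points whose coordinates depend only on the training data and on $\varepsilon=\min(\lambda,1/2)$, and it satisfies three crucial properties: $(i)$ $M(\tilde{\rho})\le K_0$ with $K_0>0$ independent of $(m,\beta,\lambda)$, $(ii)$ $R^m(\tilde{\rho})\le K_1\lambda$ by the argument in \eqref{1.8.2}--\eqref{eq:Hcandidate}, and $(iii)$ $H(\tilde{\rho})\ge K_2(-1+\log\lambda)$, where $K_1,K_2>0$ are independent of $(m,\beta,\lambda)$.

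The first step would be to transfer the risk bound from the ReLU truncation $(\cdot)^m_+$ to the softplus truncation $(\cdot)^m_\tau$. Since $(x)^m_\tau\to(x)^m_+$ pointwise as $\tau\to\infty$, and since the sandwich bound \eqref{eq:sandwich} combined with $M(\tilde{\rho})\le K_0$ provides an integrable dominating function, an application of Dominated Convergence (identical to the one in Lemma~\ref{convenergy}) yields $R^{\tau,m}(\tilde{\rho})\to R^m(\tilde{\rho})$ as $\tau\to\infty$. Hence, there exists $\tau(m,\beta,\lambda)$ such that for every $\tau>\tau(m,\beta,\lambda)$,
\begin{equation*}
R^{\tau,m}(\tilde{\rho})\le R^m(\tilde{\rho})+\lambda\le (K_1+1)\lambda.
\end{equation*}
Note that the convergence rate is permitted to depend on $(m,\beta,\lambda)$, which is why the threshold $\tau(m,\beta,\lambda)$ in the statement of the lemma must be allowed to depend on these parameters.

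The second step is to combine this with the optimality of $\rho^{*}_{\tau,m}$ and the standard lower bound for the free energy (Lemma~10.2 in \cite{mei2018mean}):
\begin{equation*}
\tfrac{\lambda}{4}M(\rho^{*}_{\tau,m})\le \mathcal{F}^{\tau,m}(\rho^{*}_{\tau,m})+\tfrac{1}{\beta}\bigl[1+3\log(8\pi/(\beta\lambda))\bigr]\le \mathcal{F}^{\tau,m}(\tilde{\rho})+\tfrac{1}{\beta}\bigl[1+3\log(8\pi/(\beta\lambda))\bigr].
\end{equation*}
Using properties $(i)$--$(iii)$ and the risk transfer, the RHS is at most
\begin{equation*}
\tfrac{K_1+1}{2}\lambda+\tfrac{\lambda K_0}{2}-\tfrac{K_2}{\beta}(-1+\log\lambda)+\tfrac{1}{\beta}\bigl[1+3\log(8\pi/(\beta\lambda))\bigr].
\end{equation*}
Dividing by $\lambda/4$ and invoking condition \textbf{A1} (which ensures $\beta\lambda\ge 1$ and $\beta\lambda\ge\log(1/\lambda)$), each of the resulting terms $1/(\beta\lambda)$, $\log(1/\lambda)/(\beta\lambda)$, and $\log(\beta\lambda)/(\beta\lambda)$ is uniformly bounded, giving $M(\rho^{*}_{\tau,m})\le C$ for some $C>0$ independent of $(\tau,m,\beta,\lambda)$.

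The main subtlety is checking that the dependence on $\lambda$ in the various error terms is harmless after multiplying by $4/\lambda$; in particular, the $\tfrac{1}{\beta}\log(1/\lambda)$ contribution from the entropy lower bound becomes $\tfrac{1}{\beta\lambda}\log(1/\lambda)$, which is bounded thanks to the precise lower bound $\beta\ge \lambda^{-1}\log(1/\lambda)$ in \textbf{A1}. The same assumption likewise ensures that the $\log(8\pi/(\beta\lambda))$ term stays $\mathcal{O}(1)$ after normalization. No other ingredients are needed beyond results already established in the paper.
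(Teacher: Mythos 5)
Your proof is correct and follows essentially the same route as the paper: using the explicit candidate $\tilde{\rho}$ from Lemma~\ref{lambdarisk} to bound $\mathcal{F}^{\tau,m}(\tilde{\rho})$ uniformly in $m$ (transferring the $m$-truncated ReLU risk bound to the $\tau$-smoothed one via Dominated Convergence, which is exactly the content of Lemma~\ref{convenergy}), then combining with the lower bound of Lemma~10.2 in \cite{mei2018mean} and dividing by $\lambda/4$. The only slip is cosmetic: the centers of the components of $\tilde{\rho}$ depend on the fixed spatial constant $\varepsilon$, not on $\min(\lambda,1/2)$, which is the variance parameter $\sigma^2$; this does not affect your argument.
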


\begin{proof}[Proof of Lemma \ref{unifboundM_rhomtau}] Let $\tilde{\rho}$ be defined as in \eqref{eq:rhotildedef}. Then, by combining \eqref{eq:Hcandidate} with Lemma \ref{convenergy}, we have that, for  $\tau > \tau(m,\beta,\lambda)$,
$$
\mathcal F^{\tau, m}(\tilde{\rho})\le K_1 \lambda+\frac{K_2}{\beta}(1-\log\lambda),
$$
where $K_1, K_2>0$ are independent of $m$.
Hence, by using \eqref{1.4.1} with $\tilde{\rho}$ in place of $\rho$ and by recalling that $R^{\tau, m}(\rho_{\tau, m}^*)\ge 0$ and the existence of constants $C_1$ and $C_2$ such that $\beta > C_1$ and $\lambda < C_2$, the result readily follows. 
\end{proof}

We conclude this part of the appendix by providing the proof of Lemma \ref{finiteZ}.

\begin{proof}[Proof of Lemma \ref{finiteZ}]
Consider the following lower bound
\begin{equation}\label{eq:lbZm}
    \begin{split}
    Z_m(\lambda,\beta) &\geq \int \exp\left\{-\beta\left[\sum\limits_{i=1}^M |R_i^m(\rho_m^*)| \cdot |a^m| (w^mx_i+b)_{+}^m + \frac{\lambda}{2}\|\bm\theta\|_2^2\right]\right\}\mathrm{d}\bm\theta  \\
    &\geq \int \exp\left\{-\beta\left[\sum\limits_{i=1}^M |R_i^m(\rho_m^*)| \cdot |a| (w^mx_i+b)_{+} + \frac{\lambda}{2}\|\bm\theta\|_2^2\right]\right\}\mathrm{d}\bm\theta  \\
    &\geq \int \exp\left\{-\beta\left[\sum\limits_{i=1}^M |x_iR_i^m(\rho_m^*)| \cdot |aw| + \sum\limits_{i=1}^M |R_i^m(\rho_m^*)| \cdot |ab| +  \frac{\lambda}{2}\|\bm\theta\|_2^2\right]\right\}\mathrm{d}\bm\theta.
\end{split}
\end{equation}
Define $A = \sum_{i=1}^M |x_iR_i^m(\rho_m^*)|$ and $B=\sum_{i=1}^M |R_i^m(\rho_m^*)|$. By Lemma \ref{lambdarisk}, $|R_i^m(\rho_m^*)| \leq K_1 \sqrt{\lambda}$, where $K_1>0$ is independent of $(m,\beta,\lambda,i)$. Therefore, $|A|,|B| \leq K_2 \sqrt{\lambda}$ for some $K_2>0$ independent of $(m,\beta,\lambda)$.
Using the inequalities $2|aw| \leq a^2 + w^2$ and $2|ab| \leq a^2 + b^2$, the RHS of (\ref{eq:lbZm}) can be lower bounded by 
$$
\int\limits_{\mathbb{R}^3} \exp\left\{ -\frac{\beta}{2}
\left[ (2K_2\sqrt{\lambda} + \lambda)\cdot a^2 + (K_2\sqrt{\lambda} + \lambda)\cdot w^2 + (K_2\sqrt{\lambda} + \lambda)\cdot b^2
\right]\right\}d\bm\theta.
$$
By explicitly computing the integral above, the desired result immediately follows.
\end{proof}

\subsection{Lower Bound on Polynomials}\label{appendix:lbpoly}

\begin{proof}[Proof of Lemma \ref{welldefquadgeneric}]
We start by rewriting $P_2$ as
\begin{align}\label{eq:decomposefj}
    P_2(x) & = (1-a^2) \cdot (x-x_c)^2 + \left[2(1-a^2)x_c + 2ab\right] \cdot (x-x_c)\nonumber\\
    &\hspace{9.5em}+(1-a^2)x_c^2 + 2ab x_c + 1-b^2 \nonumber\\
    &= \frac{1}{2}P_2''(x_c)\cdot (x-x_c)^2 +P_2'(x_c)\cdot (x-x_c)+P_2(x_c).
\end{align}
By definition of $x_c$, one can immediately verify that $P_2(x_c)\geq 0$. 
Notice that, if $P_2''(x_c)$ is close to $0$, then $a^2$ is close to $1$, which implies that (since $x_c\in I$ and, thus, bounded in absolute value) $|P_2'(x_c)|$ is close to $2|b|$  and $P_2(x_c)$ is close to $\textrm{sign}(a) \cdot 2bx_c + 1-b^2$. Therefore, at least one of the coefficients $P_2(x_c), |P_2'(x_c)|, |P''_2(x_c)|$ is lower bounded by a constant that is independent of $(a,b)$.

Next, we distinguish two cases depending on the sign of $P_2''(x_c)$. First, assume that $P_2''(x_c)\ge 0$. We now show that $P'_2(x_c)\cdot (x-x_c)\geq 0$. 

In case of a degenerate polynomial, i.e., $P''_2(x_c) = 0$, we distinguish two sub-cases: either $P'_2(x_c) > 0$ or $P_2'(x_c) < 0$ holds. (The case corresponding to $P'_2(x_c)=0$ is trivial.)  If $P'_2(x_c) >0$, then by definition of $x_c$ and the fact that $x\in\Omega_{+}$ by assumption, we have that $(x-x_c)>0$. In fact, recalling the definition of $x_r$ in Definition~\ref{def:critical}, as $\Omega_{+}$ has non-zero Lebesgue measure, $x_c$ is either the left extreme of $I$ (i.e., $x_c=\inf\limits_{\tilde x \in I}  \tilde x$) or $x_c=x_r\in I$ (i.e., in the interior) and, hence, $\Omega_{+} = (x_r,I_r]$ with $x_r < I_r$. This gives that $P_2'(x_c)(x-x_c)\geq 0$. The case $P_2'(x_c) < 0$ follows from similar arguments.

Now assume that $P_2''(x_c) > 0$ and let $x_{\textrm{min}}$ be the minimizer of $P_2$ on the interval $I$. If $x \geq x_{\textrm{min}}$ then, by definition of a critical point, $x_c \geq x_{\textrm{min}}$ which means that $x_c$ is located on the \textit{right} branch of the parabola and, hence, $P_2'(x_c) \geq 0$. Furthermore, $x$ belongs to the interval $[x_c, I_r]$ by definition of $x_c$. These facts imply that $P_2'(x_c)\cdot (x-x_c) \geq 0$. The case $x < x_{\textrm{min}}$ is treated in a similar fashion.

As it was shown, at least one of the coefficients $P_2(x_c), |P_2'(x_c)|, P''_2(x_c)$ is lower bounded by a constant that is independent of $(a,b)$, and
$P'_2(x_c)(x-x_c)\geq 0$, hence, choosing
$$
\alpha_2 = P''_2(x_c), \quad \alpha_1 = |P'_2(x_c)|, \quad \alpha_0 = P_2(x_c),
$$
concludes the proof for the case of non-negative curvature.

Assume now that $P_2''(x_c) < 0$. 
As $|\Omega_{+}|$ is lower bounded by a strictly positive constant, we can pick $\tilde{x}\in \Omega_{+}$ such that $|\tilde{x}-x_c| = C$, for some $C>0$ which is independent of $(a,b)$. As $\tilde{x}\in \Omega_{+}$, we have that $P_2(\tilde{x}) \geq 0$. Furthermore, by rewriting $P_2(\tilde{x})$ as in (\ref{eq:decomposefj}), we obtain that
$$
\frac{1}{2}P_2''(x_c)\cdot (\tilde{x}-x_c)^2 +P_2'(x_c)\cdot (\tilde{x}-x_c)+P_2(x_c)\ge 0,
$$
which implies that
\begin{equation}\label{eq:ineq}
|P_2'(x_c)||\tilde{x}-x_c| + P_2(x_c)\ge - \frac{1}{2}P_2''(x_c)(\tilde{x}-x_c)^2.
\end{equation}
As $|\tilde{x}-x_c| = C$, (\ref{eq:ineq}) is equivalent to 
\begin{equation}\label{eq:ineq2}
|P_2'(x_c)|\cdot C + P_2(x_c)\ge -\frac{1}{2} P_2''(x_c) \cdot C^2.
\end{equation}
Now, if both $|P_2'(x_c)|$ and $P_2(x_c)$ are close to $0$, then (\ref{eq:ineq2}) immediately implies that $P_2''(x_c)$ is also close to $0$. However, following the argument above, it is not possible that $-P_2''(x_c)$, $|P_2'(x_c)|$ and $P_2(x_c)$ are simultaneously close to $0$. This proves that $\max(|P_2'(x_c)|,P_2(x_c))$ is lower bounded by a constant that is independent of $(a,b)$.

Let $x_{\textrm{max}}$ be the maximizer of $P_2$ and, without loss of generality, assume that $x_c<x_{\textrm{max}}$ (the case $x_c\ge x_{\textrm{max}}$ is handled in a similar way). Note that, by definition of $x_c$, the point $x$ lies in the interval $[x_c, x_{\textrm{max}}]$. To show this, let us assume the contrary, i.e., $x > x_{\max}$ (the case $x<x_c<x_{\textrm{max}}$ is ruled out by the assumption that $x\in\Omega_+$). Then, the root of $P_2$ which is the closest in Euclidean distance to $x$ is located to the right of $x_{\textrm{max}}$, hence $x_c<x_{\textrm{max}}$ cannot be a critical point for $x$, which leads to a contradiction. This proves that $x$ lies in the interval $[x_c, x_{\textrm{max}}]$ and in particular, $x\leq x_{\textrm{max}}$. Furthermore, by concavity, the parabola $P_2(\tilde{x})$ is lower bounded by the line that connects $(x_c,P_2(x_c))$ and $(x_{\textrm{max}}, P_2(x_{\textrm{max}}))$ for  $\tilde{x} \in [x_c,x_{\textrm{max}}]$. By the focal property of the parabola, this line has angular coefficient $|P_2'(x_c)|/2$. Therefore,
$$
P_2(\tilde{x})\ge (\tilde{x}-x_c)\cdot|P_2'(x_c)|/2 + P_2(x_c), \quad \tilde{x}\in[x_c,x_{\textrm{max}}].
$$
Picking $\tilde{x} = x$ and
$$
\alpha_2 = 0,\quad \alpha_1 = |P_2'(x_c)|/2, \quad \alpha_0 = P_2(x_c),
$$
gives the desired result in the case $P_2''(x_c) < 0$ and concludes the proof.
\end{proof}

\end{document}